\newcommand{\cL}{\mathcal{L}}
\newcommand{\af}{c}
\newcommand{\cO}{\mathcal{O}}
\newcommand{\E}{\mathbb{E}}
\newtheorem{theorem}{Theorem}
\newtheorem{lemma}{Lemma}
\Crefname{equation}{}{}
\title{Improving Monte Carlo Tree Search for Symbolic Regression}
\author{
    Zhengyao Huang\thanks{Center for Machine Learning Research, Peking University, Beijing, China.} \\
    \texttt{2301213083@stu.pku.edu.cn}
    \and
    \textbf{Daniel Zhengyu Huang}\thanks{Corresponding author; Beijing International Center for Mathematical Research, Center for Machine Learning Research, Peking University, Beijing, China.} \\
    \texttt{huangdz@bicmr.pku.edu.cn}
    \and
    \textbf{Tiannan Xiao}\thanks{Huawei Technologies Ltd., Beijing, China.} \\
    \texttt{alxeuxiao@pku.edu.cn}
    \and
    \textbf{Dina Ma}\footnotemark[3] \\
    \texttt{madina@huawei.com}
    \and
    \textbf{Zhenyu Ming}\footnotemark[3] \\
    \texttt{mingzhenyu1@huawei.com}
    \and
    \textbf{Hao Shi}\thanks{Department of Mathematical Sciences, Tsinghua University, Beijing, China.} \\
    \texttt{shih22@mails.tsinghua.edu.cn}
    \and
    \textbf{Yuanhui Wen}\footnotemark[3] \\
    \texttt{wenyuanhui@huawei.com}
}
\begin{document}

\maketitle

\begin{abstract}
Symbolic regression aims to discover concise, interpretable mathematical expressions that satisfy desired objectives, such as fitting data, posing a highly combinatorial optimization problem.
While genetic programming has been the dominant approach, recent efforts have explored reinforcement learning methods for improving search efficiency. 
Monte Carlo Tree Search (MCTS), with its ability to balance exploration and exploitation through guided search, has emerged as a promising technique for symbolic expression discovery. 
However, its traditional bandit strategies and sequential symbol construction often limit performance.
In this work, we propose an improved MCTS framework for symbolic regression that addresses these limitations through two key innovations: (1) an extreme bandit allocation strategy tailored for identifying globally optimal expressions, with finite-time performance guarantees under polynomial reward decay assumptions; and (2) evolution-inspired state-jumping actions such as mutation and crossover, which enable non-local transitions to promising regions of the search space. These state-jumping actions also reshape the reward landscape during the search process, improving both robustness and efficiency.
We conduct a thorough numerical study to the impact of these improvements and benchmark our approach against existing symbolic regression methods on a variety of datasets, including both ground-truth and black-box datasets. 
Our approach achieves competitive performance with state-of-the-art libraries in terms of recovery rate, attains favorable positions on the Pareto frontier of accuracy versus model complexity. Code is available at \url{https://github.com/PKU-CMEGroup/MCTS-4-SR}.

\end{abstract}

\section{Introduction}
\label{sec:intro}

Symbolic regression (SR) is a machine learning methodology that aims to discover interpretable and concise analytical expressions to represent \textbf{data-driven relationships} or \textbf{desired functional objectives}, without assuming any predefined mathematical form. Unlike traditional regression techniques, which optimize parameters within fixed model structures, symbolic regression simultaneously searches for both the functional form and its parameters by exploring a combinatorial space. The resulting expressions are often simple and human-readable, which has supported their utility in scientific discovery \cite{schmelzer2020discovery,lemos2023rediscovering}. In many cases, SR-derived models align with domain-specific laws, enabling their application across domains such as finance \cite{chen2012genetic}, materials science \cite{wang2019symbolic}, climatology \cite{grundner2024data}, and healthcare \cite{wilstrup2022combining}. Additionally, the combination of computational efficiency and model simplicity makes SR well-suited for time-sensitive scenarios like real-time control systems \cite{kaiser2018sparse}, where rapid execution and operational transparency are essential.

Symbolic regression is a combinatorial optimization problem, as expressions can be represented as binary trees or symbolic sequences (see \Cref{ssec:symbolic_regression}). Finding an optimal expression entails searching both the tree structure and its parameters—proven to be an NP-hard task \cite{virgolin2022symbolic}. To address this, numerous heuristic and approximate methods have been proposed and benchmarked \cite{white2013better,la2021contemporary}. Among these, \textbf{genetic programming} (GP) remains the most widely adopted approach: GP evolves a population of candidate expressions via mutation, crossover, and selection to optimize data fit \cite{koza1994genetic,schmidt2009distilling,kommenda2020parameter,virgolin2021improving,cranmer2023interpretable}.  However, GP frequently produces overly complex formulas and exhibits high sensitivity to hyperparameter settings \cite{petersen2019deep}. More recently, learning-based methods have been proposed to exploit structural regularities—such as symmetry, separability, and compositionality—in symbolic expressions\cite{udrescu2020ai}. \textbf{Deep neural networks} can be trained to generate candidate formulas directly \cite{biggio2021neural,kamienny2022end,valipour2021symbolicgpt}, but they often excel only at data-fitting tasks and struggle to generalize beyond the training distribution.  An alternative is to cast SR as a sequential decision-making task and apply \textbf{reinforcement learning} (RL).  Inspired by breakthroughs in games like Go \cite{silver2016mastering,silver2017mastering}, RL-based SR leverages policy-gradient or recurrent-network agents to construct expressions token by token \cite{petersen2019deep,mundhenk2021symbolic,landajuela2022unified}.  Likewise, \textbf{Monte Carlo Tree Search} (MCTS) has been adapted to navigate the space of symbolic programs more effectively \cite{kamienny2023deep,lu2021incorporating,sun2022symbolic,shojaee2023transformer,xu2024reinforcement}.

While symbolic regression can be reformulated as an RL problem involving the determination of an optimal sequence construction policy, it differs from conventional RL in two critical ways: (1) the objective is to identify the globally optimal state (i.e., the highest reward) rather than to maximize the expected cumulative rewards, and (2) the action space is not fixed but can be designed to enable flexible state-jumping mechanisms, bypassing traditional stepwise transitions.
The goal of this work is to analyze and enhance the application of MCTS within the framework of symbolic regression. Our key \textbf{contributions} include:
\begin{itemize}
  \item Formulating an extreme-bandit allocation strategy, analogous to the Upper Confidence Bound (UCB) algorithm, and derive optimality guarantees under certain reward assumptions.
  \item Introducing evolution-inspired state-jumping actions (e.g., mutation and crossover, as used in genetic programming) to shift the reward distribution in MCTS towards higher values, improving both exploration and exploitation.
  \item Developing a refined MCTS framework for SR that (1) attains competitive performance against state-of-the-art libraries and (2) yields new insights into applying RL methods to combinatorial optimization.
\end{itemize}

\subsection{Related Work}

\textbf{Monte Carlo Tree Search for Symbolic Regression.} 
Inspired by the success of AlphaGo~\cite{silver2016mastering, silver2017mastering},  MCTS has been explored in symbolic regression. An early effort~\cite{lu2021incorporating} employs a pretrained actor-critic model for the value function within the UCB scheme. 
Later, \cite{sun2022symbolic} improves MCTS with module transplantation, which manually adds symbolic sub-expressions (e.g., $x^4$) from high-reward states in the leaf action space. 
Another work \cite{shojaee2023transformer} refines the UCB scheme by incorporating transformer-guided action probability assessment following AlphaGo-inspired UCB scheme \cite{silver2016mastering, silver2017mastering}.  
More recent work~\cite{xu2024reinforcement} introduces a hybrid approach that alternates between genetic algorithms and MCTS, where MCTS results are used to initialize GP, and GP's results are leveraged to train a double Q-learning block within MCTS.
A common trait across these methods~\cite{sun2022symbolic, shojaee2023transformer, xu2024reinforcement} is the emphasis on exploiting high-reward actions: \cite{sun2022symbolic} and \cite{xu2024reinforcement} adopt an $\epsilon$-greedy selection strategy, whereas \cite{shojaee2023transformer} modifies the AlphaGo-inspired UCB formula by replacing the average with the maximum observed reward.

\textbf{Bandit‐Based Allocation Strategies.}  
The Upper Confidence Bound (UCB), originally proposed for multi‐armed bandits with uniform logarithmic regret \cite{auer2002finite}, balances exploration and exploitation in MCTS \cite{coulom2006efficient}. Integrating UCB into MCTS improves planning performance \cite{kocsis2006bandit}. However, whereas UCB minimizes average regret, tasks like symbolic regression seek the single best outcome. To address this, researchers have studied the extreme‐bandit setting \cite{cicirello2005max,streeter2006simple,carpentier2014extreme,kaufmann2017monte, hu2021cascaded} and applied it to combinatorial challenges such as Weighted Tardiness Scheduling. Here, we propose an extreme‐bandit allocation rule with optimal guarantees under certain reward distribution assumptions, and embed it within MCTS for symbolic regression.

\textbf{Hybrid Methods for Symbolic Regression.}
Symbolic regression is an NP-hard problem~\cite{virgolin2022symbolic}, often tackled using heuristic methods like GP and RL, which aim to efficiently explore the expression space under limited computational budgets. GP evolves a population of candidate solutions through biologically inspired operations such as mutation and crossover~\cite{koza1994genetic}. This process can be modeled as a finite-state Markov process, where convergence relies on ergodicity~\cite{rudolph1994convergence,langdon2013foundations}. However, the effectiveness of GP depends on a careful balance between exploration and exploitation, which in turn is sensitive to hyperparameter tuning.
RL-based approaches, such as MCTS and policy gradients~\cite{williams1992simple}, offer an alternative by learning policies to construct expressions symbol by symbol. Notably, \cite{petersen2019deep,li2024neural} apply risk-aware policy gradient methods~\cite{tamar2015policy,rajeswaran2016epopt} using recurrent neural networks to guide expression generation. These methods tend to strike a better exploration-exploitation balance, though they may suffer from inefficiency due to their stepwise construction of expressions.
To overcome these limitations, recent works~\cite{mundhenk2021symbolic,xu2024reinforcement,landajuela2022unified} have proposed hybrid methods that combine RL and GP in a modular fashion, alternating between them to enhance overall performance. Building on this idea, the present work incorporates biologically inspired state-jumping operations—namely mutation and crossover—into MCTS. This hybrid strategy not only improves exploration efficiency but also aims to enhance the interpretability and effectiveness of the search process.

\section{Background}
\subsection{Symbolic Regression}
\label{ssec:symbolic_regression}

Symbolic regression is a combinatorial optimization problem that aims to find an expression $f$ minimizing a nonnegative objective functional:
\begin{equation}
\label{eq:objective-functional}
    \min_f \cL(f)
\end{equation}
In data-fitting scenarios, $\cL(f)$ measures the discrepancy between predictions and observations. A common choice is the normalized root mean square error (NRMSE):
\begin{align}
\label{eq:NRMSE}
    \text{NRMSE}\bigl(f; (x_i, y_i)_{i=1}^n\bigr) = \sqrt{\frac{\sum_{i=1}^{n}(f(x_i) -  y_i)^2}{\sum_{i=1}^{n}(y_i - \bar{y})^2}},
\end{align}
where $\bar{y} = \frac{1}{n} \sum_{i=1}^{n} y_i$ is the mean of the observed values. Beyond data fitting, symbolic regression can also generate expressions with desired analytical properties, such as control functions. The expression $f$ is constructed from a predefined symbol set, including arithmetic operators, elementary functions (e.g., $\sin$, $\cos$), constants, and variables—making the search space inherently combinatorial.

Expressions are commonly represented as binary trees, with internal nodes as operators or functions and leaves as constants or variables. These trees are often traversed in pre-order to yield sequential forms suitable for learning algorithms (see \Cref{fig:framework}).

\subsection{Markov Decision Process}
\label{ssec:MDP}

Due to the sequential nature of expression representation, the task of finding the optimal expression can be formulated as a Markov decision process (MDP). The state space $\mathcal{S}$ contains valid sequences of symbols, which may represent incomplete expressions. The action space $\mathcal{A}$ consists of a set of predefined symbols. Taking an action $a \in \mathcal{A}$ at a state $s \in \mathcal{S}$ deterministically transitions to a new state $s' = \{s, a\}$, where the action $a$ is appended to the sequence.

The reward function $r(s, a, s')$ is defined as zero if the resulting state $s'$ is still an incomplete expression. However, if $s'$ forms a complete and valid expression, the reward is given by $\frac{1}{1 + \mathcal{L}(s')}$, where $\mathcal{L}$ denotes the objective functional defined in \Cref{eq:objective-functional,eq:NRMSE}. This reward lies within the interval $(0, 1]$. Complete expressions are treated as terminal states in this process.

In practice, the length of the expression sequence or the maximum depth of the expression tree is typically bounded in order to limit the size of the state space. The goal is to find a sequence of actions that, starting from the empty state $s_0 = \varnothing$, leads to a complete expression that maximizes the reward. When interpreted as a pre-order traversal, this sequence yields the optimal expression. The detailed modeling procedure can be found in \Cref{sec:mdp_modeling}.

\subsection{Monte Carlo Tree Search}
\label{ssec:MCTS}

MCTS \cite{kocsis2006bandit} is a sample-based planning algorithm that incrementally builds an asymmetric search tree, where each node represents a state (e.g., a valid symbol sequence), and edges correspond to actions. Each node maintains visit counts and estimated action values to guide the search. MCTS proceeds through four phases:

In the \textit{selection} phase, the tree is traversed from the root using the UCB criterion until a leaf with unvisited children is reached. At node $v$ for state $s$, the selected action is
\begin{equation}
\label{eq:UCB}
a = \textrm{argmax}_a \left[ Q(s,a) + c \sqrt{2\frac{\ln T_s}{T_{s,a}}} \right],
\end{equation}
where $Q(s,a)$ is the estimated value of action $a$ at $s$, $T_s$ is the visit count of $s$, $T_{s,a}$ is the visit count of $(s,a)$, and $c$ controls exploration. Unvisited actions ($T_{s,a}=0$) yield infinite scores, ensuring exploration.

In the \textit{expansion} phase, a child node is added for an unvisited action. The \textit{simulation} phase then performs a rollout from this node to a terminal state. In the \textit{backpropagation} phase, the obtained reward is propagated along the visited path, updating statistics.

This process repeats until the computational budget is exhausted, progressively improving the search estimates.

\section{Methodology}
\label{sec:methodology}

\begin{figure}
\centering
\includegraphics[width=1\linewidth]{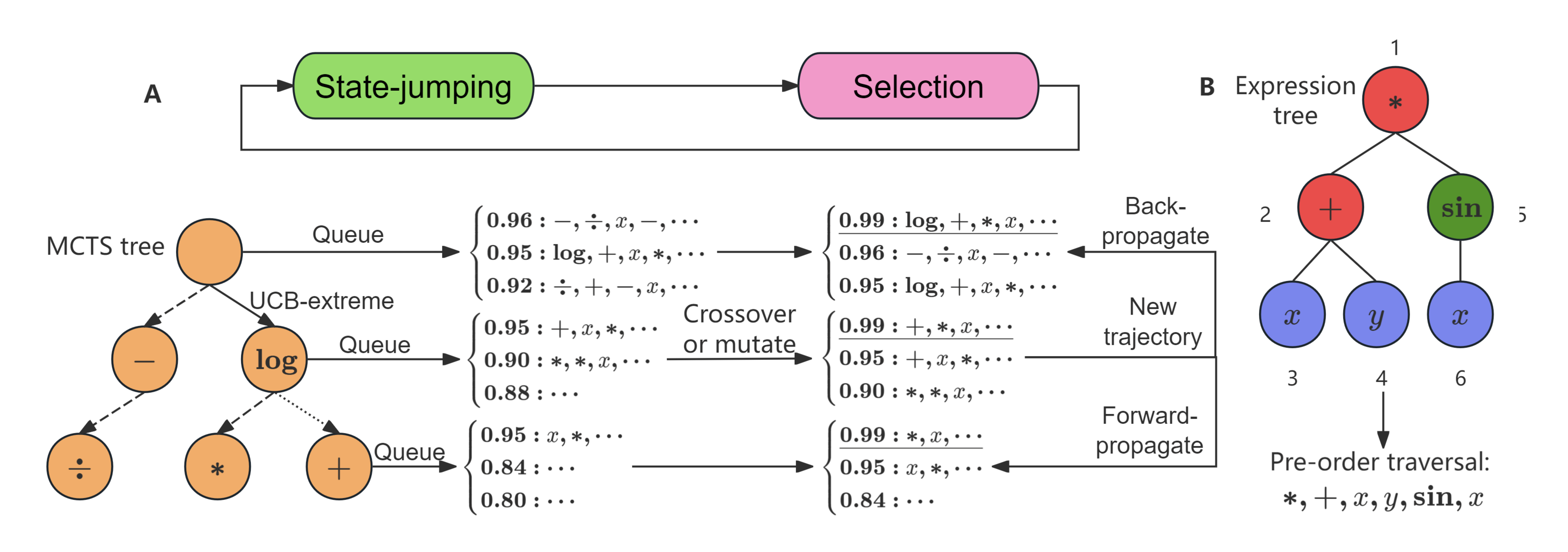}
\caption{\textbf{A}. Schematic of State-jumping Actions. Each MCTS node maintains a queue of top-$N$ trajectories passing through it. Before the standard selection step, a probabilistic State-jumping operation may be applied: randomly mutating or crossing trajectories from the queue to generate new states at the same node. This bypasses conventional RL sampling by directly introducing promising candidates. In this example, the UCB-extreme criterion~\eqref{eq:UCB-extreme} selects a second-layer node ($\log$), which then generates new child states ($+, *, x, \cdots$) with a higher reward (0.99). Bidirectional propagation ensures that both parent and child queues are updated. \textbf{B}. Illustration of converting a binary expression tree into a pre-order sequence. The example expression is $(x+y)*\sin(x)$.}
\label{fig:framework}
\end{figure}

In this section, we outline  two improvements to the Monte Carlo Tree Search for symbolic regression: the allocation strategy based on the extreme bandit and the incorporation of the state-jumping actions.

\subsection{Extreme Bandit Allocation Strategy}
\label{ssec:extreme-bandit}
The objective of symbolic regression is to identify the optimal expression, which corresponds to the highest reward, rather than a policy that maximizes the expected cumulative rewards. Consequently, the UCB allocation strategy \Cref{eq:UCB}, which is optimal for maximizing the expected cumulative rewards, may not be well-suited for this task. This limitation has been empirically demonstrated in  \cite{sun2022symbolic,shojaee2023transformer,xu2024reinforcement,petersen2019deep}. In this subsection, we introduce an extreme bandit allocation strategy, similar to UCB,
with optimal guarantees under certain reward distribution assumptions.

The selection step at state $s$ is modeled as a $K$ arm extreme bandit problem \cite{cicirello2005max,streeter2006simple,carpentier2014extreme}, where each action corresponds to an arm. 
We assume that choosing arm $k$ at time $t$ yields a reward $X_{k,t}$, determined through the subsequent selection until reaching a leaf node, followed by the simulation step.
We assume the reward is drawn from an unknown distribution $P_k$ over $[0,1]$ and $P_k$ is supported on $[0,b_k]$ with $b_k \leq 1$. Discovering the best expression is equivalent to discovering the maximum reward $\textrm{max}_{k=1}^{K} \{ b_k \}$.  
Given a designed allocation strategy, suppose arm $I_t$ is selected at time $t$. The objective is to design an allocation strategy that minimizes
\begin{equation}
\label{eq:extreme-bandit-obj}
    J(T) = \max_{k=1}^{K} \{b_k\} - \E\Bigl[\max_{t\leq T} X_{I_t,t}\Bigr],
\end{equation}
where the second term represents the expected highest reward achieved under our strategy. 
The objective function~\eqref{eq:extreme-bandit-obj} can be decomposed into two components: The performance gap, which quantifies the difference between the maximum possible reward and the expected highest reward when the optimal action is known, accounting for the randomness of the reward: 
\begin{equation}
\label{eq:performance_gap}
    G(T) =  \max_{k=1}^{K} \{b_k\}  - \max_{k=1}^K \Bigl\{ \E\Bigl[\max_{t\leq T} X_{k,t}\Bigr] \Bigr\}.
\end{equation}
And the regret, as defined in \cite{carpentier2014extreme} 
\begin{equation}
\label{eq:regret}
    R(T) = \max_{k=1}^{K} \Bigl\{ \E\Bigl[\max_{t\leq T} X_{k,t}\Bigr] \Bigr\} - \E\Bigl[\max_{t\leq T} X_{I_t,t}\Bigr].
\end{equation}
Therefore, determining the optimal allocation strategy is equivalent to minimizing the regret. The term $G(T)$ is independent of our strategy and serves as a fundamental performance limit.

In this work, we consider the following allocation strategy for the $T+1$-th round:
\begin{equation}
\label{eq:UCB-extreme}
    I_{T+1} := \textrm{arg max}_k \Bigl\{ \hat{Q}_{k, T_{k}} +  2\af\bigl(\frac{\ln T }{T_{k,T}}\bigr)^{\gamma}\Bigr\} \qquad \textrm{with} \qquad \hat{Q}_{k, T_{k,T}}  = \max_{t:I_t =k} X_{I_t, t},
\end{equation}
where $T_{k,T}$ denotes the number of times the $k$-th arm has been selected, and $\hat{Q}_{k, T_{k}}$  denotes the highest observed reward obtained from the $k$-th arm so far. The parameters $\af > 0$  and $\gamma > 0$
are two hyperparameters satisfying condition in \Cref{eq:UCB-extreme-parameter}. We now establish upper bounds on the performance gap and the regret, assuming that the reward distribution of each arm exhibits polynomial decay near its maximum: 
\begin{theorem}[Polynomial-like Arms Upper Bounds]
\label{theorem:beta_distribution_upper}
Assume there are $K$ arms, where the rewards for each arm follow a special beta distribution $X_{k,t} \sim P(x; a_k, b_k)$ supported on $[0, b_k]$:
\begin{equation}
\label{eq:beta_distribution}
    P(x; a,b) = 1 - (1 - \frac{x}{b})^a \quad \textrm{with} \quad a\geq 1  \,\textrm{ and } b\leq 1.
\end{equation}
We further assume that the first arm is the optimal, meaning $\Delta_k =b_1 - b_k > 0, \forall k \geq 2$. Then the performance gap satisfies 
\begin{equation}
\label{eq:g_upper_bound}
    G(T) \leq   \frac{b_1}{(T + \frac{1}{a_1})^{\frac{1}{a_1}}}. 
\end{equation}
Furthermore, we consider the allocation strategy defined in \Cref{eq:UCB-extreme} with 
\begin{equation}
\label{eq:UCB-extreme-parameter}
    \frac{1}{\gamma} \geq a_1 \qquad \textrm{and} \qquad 2^{a_1}\af^{\frac{1}{\gamma}} \geq 2 + \frac{1}{a_1}.
\end{equation}
Then for $T \geq C\ln T + K$ with constant \begin{equation}
\label{eq:constant-C}
C = \sum_{k=2}^K  \Bigl(\frac{2 \af }{\Delta_k}\Bigr)^{1/\gamma},
\end{equation}
the regret bound is given by
\begin{equation}
\label{eq:r_upper_bound}
    R(T) \leq K^2 b_1 \frac{2^{a_1}\af^{\frac{1}{\gamma}} - 1}{2^{a_1}\af^{\frac{1}{\gamma}} - 2} \frac{  C\ln T + 2 K}{(T - C\ln T - K)^{1 + \frac{1}{a_1}}}. 
\end{equation}
\end{theorem}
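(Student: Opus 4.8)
The plan is to prove the two bounds separately: the performance-gap bound \eqref{eq:g_upper_bound} is essentially an exact computation, while the regret bound \eqref{eq:r_upper_bound} needs a UCB-style counting argument adapted to the extreme-value objective.

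\textbf{Performance gap.} First I would compute $\E[\max_{t\le n}X_{k,t}]$ exactly. Since the $X_{k,t}$ are i.i.d.\ with CDF $P(\cdot;a_k,b_k)$, the running maximum has CDF $P(\cdot;a_k,b_k)^n$, so $\E[\max_{t\le n}X_{k,t}]=\int_0^{b_k}\bigl(1-P(x;a_k,b_k)^n\bigr)\,dx$; the substitution $u=1-x/b_k$ turns this into $b_k-b_k\int_0^1(1-u^{a_k})^n\,du$, and $v=u^{a_k}$ recognises a Beta integral, yielding
\[
\E[\max_{t\le n}X_{k,t}] \;=\; b_k-b_k\,\frac{\Gamma(1+1/a_k)\,\Gamma(n+1)}{\Gamma(n+1+1/a_k)}.
\]
Since $G(T)\le b_1-\E[\max_{t\le T}X_{1,t}]$, with $s=1/a_1\in(0,1]$ it then suffices to show $\Gamma(1+s)\Gamma(T+1)/\Gamma(T+1+s)\le (T+s)^{-s}$. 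This follows from log-convexity of $\Gamma$: using $T+1=s(T+s)+(1-s)(T+1+s)$ gives $\Gamma(T+1)\le\Gamma(T+s)^{s}\Gamma(T+1+s)^{1-s}$, hence $\Gamma(1+s)\Gamma(T+1)/\Gamma(T+1+s)\le\Gamma(1+s)\bigl(\Gamma(T+s)/\Gamma(T+1+s)\bigr)^{s}=\Gamma(1+s)(T+s)^{-s}\le(T+s)^{-s}$, the last step because $\Gamma\le1$ on $[1,2]$. (Equivalently one may invoke Wendel's inequality together with weighted AM--GM.) This is \eqref{eq:g_upper_bound}.

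\textbf{Regret: reduction.} Next I would identify the arm maximising $\E[\max_{t\le T}X_{k,t}]$. From the exact formula and the tail bound just proved, $b_1-\E[\max_{t\le T}X_{1,t}]\le b_1(T+1/a_1)^{-1/a_1}$; and the conditions \eqref{eq:UCB-extreme-parameter} force $\af\ge\tfrac12$, hence $2\af\ge1\ge b_1$, hence $C\ge(b_1/\min_{k\ge2}\Delta_k)^{a_1}$ by \eqref{eq:constant-C} and $1/\gamma\ge a_1$, so $T\ge C\ln T+K$ implies $b_1(T+1/a_1)^{-1/a_1}\le\min_{k\ge2}\Delta_k$. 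Therefore $\E[\max_{t\le T}X_{1,t}]\ge b_1-\min_k\Delta_k\ge b_k\ge\E[\max_{t\le T}X_{k,t}]$ for all $k\ge2$, so $\max_k\{\E[\max_{t\le T}X_{k,t}]\}=\E[\max_{t\le T}X_{1,t}]$ and $R(T)=\E[\max_{t\le T}X_{1,t}]-\E[\max_{t\le T}X_{I_t,t}]$. Since $\max_{t\le T}X_{I_t,t}\ge\max_{t:I_t=1}X_{1,t}=\max_{j\le T_{1,T}}Y_j$ with $(Y_j)$ the i.i.d.\ arm-$1$ rewards and $T_{1,T}$ its pull count, and $n\mapsto\max_{j\le n}Y_j$ is nondecreasing, a lower bound $T_{1,T}\ge N:=T-C\ln T-K$ gives $R(T)\le \E[\max_{t\le T}X_{1,t}]-\E[\max_{t\le N}X_{1,t}]=b_1\bigl(\phi(N)-\phi(T)\bigr)$ where $\phi(n)=\Gamma(1+1/a_1)\Gamma(n+1)/\Gamma(n+1+1/a_1)$. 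Writing $\phi(N)-\phi(T)=\phi(N)\bigl(1-\prod_{j=N+1}^{T}\tfrac{j}{j+1/a_1}\bigr)$ and bounding $\phi(N)\le(N+1/a_1)^{-1/a_1}$ and $1-\prod_j(1-\tfrac{1/a_1}{j+1/a_1})\le\sum_{j=N+1}^{T}\tfrac{1/a_1}{j}\le(T-N)/(N+1)$ produces a bound of the shape $b_1(T-N)N^{-1-1/a_1}$; substituting $T-N\le C\ln T+2K$ and $N\ge T-C\ln T-K$ recovers the form in \eqref{eq:r_upper_bound}, up to the overall constant.

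\textbf{Regret: the counting lemma, and the main obstacle.} The crux is the UCB-extreme bound $\sum_{k\ge2}\E[T_{k,T}]\le C\ln T+2K$ (equivalently the lower bound on $T_{1,T}$ used above). As in the classical UCB analysis, arm $k\ge2$ is pulled at round $t+1$ only if $\hat Q_{k}+2\af(\ln t/T_{k,t})^{\gamma}\ge\hat Q_{1}$, and since $\hat Q_k\le b_k=b_1-\Delta_k$ this forces either $T_{k,t}\lesssim(2\af/\Delta_k)^{1/\gamma}\ln t$ or else that arm $1$'s observed maximum $\hat Q_1$ lies well below $b_1$. The difficulty — and this is the step I expect to be the main obstacle — is that, unlike the sub-Gaussian concentration of empirical means used in standard UCB, here one only has the polynomial-decay tail $\mathbb{P}(\hat Q_1^{(n)}\le b_1-\epsilon)=(1-(\epsilon/b_1)^{a_1})^{n}$ coming from \eqref{eq:beta_distribution}, and as $T_{k,t}$ grows past its threshold the relevant gap $\epsilon$ shrinks, so the ``bad-event'' probabilities form a heavier series that must be summed with care. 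This is exactly where the conditions \eqref{eq:UCB-extreme-parameter} enter: $1/\gamma\ge a_1$ and $2^{a_1}\af^{1/\gamma}\ge 2+1/a_1$ guarantee the series converges, its sum being the geometric factor $\sum_{j\ge0}(2^{a_1}\af^{1/\gamma}-1)^{-j}=\frac{2^{a_1}\af^{1/\gamma}-1}{2^{a_1}\af^{1/\gamma}-2}$ in \eqref{eq:r_upper_bound}, while the $K^2$ arises from summing these contributions over the $K-1$ suboptimal arms together with a union bound; the $2K$ (versus $K$) and the passage from ``deterministic'' to ``in expectation'' absorb the same polynomial tails. A related technicality is that $\phi$ is convex, so Jensen cannot be applied to $\E[\phi(T_{1,T})]$ in the wrong direction — one needs the lower bound on $T_{1,T}$ to hold deterministically or with high enough probability (the exceptional event being folded into the constants), which again relies on the summability established by \eqref{eq:UCB-extreme-parameter}.
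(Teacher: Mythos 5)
Your performance-gap argument and your reduction of the regret are both correct and essentially identical to the paper's: the exact Beta-integral formula $\E[\max_{t\le n}X_{k,t}]=b_k\bigl(1-\Gamma(1+1/a_k)\Gamma(n+1)/\Gamma(n+1+1/a_k)\bigr)$, the Gautschi-type bound on the Gamma ratio (which the paper cites rather than rederives from log-convexity), the verification that the stated conditions make arm $1$ the maximizer of $\E[\max_{t\le T}X_{k,t}]$, and the reduction $R(T)\le b_1\bigl(\E[F(a_1,T_{1,T})]-F(a_1,T)\bigr)$ all match.

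However, the part you explicitly flag as ``the main obstacle'' is left unproved, and it is the heart of the theorem. Two concrete devices are missing. First, to make the bad-event probability $P\bigl(\hat Q_{1,T_{1,t}}+2\af(\ln t/T_{k,t})^{\gamma}\le b_1\bigr)$ polynomially small in $t$, you need a \emph{lower} bound on $T_{1,t}$ at the moment arm $k$ is over-pulled; the paper extracts this from the selection inequality itself (if arm $k$ beats arm $1$ while $T_{k,t}>(2\af/\Delta_k)^{1/\gamma}\ln t$, then arm $1$'s exploration bonus must be small, forcing $T_{1,t}\ge(2\af/(1+\Delta_k))^{1/\gamma}\ln t$), which is what turns the exponential tail into $t^{-2^{a_1}\af^{1/\gamma}}$ and is where the second condition in \eqref{eq:UCB-extreme-parameter} actually bites. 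Your sketch never establishes any lower bound on $T_{1,t}$, so the series you claim is summable has not been shown to be. Second, because $F(a_1,\cdot)$ is convex you correctly note Jensen is unavailable, but ``folding the exceptional event into the constants'' is not a one-liner: the paper decomposes the complement of the good event by which suboptimal arm is most over-pulled and then by the last round $t+1$ that arm is chosen, obtains $T_{1,T}\ge T-C\ln T-(K-1)t$ on that slice, and splits the sum over $t$ at $(T-C\ln T)/K$ — using the probability bound for large $t$ and the $F$-difference bound for small $t$. Without this two-range summation you cannot recover the stated $K^2$ and $(T-C\ln T-K)^{-1-1/a_1}$ factors. (A small symptom of the gap: the factor $\frac{2^{a_1}\af^{1/\gamma}-1}{2^{a_1}\af^{1/\gamma}-2}$ is not a geometric-series sum as you suggest, but the constant $\frac{p-1}{p-2}$ from the integral-test bound $\sum_{t\ge t_0}t^{-(p-1)}$ with $p=2^{a_1}\af^{1/\gamma}$.)
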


The proof of \Cref{theorem:beta_distribution_upper} is provided in \Cref{ssec:polynomial-upper}.  
In the reward density assumption \eqref{eq:beta_distribution}, the parameter $a$ controls the decay rate of the reward density near the maximum $b$. Larger values of $a$ indicate that fewer expressions can achieve high rewards. 
The theorem establishes that 
$$G(T) = \cO( \frac{1}{T^{{1}/{a_1}}} )\quad \textrm{and} \quad R(T)  = \cO(\frac{\ln T}{ T^{1+ {1}/{a_1}}}),$$ 
which depends only on the tail decay rate $a_1$ of the optimal arm. When $a_1$ is large, the performance gap increases; however, the regret remains at least $\cO(\frac{\ln T}{T})$. Together with the following theorem, this result demonstrates that the UCB-extreme allocation strategy \eqref{eq:UCB-extreme} is also optimal for such extreme bandit problems.  However, the required hyperparameters must satisfy \Cref{eq:UCB-extreme-parameter}. A key condition is that $\frac{1}{\gamma}$ must be greater than the polynomial decay rate, which is generally unknown and varies case by case. While choosing a small $\gamma$ satisfies this condition, it also causes the constant $C$ defined in \Cref{eq:constant-C} in the regret bound to grow exponentially with $\frac{1}{\gamma}$. In practice, $a_1$ can be estimated, and one may, for example, set $\gamma=\frac{1}{a_1}$ and $c = \frac{1}{2}+\frac{1}{a_1}$, which satisfy the condition \eqref{eq:UCB-extreme-parameter}. The estimation of $a_1$ for the symbolic regression task is discussed in \Cref{sec:experiment} and \Cref{sec:parameter_nguyen}. Meanwhile, we have also conducted corresponding numerical experiments to verify this theorem and demonstrate the effectiveness of the UCB-extreme strategy, as detailed in \Cref{sec:numerical_experiments}. 


\begin{theorem}[Polynomial-like Arms Lower Bounds]
\label{theorem:beta_distribution_lower}
Assume there are $K$ arms, where the rewards for each arm follow a special beta  distribution $X_{k,t} \sim P(x; a_k, b_k)$ supported on $[0, b_k]$:
$$P(x; a,b) = 1 - (1 - \frac{x}{b})^a \quad \textrm{with} \quad a\geq 1  \,\textrm{ and } b\leq 1.$$ 
We further assume that the first arm is the optimal, meaning $\Delta_k =b_1 - b_k > 0, \forall k \geq 2$, and $b_1 < 1$.
Consider a strategy that satisfies $\E[T_{k,T}] = o(T^\delta)$ as $T \rightarrow \infty$ for any arm $k$ with $\Delta_k > 0$, and any $\delta > 0$. Then, the following holds
\begin{equation}
    \lim \textrm{inf}_{T \rightarrow \infty}\E[T_{k,T}] \geq \frac{\ln T}{\mathrm{KL}[P(x; a_k, b_k)\Vert P(x; a_1, b_1)]}
\end{equation}
where $\textrm{KL}$ denotes the Kullback–Leibler divergence (relative entropy) between the two distributions.
\end{theorem}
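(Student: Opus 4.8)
The plan is to adapt the classical Lai–Robbins style lower bound argument (as in the original UCB analysis of \cite{auer2002finite} and the extreme-bandit lower bound of \cite{carpentier2014extreme}) to the special beta family \eqref{eq:beta_distribution}, but with a twist: because the allocation strategy here keys on the \emph{maximum} observed reward $\hat Q_{k,T_{k,T}}$ rather than the sample mean, the relevant ``confusing instance'' is one in which a sub-optimal arm $k$ is perturbed so that its support endpoint $b_k$ moves past $b_1$. Concretely, I would fix a sub-optimal arm $k$ and construct an alternative environment in which arm $k$'s distribution is replaced by $P(x; a_k, b_1')$ for some $b_1' > b_1$ (keeping $a_k$ fixed), so that arm $k$ becomes optimal in the alternative world while every other arm is unchanged. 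The hypothesis $b_1 < 1$ is exactly what gives room to push $b_k$ up to and slightly beyond $b_1$ while staying inside $[0,1]$; this is why that assumption is needed and where it enters.

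First I would set up the change-of-measure inequality. Let $\mathbb{P}$ and $\mathbb{P}'$ denote the laws of the trajectory $(I_1,X_{I_1,1},\dots,I_T,X_{I_T,T})$ under the original and alternative environments respectively. By the standard divergence-decomposition identity, the KL divergence between these two trajectory laws equals $\E[T_{k,T}] \cdot \mathrm{KL}\bigl[P(x;a_k,b_k)\,\Vert\, P(x;a_k,b_1')\bigr]$, since the two environments differ only on arm $k$ and Wald-type identities handle the random stopping. Then I would pick an event $A_T$ that is ``likely'' under one measure and ``unlikely'' under the other — the natural choice is $A_T = \{T_{k,T} \le T/2\}$ (or more sharply, $\{T_{k,T} < (1-\epsilon)\ln T / \mathrm{KL}\}$). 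Under $\mathbb{P}$, the assumed regularity $\E[T_{k,T}] = o(T^\delta)$ for every sub-optimal arm forces, via Markov's inequality, that in the alternative world the now-optimal arm $k$ is pulled only $o(T^\delta)$ times — but a consistent strategy must pull the optimal arm $\Omega(T)$ times (equivalently, the complementary event has probability $\to 1$ under $\mathbb{P}'$), so $\mathbb{P}(A_T)$ is close to $1$ while $\mathbb{P}'(A_T)$ is close to $0$. Feeding these into the Bretagnolle–Huber / data-processing inequality $\mathrm{KL}[\mathbb{P}\Vert\mathbb{P}'] \ge d_{\mathrm{KL}}(\mathbb{P}(A_T), \mathbb{P}'(A_T))$ (or the simpler $\ge \mathbb{P}(A_T)\ln\frac{\mathbb{P}(A_T)}{\mathbb{P}'(A_T)} - \ln 2$ bound) yields $\E[T_{k,T}] \cdot \mathrm{KL}[P(x;a_k,b_k)\Vert P(x;a_k,b_1')] \gtrsim \ln T$ up to lower-order terms.

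The final step is to let $b_1' \downarrow b_1$. Since KL divergence is continuous in the parameters of this beta family on the relevant range, $\mathrm{KL}\bigl[P(x;a_k,b_k)\,\Vert\, P(x;a_k,b_1')\bigr] \to \mathrm{KL}\bigl[P(x;a_k,b_k)\,\Vert\, P(x;a_1,b_1)\bigr]$ as $b_1' \to b_1$ — note the target KL in the theorem statement uses $(a_1,b_1)$, so I should double-check whether the intended confusing instance actually replaces arm $k$'s law by the \emph{optimal arm's own law} $P(x;a_1,b_1)$ (the cleanest choice, matching the statement verbatim) rather than by $P(x;a_k,b_1')$; if so, the construction is even simpler and no limiting argument in $b_1'$ is needed, only the observation that making arm $k$ identical to arm $1$ makes it (weakly) optimal, and a vanishing perturbation handles the strictness. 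Taking $\liminf_{T\to\infty}$ of the resulting inequality $\E[T_{k,T}] \ge (1-o(1))\ln T / \mathrm{KL}$ gives the claim. The main obstacle I anticipate is the measure-theoretic bookkeeping around the random number of pulls — making the divergence decomposition rigorous when $T_{k,T}$ is a stopping-time-like quantity and when rewards within a pull are themselves the outcome of an internal MCTS rollout — but this is handled by the usual filtration argument (conditioning on the history up to each pull) and does not require anything specific to symbolic regression; the beta-family-specific content is entirely in verifying continuity and finiteness of the KL divergence near $b_1' = b_1$, which is routine given $a_k \ge 1$ and $b_1 < 1$.
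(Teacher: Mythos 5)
Your proposal is correct and rests on the same core construction as the paper: a change of measure to a ``confusing'' environment in which arm $k$'s endpoint is pushed slightly above $b_1$ (this is exactly where $b_1<1$ is consumed), combined with the uniform-consistency hypothesis $\E[T_{j,T}]=o(T^\delta)$ applied \emph{in the alternative world} to force the now-optimal arm $k$ to be pulled $T-o(T^\delta)$ times there, followed by continuity of the KL divergence and a limit $\epsilon\to 0$. Where you diverge is in how the change of measure is converted into the bound: you invoke the divergence-decomposition identity $\mathrm{KL}[\mathbb{P}\Vert\mathbb{P}'] = \E[T_{k,T}]\,\mathrm{KL}[P_k\Vert P'_k]$ together with Bretagnolle--Huber/data-processing on the event $\{T_{k,T}\le T/2\}$ (the Kaufmann--Capp\'e--Garivier / Garivier--M\'enard--Stoltz route), whereas the paper follows the classical Lai--Robbins path: it truncates on the event that the empirical log-likelihood ratio $\widehat{\mathrm{KL}}_{T_{k,T}}$ stays below $(1-\tfrac{\epsilon}{2})\ln T$, bounds that event's probability by explicit change of measure plus Markov, and then controls the complementary event with the maximal strong law of large numbers. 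Your route is arguably cleaner (no maximal SLLN, no truncation event), at the cost of needing the Wald-type decomposition for the trajectory KL to be finite and exact; the paper's route is more elementary but requires the extra lemma on $\max_{s\le f_T}\widehat{\mathrm{KL}}_s$. One point you flagged but should settle: to recover the constant $\mathrm{KL}[P(\cdot;a_k,b_k)\Vert P(\cdot;a_1,b_1)]$ appearing in the statement, the alternative law for arm $k$ must converge to arm $1$'s law $P(\cdot;a_1,b_1)$ (shape parameter $a_1$, not $a_k$); keeping $a_k$ fixed would yield the different constant $\mathrm{KL}[P(\cdot;a_k,b_k)\Vert P(\cdot;a_k,b_1)]$. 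The paper indeed takes the alternative to be $P(\cdot;a_1,b'_k)$ with $b'_k\downarrow b_1$ chosen so that the perturbed KL is within a factor $(1+\epsilon)$ of the target, which is the resolution you anticipated.
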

The proof of \Cref{theorem:beta_distribution_lower} follows from \cite{lai1985asymptotically,kaufmann2016complexity,bubeck2012regret}, and is presented in \Cref{ssec:polynomial-lower}. Finally, we point out that for more challenging reward distributions, which decay faster than polynomials, it becomes difficult (or even impossible) to identify the best expressions. We focus on reward distributions that exhibit exponential decay near their maximum. 
The following negative result indicates that the performance gap is
 $\cO(\frac{1}{\ln T})$, meaning that an exponentially increasing number of samples is required to find $\textrm{max}_{k=1}^{K} \{ b_k \}$ or to identify the best expression with high probability. 

\begin{theorem}[Exponential-like Arms]
\label{theorem:exponential_like_arms}
Assume there are $K$ arms, where the rewards for each arm follow a distribution $X_{k,t} \sim P(x; a_k, b_k)$ supported on $[0, b_k]$:
\begin{equation}
    P(x; a, b) = 1 - e^{-\frac{a x}{b - x}}  \quad \textrm{with} \quad a > 0  \,\textrm{ and } b\leq 1. 
\end{equation}
We further assume that the first arm is the optimal, meaning $\Delta_k =b_1 - b_k > 0, \forall k \geq 2$. Then the performance gap satisfies 
\begin{equation}
    G(T) \geq   \min \Bigl\{\frac{a_1 b_1}{e \ln (T+1)}, \min_{k\geq 2} \Delta_k \Bigr\}.
\end{equation}
\end{theorem}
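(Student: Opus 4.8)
The plan is to rewrite the performance gap in terms of a single arm and then control a maximum of exponentials. Since the first arm is optimal, $\max_k b_k = b_1$, so from \eqref{eq:performance_gap}, $G(T) = \min_{k}\bigl\{\,b_1 - \E[\max_{t\le T} X_{k,t}]\,\bigr\}$. For every suboptimal arm $k\ge 2$ the maximum $M_{k,T}:=\max_{t\le T}X_{k,t}$ is supported in $[0,b_k]$, so $\E[M_{k,T}]\le b_k$ and hence $b_1 - \E[M_{k,T}]\ge b_1-b_k = \Delta_k\ge\min_{j\ge2}\Delta_j$. Therefore $G(T)\ge\min\bigl\{\,b_1-\E[M_{1,T}],\ \min_{j\ge2}\Delta_j\bigr\}$, and the whole statement reduces to a lower bound on $b_1-\E[M_{1,T}]$.

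By the layer-cake identity, $b_1-\E[M_{1,T}] = \int_0^{b_1} P(x;a_1,b_1)^T\,dx$, the same structural quantity that appears in the polynomial-arms case. The key point I would exploit is that the optimal arm's law is an increasing reparametrization of an exponential: if $U\sim\mathrm{Exp}(1)$ then $\Pr\!\bigl[\tfrac{b_1 U}{a_1+U}>x\bigr]=\Pr\!\bigl[U>\tfrac{a_1 x}{b_1-x}\bigr]=e^{-\frac{a_1 x}{b_1-x}}=1-P(x;a_1,b_1)$, so $X_{1,t}\stackrel{d}{=}\tfrac{b_1 U_t}{a_1+U_t}$ with $U_t\sim\mathrm{Exp}(1)$ i.i.d. Since $u\mapsto\tfrac{b_1 u}{a_1+u}$ is increasing, $M_{1,T}\stackrel{d}{=}\tfrac{b_1 V_T}{a_1+V_T}$ where $V_T:=\max_{t\le T}U_t$, and so
\[
b_1-\E[M_{1,T}] \;=\; b_1\,\E\!\Bigl[1-\tfrac{V_T}{a_1+V_T}\Bigr] \;=\; a_1 b_1\,\E\!\Bigl[\tfrac{1}{a_1+V_T}\Bigr].
\]
Here $V_T$ is the maximum of $T$ i.i.d.\ standard exponentials, with $\Pr[V_T\le v]=(1-e^{-v})^T$.

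To bound $\E[\tfrac{1}{a_1+V_T}]$ from below I would restrict to the event $\{V_T\le\ln(T+1)\}$. Its probability is $(1-\tfrac{1}{T+1})^T = e^{-T\ln(1+1/T)}\ge e^{-1}$, using the elementary inequality $\ln(1+1/T)\le 1/T$; and on that event $\tfrac{1}{a_1+V_T}\ge\tfrac{1}{a_1+\ln(T+1)}$. Hence $\E[\tfrac{1}{a_1+V_T}]\ge\tfrac{1}{e(a_1+\ln(T+1))}$, so $b_1-\E[M_{1,T}]\ge\tfrac{a_1 b_1}{e(a_1+\ln(T+1))}=\Omega\!\bigl(\tfrac{a_1 b_1}{\ln(T+1)}\bigr)$, which yields the claimed $\tfrac{a_1 b_1}{e\ln(T+1)}$ once $\ln(T+1)$ dominates $a_1$ (the constant tightens if one optimizes the truncation level, or exploits that $V_T-\ln T$ is $O(1)$ in probability). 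Combining with the first paragraph gives $G(T)\ge\min\bigl\{\tfrac{a_1 b_1}{e(a_1+\ln(T+1))},\ \min_{j\ge2}\Delta_j\bigr\}$.

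The main obstacle I anticipate is cosmetic rather than structural: pinning down exactly the constant in the statement. The plain indicator/truncation argument produces $a_1+\ln(T+1)$ in the denominator, and shedding the extra $a_1$ to land precisely on $\ln(T+1)$ either needs the asymptotic regime $a_1=o(\ln T)$ or a sharper concentration estimate for the exponential maximum $V_T$. Everything else — the $\Delta_k$ reduction, the exponential reparametrization, and the bound $(1-1/(T+1))^T\ge e^{-1}$ — is elementary.
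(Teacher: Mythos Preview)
Your argument is correct and mirrors the paper's: the exponential reparametrization $X\stackrel{d}{=}\tfrac{bU}{a+U}$ is the paper's change of variable $y=P(x;a,b)$ recast probabilistically, and truncating at $\{V_T\le\ln(T+1)\}$ is exactly the paper's restriction of its integral to $y\le 1-\tfrac{1}{T+1}$, both yielding $\tfrac{a_1 b_1}{e(a_1+\ln(T+1))}$. The cosmetic gap you flag is real and the paper does not close it either --- its final displayed inequality passes from $\tfrac{a}{a+\ln(T+1)}\bigl(1-\tfrac{1}{T+1}\bigr)^T$ to $\tfrac{a/e}{\ln(T+1)}$ without justification, silently dropping the same $a$ from the denominator.
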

The proof of \Cref{theorem:exponential_like_arms} is presented in \Cref{sec:exponential}. 

\subsection{Evolution-inspired State-jumping Actions}
\label{sec:esj}

\begin{algorithm}[]
\caption{Improved MCTS}
\label{alg:mcts_statejumping}
\KwIn{Initial state $s_0$, $p_s$: state-jumping rate, $p_m$: mutation rate, $\epsilon$: random explore rate}
\KwOut{Updated MCTS tree}

Initialize $v \gets v_0$, $s \gets s_0$, $T_v \gets T_v + 1$ \hfill $\triangleright$ Init\\
    
\textbf{while} \textsc{NotLeaf}(v) \textbf{do} \hfill $\triangleright$ Until reaching a leaf node\\ 
\quad Let $\xi\sim\textsc{Uniform}(0,1)$ \hfill $\triangleright$ Random variable\\
\quad \textbf{if} $\xi_1 < p_s$: \hfill $\triangleright$ Evolutionary state-jumping\\
\quad \quad \textbf{if} $\xi_2 < p_m$: \\
\quad \quad \quad $\tau_m \gets \text{MUT}(s, v)$, \textsc{Back/Forward}prop(v, $\tau_m$) \hfill $\triangleright$ Mutation\\
\quad \quad \textbf{else}: \\
\quad \quad \quad \textbf{for} $\tau_c$ in \text{CRS}(s,v): \textsc{Back/Forward}prop(v, $\tau_c$) \hfill $\triangleright$ Crossover\\

\quad \textbf{if} $\xi_3 < \epsilon$: $v \gets \mathcal{U}(\mathcal{C}(v))$ \hfill $\triangleright$Randomly explore\\
\quad \textbf{else}: $v \gets \arg\max_{v'}\Psi(v')$ \hfill $\triangleright$UCB-extreme exploit\\

\quad $s \gets f(s,a_v)$, $T_v \gets T_v + 1$ \hfill $\triangleright$ State transition\\
\textbf{end while} \\
\textbf{if} \textsc{NonTerminal}(s): $v \gets \textsc{Expand}(v)$, update $s$\\

$\textsc{Backprop}(v, \textsc{Simulate}(s))$ \hfill $\triangleright$Simulation and update ancestral nodes\\
\textbf{for} $\tau_p \in \mathcal{Q}_{v^p}$: $\textsc{ForwardProp}(v^p, \tau_p)$ \hfill $\triangleright$ Propagate parent node's results to child nodes\\

\end{algorithm}

To improve search efficiency, we integrate mutation and crossover from GP as state-jumping actions within MCTS. Unlike prior hybrid methods~\cite{mundhenk2021symbolic, landajuela2022unified, xu2024reinforcement} which alternate between GP and RL, our framework tightly embeds these actions into MCTS.
As illustrated in \Cref{fig:framework}, each MCTS node maintains a priority queue of high-reward expressions. During selection, we probabilistically trigger state-jumping, applying mutation or crossover using expressions from this queue. These jumps guide the search toward high-reward regions by leveraging past successful expressions, effectively bypassing less promising paths.
To ensure consistency and maximize the utility of information within the MCTS tree, we introduce bidirectional propagation, which updates the priority queues of both ancestors and descendants whenever a high-reward expression is found. This enables efficient sharing of valuable expressions across the tree and enhances overall search performance.

\textbf{Priority Queue.} \label{sec:queuemechanism}
For each MCTS node $v$, we maintain a priority queue $\mathcal{Q}(v)$ that stores the Top-$N$ reward-trajectories, each denoted as $\tau = (a_{h}, a_{h+1}, \cdots)$ along with their associated rewards $r(\tau)$. Let $s$ denote the state at $v$, so that the sequence $(s, a_{h}, a_{h+1}, \cdots)$ defines a complete symbolic expression, with the reward given by $r(\tau)$. The queue is dynamically updated, recording the top-$N$ highest reward trajectories from both standard MCTS iterations and state-jumping actions passing through the node, even before this MCTS node is expanded. This is achieved through the following bidirectional propagation mechanism.

\textbf{Bidirectional Propagation.} 
\label{sec:biprop}
Conventional MCTS (\Cref{ssec:MCTS}) updates node information through backpropagation (upward propagation to the root), which includes the node visit count and the $Q$-value. However, backpropagation alone is insufficient for maintaining the priority queue and propagating information for state-jumping actions. For instance, if a node is visited during the simulation step but has not yet been expanded, the information for the top-$N$ reward trajectories and their associated rewards is not recorded in that node.
To address this, we perform downward propagation after MCTS node expansion. 
Specifically, when a new MCTS node is expanded, its parent node downward propagates information, including the trajectories that bypass the newly expanded node and the associated rewards. Additionally, after a state-jumping action, information is propagated both upward and downward. With this bidirectional propagation, for any node $v$ with state $s$, the highest reward stored in the priority queue $\mathcal{Q}(v)$, denoted as $\hat{V}(v)$, equals the maximum reward across all complete trajectories passing through $v$.
Furthermore, this highest reward is also the maximum among its child nodes'  highest rewards
 $$\hat{V}(s) = \max\{\hat{V}(v'): v' \textrm{ is a child node of } v\}.$$ More generally, the top-$N$ reward queue of the parent node $\mathcal{Q}(v)$ collects the top-$N$ trajectories from the reward queues of its child nodes.
Additionally, if the edge connecting $v$ and its child $v'$ corresponds to action $a$, then the highest reward satisfies  $\hat{V}(v') = \hat{Q}(s, a)$. Implementation details are provided in \Cref{sec:algorithm-details}.

\textbf{State-jumping Actions.}  
We integrate evolution-inspired state-jumping actions, including crossover and various mutation types---following the implementations in \cite{mundhenk2021symbolic, xu2024reinforcement, fortin2012deap}, into the MCTS process. 
During each iteration, a state-jumping action is applied at a node \(v\) with depth-dependent probability \(p_s\), which decreases exponentially with depth. This design prioritizes high-reward nodes near the root, whose top-$N$ queues typically contain better candidates. With probability \(\epsilon\), node selection deviates from the extreme bandit allocation strategy~\eqref{eq:UCB-extreme}, choosing nodes uniformly at random before applying a state-jumping action. These actions are exclusive to the MCTS phase and do not appear in rollouts. The newly generated states and their rewards are incorporated via the bidirectional propagation mechanism.

State-jumping actions extend the search beyond sequential symbol updates by enabling direct transitions to distant states, thus enhancing global exploration. Meanwhile, the bidirectional propagation mechanism retains and shares useful substructures across the tree. Combined with the extreme bandit allocation strategy, this hybrid approach improves the reward distribution in MCTS, which improves robustness to hyperparameters and facilitates the generation of more complex expressions, as discussed in \Cref{sec:experiment}. Detailed pseudocode of the algorithm is provided in \Cref{alg:mcts_statejumping}.

\section{Experiment and Result}
\label{sec:experiment}
In this section, we assess our method on a variety of benchmarks.
The \textbf{Basic Benchmarks} include several ground-truth datasets where the true closed-form expressions are known: Nguyen~\cite{petersen2019deep}, Nguyen\textsuperscript{C}~\cite{petersen2019deep}, Jin~\cite{jin2019bayesian}, and Livermore~\cite{mundhenk2021symbolic}. These datasets consist of data points sampled from equations with at most two variables over restricted intervals. Notably, here we replace Nguyen-12 with Nguyen-12*\cite{mundhenk2021symbolic}.
The \textbf{SRBench Black‑box Benchmarks (SRBench)}~\cite{la2021contemporary,olson2017pmlb} feature more challenging datasets: Feynman~\cite{udrescu2020ai}, Strogatz~\cite{la2016inference}, and the Black‑box collection. The Black‑box subset contains 122 tasks with two or more input variables: 46 are drawn from real‑world observational datasets, and 76 are synthetic problems derived from analytical functions or simulation models.
Detailed experimental parameters and procedures are provided in \Cref{sec:experimental-details}. For benchmarks involving constants, each evaluation of a symbolic expression based on \eqref{eq:NRMSE} requires optimizing the constants within the expression using the BFGS algorithm~\cite{fletcher1987practical} as implemented in \texttt{SciPy}~\cite{virtanen2020scipy}.


\begin{figure}
  \centering
  \includegraphics[width=0.9\linewidth]{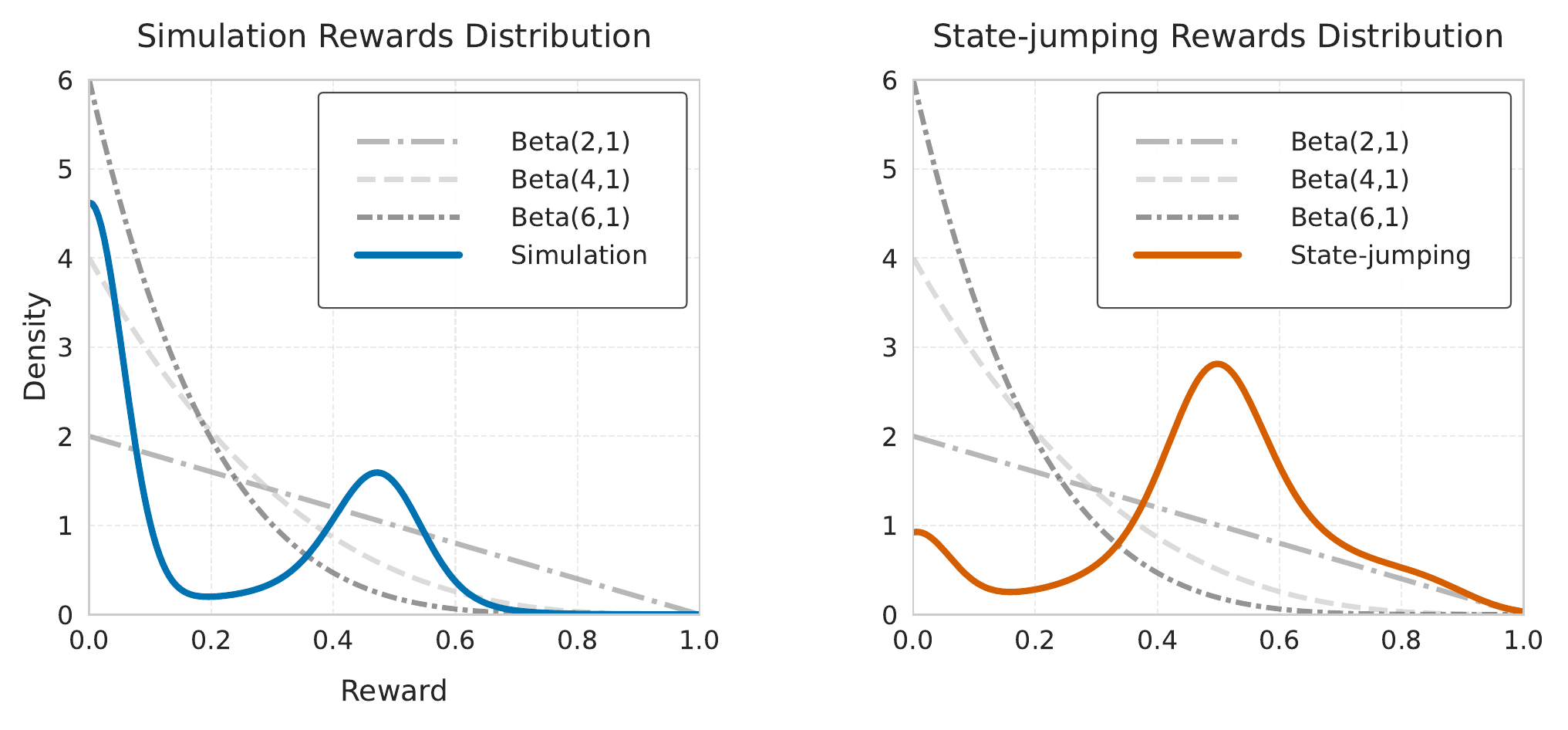}
  \caption{Empirical reward distribution for Nguyen-4 under \(\epsilon = 1\) in \Cref{alg:mcts_statejumping} (fully random node selection), with a computational budget of 200{,}000 expression evaluations. All other settings follow \Cref{tab:hyperparameters}. Left: standard MCTS simulation. Right: state-jumping actions. A Gaussian kernel density estimate (bandwidth 0.25) is computed over 100 runs. Overlaid gray curves represent beta distributions on \([0,1]\) with varying tail decay parameters \(a = 2, 4, 6\).}
  \label{fig:nguyen-4}
\end{figure}

\textbf{Algorithm Analysis.} The effects of the extreme bandit allocation strategy and the state-jumping actions introduced in \Cref{sec:methodology} are analyzed using the Nguyen benchmarks. 
We first visualize the reward distribution---an important indicator of high-probability recovery, as discussed in \Cref{ssec:extreme-bandit}. Most Nguyen test cases exhibit polynomial tail decay rates approximately in \([4,6]\), a representative example for Nguyen-4 is presented in \Cref{fig:nguyen-4}. 
Next, we test MCTS only equipped with the UCB-extreme strategy, under different parameter configurations~\Cref{eq:UCB-extreme-parameter}. The results show that, for an estimated tail decay rate above \(a_1 = 6\), the algorithm can achieve optimal recovery performance; however, the algorithm is highly sensitive to \(c\) and requires careful tuning to maintain robust performance across tasks. Moreover, it often struggles on problems with more complex target expressions (e.g., Nguyen‑4). 
Finally, we examine the effect of incorporating state‑jumping actions, which dramatically reshape the reward landscape. Before adding state‑jumps, the estimated reward tail decay—measured by the \(a\) (first) parameter of the Beta distribution—fell in the interval \([4,6]\) for Nguyen‑4 (see \Cref{fig:nguyen-4}). After introducing state‑jumping, that same \(a\) value drops below 2. Across all other Nguyen test cases, the fitted \(a\) shifts from its original range into the interval \([2,4]\). In practical terms, lowering \(a\) in this way leads to faster and more stable convergence. Additional decay rate estimates for the full Nguyen suite can be found in \Cref{sec:parameter_nguyen}.
Accordingly, in the subsequent comparative experiments we fix \(\frac{1}{\gamma} = 2, c = 1\).
Additionally, we also performed ablation experiments on Nguyen benchmark, which can be found in \Cref{sec:ablation}.

\begin{table}[]
    \centering
    \vspace{1em} 
    \caption{Average recovery rate (\%) of original expressions on five ground‐truth benchmarks. The “Datasets” column indicates the number of individual datasets per benchmark. Results are averaged over 100 runs under a 2 million‑evaluation budget.}
    \vspace{1em} 
    \label{tab:basic-benchmarks-results}
    \begin{tabular}{lcccccc}
        \hline
        Benchmark   & Datasets              & Ours            & DSR             & GEGL           & NGGP            & PySR   \\
        \hline
        Nguyen    & 12                & \textbf{93.25}  & 83.58           & 86.00          & 92.33           & 74.41  \\
        Nguyen\textsuperscript{C} & 5 & \textbf{100.00} & \textbf{100.00} & \textbf{100.00} & \textbf{100.00} & 65.40  \\
        Jin    & 6                   & \textbf{100.00} & 70.33           & 95.67          & \textbf{100.00} & 72.17  \\
        Livermore    & 22             & \textbf{71.41}  & 30.41           & 56.36          & 71.09           & 46.14  \\
        \hline
    \end{tabular}
\end{table}

\textbf{Comparison Study.}
We benchmark our algorithm (\Cref{alg:mcts_statejumping}) against several representative baselines.
\Cref{tab:basic-benchmarks-results} reports the average recovery rate on the Basic Benchmarks over 100 independent runs, using a budget of 2 million expression evaluations.
The baselines include the RNN-based RL algorithm DSR~\cite{petersen2019deep}, GEGL~\cite{ahn2020guiding}, NGGP~\cite{mundhenk2021symbolic}, and the genetic programming library PySR~\cite{cranmer2023interpretable}.
GEGL combines genetic programming with imitation learning, while NGGP integrates it with reinforcement learning following DSR.
These methods were selected as they are general-purpose algorithms suitable for combinatorial optimization, similar to ours.
Our approach achieves comparable performance to these baselines.

We further evaluate our algorithm on SRBench, comparing it with 21 baseline methods reported in~\cite{la2021contemporary}.
As shown in ~\Cref{fig:blackbox-results}, our method strikes a favorable balance between accuracy and model complexity: it ranks second in test accuracy, just behind Operon~\cite{kommenda2020parameter}, while producing substantially simpler models.
On the Pareto frontier, our method stands among the top-performing approaches, alongside Operon~\cite{kommenda2020parameter}, GP-COMEA~\cite{virgolin2021improving}, and DSR~\cite{petersen2019deep}.

\begin{figure}
    \centering
    \begin{tabular}{@{}c@{\hspace{0.04\textwidth}}c@{}}
        \raisebox{-0.5\height}{\includegraphics[width=0.5\textwidth]{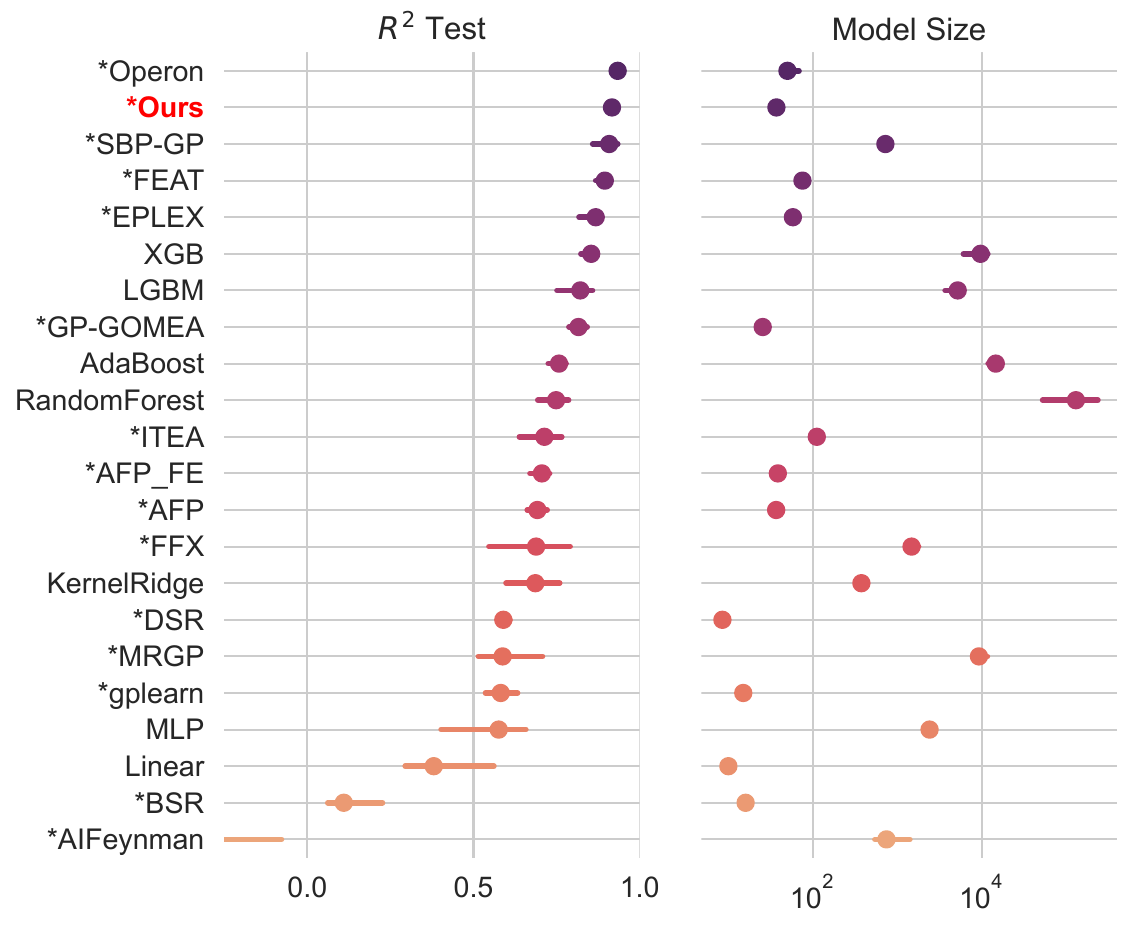}} &
        \raisebox{-0.5\height}{\includegraphics[width=0.43\textwidth]{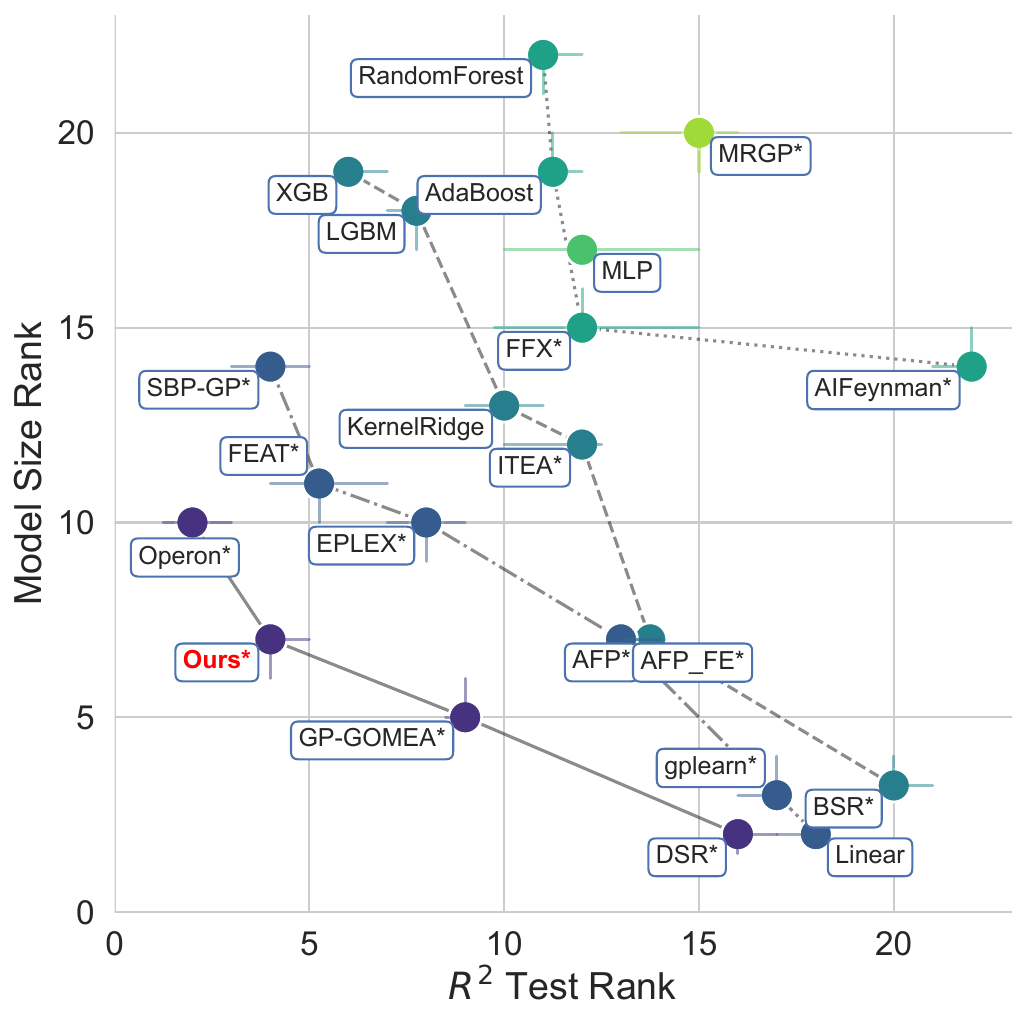}} \\[0.8ex]
    \end{tabular}

    \vspace{0.5em}

    \caption{Comparison of our algorithm and SRBench baselines on the Black-box benchmark, showing median test $R^2$ and model size across 122 problems (95\% confidence intervals; asterisks denote symbolic regression methods); and the Pareto frontier of model size vs. median test \ $R^2$ (median rankings, 95\% confidence intervals; lines/colors indicate Pareto dominance).}
    \label{fig:blackbox-results}

\end{figure}

\section{Discussion and Limitation}
\label{sec:discussion}
We propose an improved MCTS framework for symbolic regression with two key innovations: (1) an extreme bandit allocation strategy, and (2) evolution-inspired state-jumping actions. The extreme bandit strategy is based on best-arm identification with polynomial-like reward and offers theoretical guarantees for optimal finite-time regret. Meanwhile, the state-jumping actions reshape the reward landscape towards high-reward regions.
The proposed method achieves competitive performance on various symbolic regression benchmarks, including ground-truth and complex SRBench black-box tasks.
Despite these advances, some limitations remain. First, the bandit strategy's theoretical guarantees depend on reward distribution assumptions, which future work could explore relaxing. Second, certain challenging problems, such as Nguyen-12, remain unsolved. Finally, extending these innovations to broader reinforcement learning and combinatorial optimization domains is a promising future direction.

\bibliographystyle{nips}
\bibliography{main}

\begin{thebibliography}{10}

\bibitem{schmelzer2020discovery}
Schmelzer, M., R.~P. Dwight, P.~Cinnella.
\newblock Discovery of algebraic reynolds-stress models using sparse symbolic regression.
\newblock \emph{Flow, Turbulence and Combustion}, 104:579--603, 2020.

\bibitem{lemos2023rediscovering}
Lemos, P., N.~Jeffrey, M.~Cranmer, et~al.
\newblock Rediscovering orbital mechanics with machine learning.
\newblock \emph{Machine Learning: Science and Technology}, 4(4):045002, 2023.

\bibitem{chen2012genetic}
Chen, S.-H.
\newblock \emph{Genetic algorithms and genetic programming in computational finance}.
\newblock Springer Science \& Business Media, 2012.

\bibitem{wang2019symbolic}
Wang, Y., N.~Wagner, J.~M. Rondinelli.
\newblock Symbolic regression in materials science.
\newblock \emph{MRS communications}, 9(3):793--805, 2019.

\bibitem{grundner2024data}
Grundner, A., T.~Beucler, P.~Gentine, et~al.
\newblock Data-driven equation discovery of a cloud cover parameterization.
\newblock \emph{Journal of Advances in Modeling Earth Systems}, 16(3):e2023MS003763, 2024.

\bibitem{wilstrup2022combining}
Wilstrup, C., C.~Cave.
\newblock Combining symbolic regression with the cox proportional hazards model improves prediction of heart failure deaths.
\newblock \emph{BMC Medical Informatics and Decision Making}, 22(1):196, 2022.

\bibitem{kaiser2018sparse}
Kaiser, E., J.~N. Kutz, S.~L. Brunton.
\newblock Sparse identification of nonlinear dynamics for model predictive control in the low-data limit.
\newblock \emph{Proceedings of the Royal Society A}, 474(2219):20180335, 2018.

\bibitem{virgolin2022symbolic}
Virgolin, M., S.~P. Pissis.
\newblock Symbolic regression is np-hard.
\newblock \emph{arXiv preprint arXiv:2207.01018}, 2022.

\bibitem{white2013better}
White, D.~R., J.~McDermott, M.~Castelli, et~al.
\newblock Better gp benchmarks: community survey results and proposals.
\newblock \emph{Genetic programming and evolvable machines}, 14:3--29, 2013.

\bibitem{la2021contemporary}
La~Cava, W., P.~Orzechowski, B.~Burlacu, et~al.
\newblock Contemporary symbolic regression methods and their relative performance.
\newblock \emph{arXiv preprint arXiv:2107.14351}, 2021.

\bibitem{koza1994genetic}
Koza, J.~R.
\newblock Genetic programming as a means for programming computers by natural selection.
\newblock \emph{Statistics and computing}, 4:87--112, 1994.

\bibitem{schmidt2009distilling}
Schmidt, M., H.~Lipson.
\newblock Distilling free-form natural laws from experimental data.
\newblock \emph{science}, 324(5923):81--85, 2009.

\bibitem{kommenda2020parameter}
Kommenda, M., B.~Burlacu, G.~Kronberger, et~al.
\newblock Parameter identification for symbolic regression using nonlinear least squares.
\newblock \emph{Genetic Programming and Evolvable Machines}, 21(3):471--501, 2020.

\bibitem{virgolin2021improving}
Virgolin, M., T.~Alderliesten, C.~Witteveen, et~al.
\newblock Improving model-based genetic programming for symbolic regression of small expressions.
\newblock \emph{Evolutionary computation}, 29(2):211--237, 2021.

\bibitem{cranmer2023interpretable}
Cranmer, M.
\newblock Interpretable machine learning for science with pysr and symbolicregression. jl.
\newblock \emph{arXiv preprint arXiv:2305.01582}, 2023.

\bibitem{petersen2019deep}
Petersen, B.~K., M.~Landajuela, T.~N. Mundhenk, et~al.
\newblock Deep symbolic regression: Recovering mathematical expressions from data via risk-seeking policy gradients.
\newblock \emph{arXiv preprint arXiv:1912.04871}, 2019.

\bibitem{udrescu2020ai}
Udrescu, S.-M., M.~Tegmark.
\newblock Ai feynman: A physics-inspired method for symbolic regression.
\newblock \emph{Science Advances}, 6(16):eaay2631, 2020.

\bibitem{biggio2021neural}
Biggio, L., T.~Bendinelli, A.~Neitz, et~al.
\newblock Neural symbolic regression that scales.
\newblock In \emph{International Conference on Machine Learning}, pages 936--945. Pmlr, 2021.

\bibitem{kamienny2022end}
Kamienny, P.-A., S.~d'Ascoli, G.~Lample, et~al.
\newblock End-to-end symbolic regression with transformers.
\newblock \emph{Advances in Neural Information Processing Systems}, 35:10269--10281, 2022.

\bibitem{valipour2021symbolicgpt}
Valipour, M., B.~You, M.~Panju, et~al.
\newblock Symbolicgpt: A generative transformer model for symbolic regression.
\newblock \emph{arXiv preprint arXiv:2106.14131}, 2021.

\bibitem{silver2016mastering}
Silver, D., A.~Huang, C.~J. Maddison, et~al.
\newblock Mastering the game of go with deep neural networks and tree search.
\newblock \emph{nature}, 529(7587):484--489, 2016.

\bibitem{silver2017mastering}
Silver, D., J.~Schrittwieser, K.~Simonyan, et~al.
\newblock Mastering the game of go without human knowledge.
\newblock \emph{nature}, 550(7676):354--359, 2017.

\bibitem{mundhenk2021symbolic}
Mundhenk, T.~N., M.~Landajuela, R.~Glatt, et~al.
\newblock Symbolic regression via neural-guided genetic programming population seeding.
\newblock \emph{arXiv preprint arXiv:2111.00053}, 2021.

\bibitem{landajuela2022unified}
Landajuela, M., C.~S. Lee, J.~Yang, et~al.
\newblock A unified framework for deep symbolic regression.
\newblock \emph{Advances in Neural Information Processing Systems}, 35:33985--33998, 2022.

\bibitem{kamienny2023deep}
Kamienny, P.-A., G.~Lample, S.~Lamprier, et~al.
\newblock Deep generative symbolic regression with monte-carlo-tree-search.
\newblock In \emph{International Conference on Machine Learning}, pages 15655--15668. PMLR, 2023.

\bibitem{lu2021incorporating}
Lu, Q., F.~Tao, S.~Zhou, et~al.
\newblock Incorporating actor-critic in monte carlo tree search for symbolic regression.
\newblock \emph{Neural Computing and Applications}, 33:8495--8511, 2021.

\bibitem{sun2022symbolic}
Sun, F., Y.~Liu, J.-X. Wang, et~al.
\newblock Symbolic physics learner: Discovering governing equations via monte carlo tree search.
\newblock \emph{arXiv preprint arXiv:2205.13134}, 2022.

\bibitem{shojaee2023transformer}
Shojaee, P., K.~Meidani, A.~Barati~Farimani, et~al.
\newblock Transformer-based planning for symbolic regression.
\newblock \emph{Advances in Neural Information Processing Systems}, 36:45907--45919, 2023.

\bibitem{xu2024reinforcement}
Xu, Y., Y.~Liu, H.~Sun.
\newblock Reinforcement symbolic regression machine.
\newblock In \emph{The Twelfth International Conference on Learning Representations}. 2024.

\bibitem{auer2002finite}
Auer, P., N.~Cesa-Bianchi, P.~Fischer.
\newblock Finite-time analysis of the multiarmed bandit problem.
\newblock \emph{Machine learning}, 47:235--256, 2002.

\bibitem{coulom2006efficient}
Coulom, R.
\newblock Efficient selectivity and backup operators in monte-carlo tree search.
\newblock In \emph{International conference on computers and games}, pages 72--83. Springer, 2006.

\bibitem{kocsis2006bandit}
Kocsis, L., C.~Szepesv{\'a}ri.
\newblock Bandit based monte-carlo planning.
\newblock In \emph{European conference on machine learning}, pages 282--293. Springer, 2006.

\bibitem{cicirello2005max}
Cicirello, V.~A., S.~F. Smith.
\newblock The max k-armed bandit: A new model of exploration applied to search heuristic selection.
\newblock In \emph{The Proceedings of the Twentieth National Conference on Artificial Intelligence}, vol.~3, pages 1355--1361. 2005.

\bibitem{streeter2006simple}
Streeter, M.~J., S.~F. Smith.
\newblock A simple distribution-free approach to the max k-armed bandit problem.
\newblock In \emph{International Conference on Principles and Practice of Constraint Programming}, pages 560--574. Springer, 2006.

\bibitem{carpentier2014extreme}
Carpentier, A., M.~Valko.
\newblock Extreme bandits.
\newblock \emph{Advances in Neural Information Processing Systems}, 27, 2014.

\bibitem{kaufmann2017monte}
Kaufmann, E., W.~M. Koolen.
\newblock Monte-carlo tree search by best arm identification.
\newblock \emph{Advances in Neural Information Processing Systems}, 30, 2017.

\bibitem{hu2021cascaded}
Hu, Y.-Q., X.-H. Liu, S.-Q. Li, et~al.
\newblock Cascaded algorithm selection with extreme-region ucb bandit.
\newblock \emph{IEEE Transactions on Pattern Analysis and Machine Intelligence}, 44(10):6782--6794, 2021.

\bibitem{rudolph1994convergence}
Rudolph, G.
\newblock Convergence analysis of canonical genetic algorithms.
\newblock \emph{IEEE transactions on neural networks}, 5(1):96--101, 1994.

\bibitem{langdon2013foundations}
Langdon, W.~B., R.~Poli.
\newblock \emph{Foundations of genetic programming}.
\newblock Springer Science \& Business Media, 2013.

\bibitem{williams1992simple}
Williams, R.~J.
\newblock Simple statistical gradient-following algorithms for connectionist reinforcement learning.
\newblock \emph{Machine learning}, 8:229--256, 1992.

\bibitem{li2024neural}
Li, W., W.~Li, L.~Yu, et~al.
\newblock A neural-guided dynamic symbolic network for exploring mathematical expressions from data.
\newblock In \emph{Proceedings of the 41st International Conference on Machine Learning}, pages 28222--28242. 2024.

\bibitem{tamar2015policy}
Tamar, A., Y.~Glassner, S.~Mannor.
\newblock Policy gradients beyond expectations: Conditional value-at-risk.
\newblock Citeseer, 2015.

\bibitem{rajeswaran2016epopt}
Rajeswaran, A., S.~Ghotra, B.~Ravindran, et~al.
\newblock Epopt: Learning robust neural network policies using model ensembles.
\newblock \emph{arXiv preprint arXiv:1610.01283}, 2016.

\bibitem{lai1985asymptotically}
Lai, T.~L., H.~Robbins.
\newblock Asymptotically efficient adaptive allocation rules.
\newblock \emph{Advances in applied mathematics}, 6(1):4--22, 1985.

\bibitem{kaufmann2016complexity}
Kaufmann, E., O.~Capp{\'e}, A.~Garivier.
\newblock On the complexity of best-arm identification in multi-armed bandit models.
\newblock \emph{The Journal of Machine Learning Research}, 17(1):1--42, 2016.

\bibitem{bubeck2012regret}
Bubeck, S., N.~Cesa-Bianchi, et~al.
\newblock Regret analysis of stochastic and nonstochastic multi-armed bandit problems.
\newblock \emph{Foundations and Trends{\textregistered} in Machine Learning}, 5(1):1--122, 2012.

\bibitem{fortin2012deap}
Fortin, F.-A., F.-M. De~Rainville, M.-A.~G. Gardner, et~al.
\newblock Deap: Evolutionary algorithms made easy.
\newblock \emph{The Journal of Machine Learning Research}, 13(1):2171--2175, 2012.

\bibitem{jin2019bayesian}
Jin, Y., W.~Fu, J.~Kang, et~al.
\newblock Bayesian symbolic regression.
\newblock \emph{arXiv preprint arXiv:1910.08892}, 2019.

\bibitem{olson2017pmlb}
Olson, R.~S., W.~La~Cava, P.~Orzechowski, et~al.
\newblock Pmlb: a large benchmark suite for machine learning evaluation and comparison.
\newblock \emph{BioData mining}, 10:1--13, 2017.

\bibitem{la2016inference}
La~Cava, W., K.~Danai, L.~Spector.
\newblock Inference of compact nonlinear dynamic models by epigenetic local search.
\newblock \emph{Engineering Applications of Artificial Intelligence}, 55:292--306, 2016.

\bibitem{fletcher1987practical}
Fletcher, R.
\newblock Practical methods of optimization, 1987.

\bibitem{virtanen2020scipy}
Virtanen, P., R.~Gommers, T.~E. Oliphant, et~al.
\newblock Scipy 1.0: fundamental algorithms for scientific computing in python.
\newblock \emph{Nature methods}, 17(3):261--272, 2020.

\bibitem{ahn2020guiding}
Ahn, S., J.~Kim, H.~Lee, et~al.
\newblock Guiding deep molecular optimization with genetic exploration.
\newblock \emph{Advances in neural information processing systems}, 33:12008--12021, 2020.

\bibitem{gautschi1959some}
Gautschi, W.
\newblock Some elementary inequalities relating to the gamma and incomplete gamma function.
\newblock \emph{J. Math. Phys}, 38(1):77--81, 1959.

\bibitem{bubeck2010bandits}
Bubeck, S.
\newblock \emph{Bandits games and clustering foundations}.
\newblock Ph.D. thesis, Universit{\'e} des Sciences et Technologie de Lille-Lille I, 2010.

\bibitem{meurer2017sympy}
Meurer, A., C.~P. Smith, M.~Paprocki, et~al.
\newblock Sympy: symbolic computing in python.
\newblock \emph{PeerJ Computer Science}, 3:e103, 2017.

\end{thebibliography}

\newpage
\appendix
\section{Polynomial-like Arms}
\label{sec:polynomial}
In this section, we study the extreme bandit problem with Polynomial-like arms, assuming that the reward distribution of each arm exhibits polynomial decay
near its maximum $b$:
$$P(X> x; a,b) \sim   (1 - \frac{x}{b})^a \quad \textrm{with} \quad a\geq 1  \,\textrm{ and } b\leq 1.$$ 
Specifically, we consider beta distribution. We have the following preliminary about beta distribution 
\begin{lemma}
    For beta distribution with cumulative density function 
    $$P(x; a,b) = 1 - (1 - \frac{x}{b})^a \quad \textrm{with} \quad a\geq 1  \,\textrm{ and } b\leq 1.$$ 
The probability density function (PDF) of its maximum over $n$ samples is $\rho_n$, its expectation satisfies
$$
\E_{\rho_n}[x] =  b\Bigl(1 - F(a,n) \Bigr) \quad \textrm{where} \quad F(a,n)= \Gamma(\frac{1}{a} + 1) \frac{\Gamma(n+1)}{\Gamma(\frac{1}{a} + n + 1)}
$$
And $F(a,n)$ satisfies
\begin{align}
& \Gamma(\frac{1}{a} + 1) (n + \frac{1}{a} + 1)^{-\frac{1}{a}}  \leq F(a,n) \leq  \Gamma(\frac{1}{a} + 1) (n + \frac{1}{a})^{-\frac{1}{a}} \label{eq:beta-F-bounds}\\
& F(a,n_1) - F(a,n) 
    \leq \Gamma(\frac{1}{a} + 1)\frac{n - n_1 + 1}{(n_1 + \frac{1}{a})  (n + \frac{1}{a})^{\frac{1}{a}}} \quad (n_1 \leq n)  \label{eq:beta_dF}
\end{align}
The density is highly concentrated near its maximum, $b$, with 
\begin{align}
\label{eq:uniform_extreme_beta}
    &P_n\Bigl(x  + \epsilon \leq b \Bigr) = \Bigl(1 - \bigl(\frac{\epsilon}{b}\bigr)^a\Bigr)^n
    \leq e^{-n (\frac{\epsilon}{b} )^a}
\end{align}
\end{lemma}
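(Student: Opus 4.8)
The proof breaks into four pieces, established in this order: (i) the closed form $\E_{\rho_n}[x]=b\bigl(1-F(a,n)\bigr)$; (ii) the two–sided bound \eqref{eq:beta-F-bounds} on $F(a,n)$; (iii) the difference bound \eqref{eq:beta_dF}; and (iv) the concentration estimate \eqref{eq:uniform_extreme_beta}. Since (iii) and (iv) follow quickly from (i)–(ii), the real work sits in the first two parts. \textbf{Mean of the maximum.} The $n$ i.i.d.\ samples have CDF $P(x;a,b)=1-(1-x/b)^a$ on $[0,b]$, so the maximum $M_n$ has CDF $P(x;a,b)^n$, and I would write $\E[M_n]=\int_0^b\bigl(1-P(x;a,b)^n\bigr)\,dx$. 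The substitution $u=1-x/b$ turns this into $b\int_0^1\bigl(1-(1-u^a)^n\bigr)\,du=b\bigl(1-\int_0^1(1-u^a)^n\,du\bigr)$, and a further substitution $v=u^a$ rewrites $\int_0^1(1-u^a)^n\,du$ as $\tfrac1a B(\tfrac1a,n+1)=\tfrac1a\,\Gamma(\tfrac1a)\Gamma(n+1)/\Gamma(\tfrac1a+n+1)$; the identity $\tfrac1a\Gamma(\tfrac1a)=\Gamma(\tfrac1a+1)$ identifies this last expression with $F(a,n)$, giving $\E_{\rho_n}[x]=b\bigl(1-F(a,n)\bigr)$.

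\textbf{Bounds on $F(a,n)$.} Write $F(a,n)=\Gamma(\tfrac1a+1)\,g(n)$ with $g(n)=\Gamma(n+1)/\Gamma(n+1+\tfrac1a)$, so that \eqref{eq:beta-F-bounds} is equivalent to $(n+\tfrac1a)^{1/a}\le g(n)^{-1}\le(n+\tfrac1a+1)^{1/a}$. Both inequalities I would derive from log-convexity of $\Gamma$. Put $s=\tfrac1a\in(0,1]$ and $m=n+1$. The upper bound on $g(n)^{-1}$ uses the convex combination $m+s=(1-s)m+s(m+1)$, which through Jensen's inequality for $\ln\Gamma$ yields $\Gamma(m+s)/\Gamma(m)\le m^s\le(m+s)^s$. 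The lower bound uses the convex combination $m=(1-s)(m+s)+s(m+s-1)$, which gives $\ln\Gamma(m)\le\ln\Gamma(m+s)-s\ln(m+s-1)$, hence $\Gamma(m+s)/\Gamma(m)\ge(m+s-1)^s=(n+\tfrac1a)^{s}$. Translating back through $m=n+1$ gives \eqref{eq:beta-F-bounds}.

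\textbf{Difference bound and concentration.} For $n_1\le n$ the ratio telescopes, $g(n_1)/g(n)=\prod_{j=n_1+1}^n(1+\tfrac{1/a}{j})$, so $F(a,n_1)-F(a,n)=\Gamma(\tfrac1a+1)\,g(n)\bigl(\prod_{j=n_1+1}^n(1+\tfrac{1/a}{j})-1\bigr)$. Since $\tfrac1a\le 1$, each factor obeys $1+\tfrac{1/a}{j}\le\tfrac{j+1}{j}$, so the product is at most the telescoping product $\prod_{j=n_1+1}^n\tfrac{j+1}{j}=\tfrac{n+1}{n_1+1}$, giving $\prod-1\le\tfrac{n-n_1}{n_1+1}$. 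Combining this with the upper bound $g(n)\le(n+\tfrac1a)^{-1/a}$ from the previous step and the trivial inequalities $n-n_1\le n-n_1+1$ and $n_1+\tfrac1a\le n_1+1$ produces exactly \eqref{eq:beta_dF}. Finally \eqref{eq:uniform_extreme_beta} is immediate: $P_n(M_n+\epsilon\le b)=P(b-\epsilon;a,b)^n=(1-(\epsilon/b)^a)^n$, and $1-t\le e^{-t}$ gives the exponential form.

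\textbf{Main obstacle.} The only delicate points are the two Gamma-ratio estimates (best handled via the log-convexity / Jensen argument above rather than invoking named Gautschi/Wendel bounds) and, in the difference bound, choosing the telescoping estimate $\prod(1+\tfrac1j)=\tfrac{n+1}{n_1+1}$: cruder choices such as $e^{\sum1/j}$ or a direct Bernoulli bound on $(1-t)^{n_1}-(1-t)^n$ produce the wrong power of $n$ and fail to give the stated constant. Everything else is routine Beta-function bookkeeping.
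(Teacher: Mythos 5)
Your proof is correct and shares the paper's skeleton: reduce the mean of the maximum to a Beta integral so that $\E_{\rho_n}[x]=b\bigl(1-F(a,n)\bigr)$ with $F(a,n)=\Gamma(\frac1a+1)\Gamma(n+1)/\Gamma(\frac1a+n+1)$, then control the ratio of Gamma functions. Three of your sub-steps differ from the paper's, all to good effect. (i) You compute the mean via the layer-cake identity $\E[M_n]=\int_0^b\bigl(1-P(x;a,b)^n\bigr)\,dx$ rather than integrating $x$ against the density $\rho_n$ as the paper does; both substitutions land on the same Beta integral and the same closed form. (ii) The paper obtains the two-sided bound \eqref{eq:beta-F-bounds} by citing a Gautschi-type inequality for $\Gamma(n+1)/\Gamma(n+1+\frac1a)$, whereas you rederive it from log-convexity of $\Gamma$ using the two convex combinations $m+s=(1-s)m+s(m+1)$ and $m=(1-s)(m+s)+s(m+s-1)$ with $m=n+1$, $s=\frac1a$; this makes the lemma self-contained at negligible cost. (iii) For the difference bound \eqref{eq:beta_dF} the paper subtracts its lower bound at $n$ from its upper bound at $n_1$ and then estimates a difference of powers $x^{-1/a}$ (its displayed chain in fact misplaces the ``$+1$'', writing $(n_1+\frac1a+1)^{-\frac1a}$ where the upper bound supplies $(n_1+\frac1a)^{-\frac1a}$), whereas you telescope the ratio of $g(n)=\Gamma(n+1)/\Gamma(n+1+\frac1a)$ as $g(n_1)/g(n)=\prod_{j=n_1+1}^{n}\bigl(1+\frac{1/a}{j}\bigr)\le\frac{n+1}{n_1+1}$ (valid since $a\ge1$) and multiply by $g(n)\le(n+\frac1a)^{-\frac1a}$; this yields the slightly sharper $\Gamma(\frac1a+1)\frac{n-n_1}{(n_1+1)(n+\frac1a)^{1/a}}$ and hence \eqref{eq:beta_dF} directly, sidestepping the bookkeeping that muddies the paper's display. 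The concentration estimate \eqref{eq:uniform_extreme_beta} is handled identically in both proofs.
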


\begin{proof}
For the beta distribution $P(x; a, b)$, the probability density function (PDF) is given by: 
$$\rho(x; a,b) =  \frac{a}{b } (1 - \frac{x}{b})^{a-1}\, (0 \leq x \leq b).$$ 
The PDF of its maximum over $n$ samples is 
$$
\rho(x) = \frac{na}{b } (1 - \frac{x}{b})^{a-1}\Bigl(1 - (1 - \frac{x}{b})^a\Bigr)^{n-1}\, (0 \leq x \leq b)
$$ with expectation 
\begin{equation}
    \begin{aligned}
        \E_{\rho_n}[x] 
        &= n\int_{0}^{b}  x \frac{a}{b } (1 - \frac{x}{b})^{a-1}\Bigl(1 - (1 - \frac{x}{b})^a\Bigr)^{n-1}  dx \\
        &= nb\int_{0}^{1}  (1 - (1-y)^{\frac{1}{a}}) y^{n-1} dy  \qquad (y = 1 - (1 - \frac{x}{b})^a) \\
        &= b\Bigl(1 - n\frac{\Gamma(\frac{1}{a} + 1)\Gamma(n)}{\Gamma(\frac{1}{a} + n + 1)} \Bigr)\\
    \end{aligned}
\end{equation}
Let denote $$
F(a, n) = \Gamma(\frac{1}{a} + 1) \frac{\Gamma(n+1)}{\Gamma(\frac{1}{a} + n + 1)},
$$
we have $$\E_{\rho_n}[x] = b(1 - F(a,n)).$$
By using the following inequality \cite{gautschi1959some} about $\Gamma$ function
\begin{equation}
  (n + \frac{1}{a} + 1)^{-\frac{1}{a}} \leq  \frac{\Gamma(n+1)}{\Gamma(\frac{1}{a} + n + 1)} \leq (n + \frac{1}{a})^{-\frac{1}{a}}
\end{equation}
We have \Cref{eq:beta-F-bounds}. For $n_1 \leq n$, we have 
\begin{equation}
    F(a,n_1) - F(a,n) 
    \leq \Gamma(\frac{1}{a} + 1) \Bigl((n_1 + \frac{1}{a} + 1)^{-\frac{1}{a}} - (n + \frac{1}{a})^{-\frac{1}{a}}\Bigr)
    \leq \Gamma(\frac{1}{a} + 1)\frac{n - n_1 + 1}{(n_1 + \frac{1}{a})  (n + \frac{1}{a})^{\frac{1}{a}}}
\end{equation} 

\end{proof}

\subsection{Upper Bounds}
\label{ssec:polynomial-upper}
For the extreme bandit problem, we have the following theorem about the performance gap. 

\begin{theorem}
Assume there are $K$ arms, where the rewards for each arm follow a distribution $X_{k,t} \sim P(x; a_k, b_k)$ supported on $[0, b_k]$:
$$P(x; a,b) = 1 - (1 - \frac{x}{b})^a \quad \textrm{with} \quad a\geq 1  \,\textrm{ and } b\leq 1.$$ 
We further assume that the first arm is the optimal, meaning $\Delta_k =b_1 - b_k > 0, \forall k \geq 2$. Then the performance gap satisfies

\begin{equation*}
    G(T) \leq   \frac{b_1}{(T + \frac{1}{a_1})^{\frac{1}{a_1}}} 
\end{equation*}

\end{theorem}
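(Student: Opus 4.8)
The plan is to bound the performance gap $G(T)$ directly using the preliminary lemma on the beta distribution, since $G(T)$ is by definition independent of the allocation strategy and only involves the behavior of the optimal arm. Recall
\[
G(T) = \max_{k} \{b_k\} - \max_k \Bigl\{ \E\Bigl[\max_{t\le T} X_{k,t}\Bigr] \Bigr\}.
\]
Since arm $1$ is optimal, $\max_k\{b_k\} = b_1$, and dropping all but the first arm in the second maximum can only decrease it, so
\[
G(T) \le b_1 - \E\Bigl[\max_{t\le T} X_{1,t}\Bigr].
\]
Thus it suffices to lower-bound the expected maximum of $T$ i.i.d.\ draws from $P(x; a_1, b_1)$.

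Next I would invoke the lemma: if $\rho_n$ is the density of the maximum of $n$ samples from $P(x;a,b)$, then $\E_{\rho_n}[x] = b(1 - F(a,n))$ with $F(a,n) = \Gamma(\tfrac1a+1)\tfrac{\Gamma(n+1)}{\Gamma(\tfrac1a+n+1)}$. Applying this with $n = T$, $a = a_1$, $b = b_1$ gives
\[
\E\Bigl[\max_{t\le T} X_{1,t}\Bigr] = b_1\bigl(1 - F(a_1, T)\bigr),
\]
so
\[
G(T) \le b_1 F(a_1, T).
\]
Now I would apply the upper bound \eqref{eq:beta-F-bounds}, namely $F(a,n) \le \Gamma(\tfrac1a + 1)(n + \tfrac1a)^{-1/a}$, with $n = T$ and $a = a_1$, to obtain
\[
G(T) \le b_1 \,\Gamma\bigl(\tfrac{1}{a_1} + 1\bigr)\,\bigl(T + \tfrac{1}{a_1}\bigr)^{-1/a_1}.
\]
The final step is to observe that $\Gamma(\tfrac{1}{a_1} + 1) \le 1$ for $a_1 \ge 1$ (since $\tfrac{1}{a_1} \in (0,1]$ and $\Gamma$ attains its minimum value, which is below $1$, on $(1,2)$, and $\Gamma(1) = \Gamma(2) = 1$; more precisely $\Gamma(x) \le 1$ for $x \in [1,2]$). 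This yields
\[
G(T) \le \frac{b_1}{(T + \tfrac{1}{a_1})^{1/a_1}},
\]
which is exactly the claimed bound.

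The main obstacle, such as it is, is the identification $\E[\max_{t\le T} X_{1,t}] = b_1(1 - F(a_1,T))$, which is really a computation that has already been discharged in the lemma proof via the substitution $y = 1 - (1-x/b)^{a}$ and recognition of the resulting integral as a Beta function. Given the lemma, the remaining steps are elementary: monotonicity of the max to isolate the optimal arm, the stated $\Gamma$-function bound, and the crude but sufficient estimate $\Gamma(1 + 1/a_1) \le 1$ for $a_1 \ge 1$. No condition on the hyperparameters $c, \gamma$ is needed here, since the performance gap is a strategy-independent quantity.
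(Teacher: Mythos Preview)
Your proposal is correct and follows essentially the same approach as the paper's own proof: bound $G(T)$ by $b_1 - \E[\max_{t\le T} X_{1,t}] = b_1 F(a_1,T)$, apply the Gautschi-type bound $F(a,n)\le \Gamma(\tfrac1a+1)(n+\tfrac1a)^{-1/a}$ from the lemma, and finish with $\Gamma(1+\tfrac{1}{a_1})\le 1$ for $a_1\ge 1$. Your write-up is in fact more detailed than the paper's, which condenses all of this into a single displayed inequality.
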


\begin{proof}
By using \Cref{eq:beta-F-bounds}, we have 
\begin{equation*}
    G(T) = b_1  - \max_{k=1}^K \Bigl\{ \E_{\rho_T(x;a_k,b_k)}[x] \Bigr\} \leq b_1  -  \E_{\rho_T(x;a_1,b_1)}[x]  = b_1 F(a_1, T) \leq \frac{b_1\Gamma(\frac{1}{a_1} + 1)} {(T + \frac{1}{a})^{-\frac{1}{a}}} 
\end{equation*}
Using the fact that $\Gamma(\frac{1}{a_1} + 1) \leq 1$ leads to bound for the performance gap.
\end{proof}

Finally, we will prove \Cref{theorem:beta_distribution_upper} about the regret bound related to our allocation strategy \Cref{eq:UCB-extreme}.

\begin{theorem}
Assume there are $K$ arms, where the rewards for each arm follow a distribution $X_{k,t} \sim P(x; a_k, b_k)$ supported on $[0, b_k]$:
$$P(x; a,b) = 1 - (1 - \frac{x}{b})^a \quad \textrm{with} \quad a\geq 1  \,\textrm{ and } b\leq 1.$$ 
We further assume that the first arm is the optimal, meaning $\Delta_k =b_1 - b_k > 0, \forall k \geq 2$, and denote 
\begin{equation}
   C = \sum_{k=2}^K  \Bigl(\frac{2 \af }{\Delta_k}\Bigr)^{1/\gamma}.
\end{equation}
We consider the allocation strategy
\begin{equation}
    I_{T+1} := \textrm{arg max}_k \Bigl\{ \hat{Q}_{k, T_{k}} +  2\af\Bigl(\frac{\ln T }{T_{k,T}}\Bigr)^{\gamma}\Bigr\} \qquad \textrm{with} \qquad \hat{Q}_{k, T_{k,T}}  = \max_{t:I_t =k} X_{I_t, t}
\end{equation}
with $\frac{1}{\gamma} \geq a_1$ and $2^{a_1}\af^{\frac{1}{\gamma}} \geq 2 + \frac{1}{a_1}$.
Then for $T \geq C\ln T + K$, the regret bound is given by
\begin{equation}
    R(T) \leq K^2 b_1 \frac{2^{a_1}\af^{\frac{1}{\gamma}} - 1}{2^{a_1}\af^{\frac{1}{\gamma}} - 2} \frac{  C\ln T + 2 K}{(T - C\ln T - K)^{1 + a_1}} 
\end{equation}
\end{theorem}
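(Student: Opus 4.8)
The plan is to bound the regret $R(T)$ by controlling how often each suboptimal arm $k\geq 2$ is pulled. Since the reward distribution of the optimal arm has polynomial tail $a_1$, the key quantities are: (i) $\hat Q_{k,T_{k,T}}$, the running maximum reward from arm $k$, which lies in $[0,b_k]$ and concentrates near $b_k$ at rate governed by $a_k$ via the estimate \eqref{eq:uniform_extreme_beta}; and (ii) the exploration bonus $2\af(\ln T/T_{k,T})^\gamma$. First I would show that if arm $k\geq 2$ has been pulled $T_{k,T} \geq (2\af/\Delta_k)^{1/\gamma}\ln T$ times, then its index $\hat Q_{k,T_{k,T}} + 2\af(\ln T/T_{k,T})^\gamma$ is, with high probability, below $b_1 \leq \hat Q_{1,T_{1,T}} + (\text{bonus})$, because the bonus has shrunk below $\Delta_k$ and $\hat Q_{k,\cdot}\leq b_k = b_1 - \Delta_k$. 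This is the standard UCB decomposition but adapted to the running-max estimator instead of the empirical mean; the role of Hoeffding's inequality is played by the concentration bound \eqref{eq:uniform_extreme_beta}, which says $P_n(x + \epsilon \leq b) \leq e^{-n(\epsilon/b)^a}$.

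Next I would aggregate these per-arm bounds to conclude that, outside an event of small probability, after $T \geq C\ln T + K$ rounds the total number of pulls of suboptimal arms is at most $C\ln T$ (with $C = \sum_{k=2}^K (2\af/\Delta_k)^{1/\gamma}$ as in \eqref{eq:constant-C}), so the optimal arm is pulled at least $T - C\ln T - K$ times. Conditioned on this, $\E[\max_{t\leq T}X_{1,t}]$ is close to $b_1$: by the performance-gap computation (the first theorem in \Cref{ssec:polynomial-upper}, i.e. $b_1 - \E_{\rho_n}[x] = b_1 F(a_1,n) \leq b_1(n+1/a_1)^{-1/a_1}$), having $n \geq T - C\ln T - K$ pulls of the optimal arm already guarantees the realized running max is within $O((T-C\ln T-K)^{-1/a_1})$ of $b_1$ in expectation. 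The regret $R(T) = \max_k\{\E[\max_{t\leq T}X_{k,t}]\} - \E[\max_{t\leq T}X_{I_t,t}]$ is then bounded by the probability of the bad event times the maximum possible loss $b_1$, plus the contribution on the good event; the hypotheses $\frac{1}{\gamma}\geq a_1$ and $2^{a_1}\af^{1/\gamma}\geq 2 + \frac1{a_1}$ are exactly what is needed to make the union bound over rounds $t \leq T$ of the failure probabilities sum to something like $K^2 b_1 \frac{2^{a_1}\af^{1/\gamma}-1}{2^{a_1}\af^{1/\gamma}-2}\frac{C\ln T + 2K}{(T-C\ln T-K)^{1+1/a_1}}$ — the geometric-series factor $\frac{2^{a_1}\af^{1/\gamma}-1}{2^{a_1}\af^{1/\gamma}-2}$ comes from summing $\sum_t t^{-2^{a_1}\af^{1/\gamma}+1}$-type terms, which converges precisely because $2^{a_1}\af^{1/\gamma} > 2$.

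I expect the main obstacle to be the careful bookkeeping in the concentration step: the running-max estimator $\hat Q_{k,T_{k,T}}$ is a maximum over a random number of samples $T_{k,T}$, so one must either condition on $T_{k,T}$ and union-bound over its possible values, or use a peeling/stopping-time argument. Getting the constants to come out as stated — in particular tracking the factor $2^{a_1}$ (which enters because the bonus being below $\Delta_k$ at $T_{k,T} = (2\af/\Delta_k)^{1/\gamma}\ln T$ leaves a factor-of-2 slack that must be spent on the exponent) and ensuring the bound on $\E[T_{k,T}]$ feeds correctly into the final $(T - C\ln T - K)^{1+1/a_1}$ denominator — is the delicate part. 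A secondary subtlety is that the bound is only claimed for $T \geq C\ln T + K$, so I would state this self-consistently and not attempt to optimize the small-$T$ regime.
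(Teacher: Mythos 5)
Your plan follows the same route as the paper's proof: reduce the regret to $b_1\,\E[F(a_1,T_{1,T})-F(a_1,T)]$ using the fact that the running max over all rounds dominates the running max restricted to arm $1$; use the extreme-value concentration \eqref{eq:uniform_extreme_beta} in place of Hoeffding to show that whenever a suboptimal arm $k$ is pulled beyond $(2\af/\Delta_k)^{1/\gamma}\ln t$ times the optimal arm's index must have failed to reach $b_1$, an event of probability at most $t^{-2^{a_1}\af^{1/\gamma}}$; and sum the resulting $t^{-p}$ tails to produce the factor $\frac{2^{a_1}\af^{1/\gamma}-1}{2^{a_1}\af^{1/\gamma}-2}$. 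Your identification of where the two hypotheses on $(\af,\gamma)$ enter, and of the random-sample-size issue for $\hat Q_{k,T_{k,T}}$ (handled in the paper by conditioning on the last round at which arm $k$ is pulled), also matches.

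The one step that would fail as literally written is bounding the bad-event contribution by ``the probability of the bad event times the maximum possible loss $b_1$.'' The per-round failure probabilities $t^{-2^{a_1}\af^{1/\gamma}}$ summed over all $t\le T$ give only an $O(1)$ bound on $P(B_T^c)$ (the series $\sum_t t^{-p}$ converges to a constant, dominated by small $t$), so $P(\text{bad})\cdot b_1$ cannot yield the claimed $(T-C\ln T-K)^{-(1+1/a_1)}$ rate. The paper instead bounds the product $\E[F(a_1,T_{1,T})-F(a_1,T)\mid B_k,t]\cdot P(B_k,t)$ jointly: conditioned on the last pull of arm $k$ occurring at round $t+1$, every suboptimal arm satisfies $T_{j,T}\le T_{k,T}\le t+1$ plus a logarithmic term, so $T_{1,T}\ge T-C\ln T-(K-1)t$ and the conditional loss is itself $O\bigl((C\ln T+Kt)/T^{1+1/a_1}\bigr)$ for small $t$; only for $t\gtrsim (T-C\ln T)/K$ is the crude bound by $P(B_k,t)$ used, where $2^{a_1}\af^{1/\gamma}-1\ge 1+\frac{1}{a_1}$ makes the tail small enough. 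Since you already anticipate summing $t\cdot t^{-p}$-type terms and flag this bookkeeping as the delicate part, this is a repair of one sentence rather than a change of strategy, but without the two-range split the stated bound does not follow.
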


\begin{proof}
We will first prove that under the conditions
\begin{equation}
      \E\Bigl[\max_{t\leq T} X_{1,t}\Bigr]  \geq  \E\Bigl[\max_{t\leq T} X_{k,t}\Bigr]
\end{equation}
which is equivalent to 

\begin{equation}\begin{aligned}
b_1 - b_1 F(a_1, T) - b_k + b_k F(a_k, T)  
&\geq \Delta_k - b_1 \Bigl(T + \frac{1}{a_1}\Bigr)^{-\frac{1}{a_1}} \qquad \textrm{using \Cref{eq:beta-F-bounds} and  } \Gamma(\frac{1}{a_1} + 1) \leq 1 \\
&\geq \Delta_k - b_1  \Bigl(\frac{2 \af }{\Delta_k} \Bigr)^{-\frac{1}{\gamma a_1}}  
     \qquad  \textrm{using } T \geq C \ln T + K \geq  \Bigl(\frac{2 \af }{\Delta_k}\Bigr)^{1/\gamma}  \\
&= \Delta_k - \Delta_k b_1   \frac{\Delta_k^{\frac{1}{\gamma a_1} - 1}} { (2 \af )^{\frac{1}{\gamma a_1}}}  \\
&\geq  0
\end{aligned} \end{equation}
In the last inequality, we used $\frac{1}{\gamma} \geq a_1$ and $2\af \geq 1$, which is derived from  $ \frac{(2\af)^{\frac{1}{\gamma}}}{2^{ \frac{1}{\gamma} - a_1}} = 2^{a_1}\af^{\frac{1}{\gamma}} \geq 2 + \frac{1}{a_1}$.
Then the regret for the first $T$ round is upper bounded by 
\begin{equation}
\label{eq:regret_upper_1}
\begin{aligned}
    R(T) &\leq   
    b_1\Bigl(\E\Bigl[F(a_1, T_{1,T})\Bigr]  - F(a_1, T)\Bigr)
\end{aligned}
\end{equation}
Here $T_{k,T}$ denotes the number of rounds in which arm $k$ is chosen, and we used the fact that the extreme reward over all rounds is larger than the extreme reward within each individual arm:
$$
\E\Bigl[\max_{t\leq T} X_{I_t,t}\Bigr] \geq \max_{k=1}^{K} \E\Bigl[\max_{t\leq T, I_t = k } X_{I_t,t}\Bigr]
$$
Our goal is to establish an upper bound on the regret in \Cref{eq:regret_upper_1}.

Assume that at the $t+1$-th round, when arm $k\neq 1$ is pulled, the allocation strategy given by \Cref{eq:UCB-extreme} implies the following inequality
\begin{align}
\label{eq:I_t=k_allocation}
    1 + 2\af\Bigl(\frac{\ln t }{T_{k,t}}\Bigr)^{\gamma}  \geq \hat{Q}_{k, T_{k,t}} + 2\af\Bigl(\frac{\ln t }{T_{k,t}}\Bigr)^{\gamma} \geq  \hat{Q}_{1, T_{1,t}} +  2\af\Bigl(\frac{\ln t }{T_{1,t}}\Bigr)^{\gamma}  \geq 2\af\Bigl(\frac{\ln t }{T_{1,t}}\Bigr)^{\gamma} .
\end{align}
We define the event $ A_t: \hat{Q}_{1, T_{1,t}} +  2\af\bigl(\frac{\ln t }{T_{k,t}}\bigr)^{\gamma} \leq b_1 $. The probability of this event satisfies
\begin{equation}
\label{eq:A_T_probability}
   P( A_t ) = 
   \Bigl(1 - \Bigl(\frac{2\af}{b_1}\bigl(\frac{\ln t }{T_{k,t}}\bigr)^{\gamma} \Bigr)^{a_1}\Bigr)^{T_{1,t}} 
   \leq e^{- (\frac{2\af}{b_1})^{a_1} (\ln t)^{\gamma a_1} T_{1,t}^{1 - \gamma a_1}}.
\end{equation}
Here, we used the fact that $\hat{Q}_{1, T_{1,t}}$ is the extreme value, as described in \Cref{eq:uniform_extreme_beta}. When the event $A_t$ does not occur, the allocation strategy implies
    \begin{align*}
    b_k + 2\af\Bigl(\frac{\ln t }{T_{k,t}}\Bigr)^{\gamma} \geq \hat{Q}_{k, T_{k,t}} + 2\af\Bigl(\frac{\ln t }{T_{k,t}}\Bigr)^{\gamma} \geq  \hat{Q}_{1, T_{1,t}} +  2\af\Bigl(\frac{\ln t }{T_{k,t}}\Bigr)^{\gamma} \geq b_1,
\end{align*}
which leads to 
\begin{align}
\label{eq:T_k_bound}
    T_{k,t} \leq \Bigl(\frac{2 \af }{b_1 - b_k}\Bigr)^{1/\gamma} \ln t. 
\end{align}
When \Cref{eq:T_k_bound} is violated, that indicates that $A_t$ occurs, and the probability is at most as given in \Cref{eq:A_T_probability}, which is 
\begin{align}
\label{eq:I_t=k_probality}
P(I_{t+1} = k) \leq e^{- (\frac{2\af}{b_1})^{a_1} (\ln t)^{\gamma a_1} T_{1,t}^{1 - \gamma a_1}}.
\end{align}

Then, we begin estimating the regret from \Cref{eq:regret_upper_1}.
We denote the event $B_T$ as 
\begin{equation}
\label{eq:B_T}
    B_T:  \sum_{k=2}^K T_{k,T} \leq C\ln T + K - 1 \qquad \textrm{with} \qquad C = \sum_{k=2}^K  \Bigl(\frac{2 \af }{b_1 - b_k}\Bigr)^{1/\gamma} .
\end{equation}
And we decompose its complement as $B_T^c = \cup_{k=2}^{K} B_k$, where 
\begin{equation}
\label{eq:B_k}
B_k : \Bigl\{ k = \textrm{arg max}_{k=2}^K \Bigl\{ T_{k,T} -  \Bigl(\frac{2 \af }{b_1 - b_k}\Bigr)^{1/\gamma} \ln T   \Bigr\}
\quad \textrm{ and } \quad \sum_{k=2}^K T_{k,T} > C\ln T + K - 1  \Bigr\}
\end{equation}
Using the definition of $C$, under event $B_k$, we have
\begin{equation}
\label{eq:Bk-T_{k,T}}
  1 <   T_{k,T} -  \Bigl(\frac{2 \af }{b_1 - b_k}\Bigr)^{1/\gamma} \ln T
\end{equation}
and for all $j > 1$:
\begin{align}
\label{eq:Bk-T_{k,T}-2}
T_{j,T} -   \Bigl(\frac{2 \af }{b_1 - b_j}\Bigr)^{1/\gamma} \ln T
 \leq  
T_{k,T} -   \Bigl(\frac{2 \af }{b_1 - b_k}\Bigr)^{1/\gamma} \ln T  < T_{k,T} - 1.
\end{align}
Here we used $\Bigl(\frac{2 \af }{b_1 - b_k}\Bigr)^{1/\gamma} \ln T >  ( 2 \af )^{1/\gamma} \ln 2 > 1.$
We can now estimate the regret in \Cref{eq:regret_upper_1} using the following decomposition:
\begin{align}
\label{eq:regret_decomp}
   \E \Bigl[F(a_1, T_{1,T}) - F(a_1, T)\Bigr]
   &\leq   
   \E\Bigl[F(a_1, T_{1,T}) - F(a_1, T)|  B_T  \Bigr] P( B_T ) \\
   & + \sum_{k=2}^{K}\E\Bigl[F(a_1, T_{1,T}) - F(a_1, T)| B_k \Bigr] P(B_k) 
\end{align}
The intuition for the decomposition is as follows: in the first term, $T_{1,T}$ is large enough, and hence $F(a_1, T_{1,T}) - F(a_1, T)$ can be bounded using \Cref{eq:beta-F-bounds}. For the second term, $P(B_k)$ is small due to \Cref{eq:Bk-T_{k,T}} violates \Cref{eq:T_k_bound}. Next, we will estimate each term in the decomposition.

For the first term in \Cref{eq:regret_decomp}, 
under $B_T$, we have $T_{1,T} = T - \sum_{k=2}^K T_{k,T} \geq T - C\ln T - K + 1 $. By using \Cref{eq:beta_dF}, the first term satisfies 
\begin{equation}
    \begin{aligned}
\label{eq:first_term}
   \E[F(a_1, T_{1,T}) - F(a_1, T)|  B_T  ] P( B_T ) \leq \Gamma(\frac{1}{a_1} + 1)\frac{C \ln T + K}{\bigl(T- C \ln T - K +1 + \frac{1}{a_1}\bigr)(T+\frac{1}{a_1})^{\frac{1}{a_1}}}
    \end{aligned}
\end{equation}  

For the second term in \Cref{eq:regret_decomp}, we further decompose each $B_k\,(2\leq k)$ based on the last round number that arm $k$ is chosen. Under event $B_k$, let $t+1$ denote the last round that arm $k$ is chosen, we have 
\begin{align}
\label{eq:select_j}
    T_{k,t} = T_{k,T} - 1 >  \Bigl(\frac{2 \af }{b_1 - b_j}\Bigr)^{1/\gamma} 
  \ln T  >\Bigl(\frac{2 \af }{b_1 - b_j}\Bigr)^{1/\gamma} 
  \ln t 
\end{align}
here we used \Cref{eq:Bk-T_{k,T}} and $t+1 \leq T$, which violates \Cref{eq:T_k_bound}. Hence \Cref{eq:I_t=k_probality} holds and the probability is at most 
\begin{equation}
\label{eq:P_B_k_t}
\begin{aligned}
P(B_k, t) \leq e^{- (\frac{2\af}{b_1})^{a_1} (\ln t)^{\gamma a_1} T_{1,t}^{1 - \gamma a_1}} \leq t^{- \frac{ (2 \af)^\frac{1}{\gamma} }{b_1^{a_1}(1+b_1-b_k)^{\frac{1}{\gamma} - a_1}}} \leq t^{-2^{a_1}\af^{\frac{1}{\gamma}}}
\end{aligned}
\end{equation}
Here the second inequality is derived from $\gamma \leq \frac{1}{a_1}$ and
$$T_{1,t} \geq \Bigl(\frac{2 \af }{1 + b_1 - b_j}\Bigr)^{1/\gamma} \ln t,$$
which is obtained by replacing $T_{k,t}$ in \Cref{eq:I_t=k_allocation} as its lower bound in \Cref{eq:select_j}. 
The third inequality uses the fact that $b_1 \leq 1$.
Then each term in the second term in \Cref{eq:regret_decomp} can be decomposed as 
\begin{align}  
\label{eq:B_j_T_1}
\E[F(a_1, T_{1,T}) - F(a_1, T)| B_k ] P(B_k) = \sum_{t=1}^{T-1} \E[F(a_1, T_{1,T}) - F(a_1, T)| B_k, t ] P(B_k, t).
\end{align}
By combining $T_{k,t} > 0 $ from \Cref{eq:select_j} with the condition $t \geq T_{k,t}$, we set $t$ to start from $1$.

Then we divide the range of $t$ in \Cref{eq:B_j_T_1} into two parts.
When $t \geq \frac{T - C\ln T}{K}$, using \Cref{eq:P_B_k_t} leads to
\begin{equation}
\label{eq:second_term_1}
    \begin{aligned}
     \sum_{t\geq \frac{T - C\ln T }{K}}^{T-1} \E[F(a_1, T_{1,T}) - F(a_1, T)| B_j, t ] P(B_j, t) 
     &\leq  \sum_{t\geq \frac{T - C\ln T}{K}}^{T-1} t^{-2^{a_1}\af^\frac{1}{\gamma}}   
     \\
     &\leq \frac{2^{a_1}\af^\frac{1}{\gamma}}{2^{a_1}\af^\frac{1}{\gamma} - 1}\Bigl(\frac{K}{T - C\ln T}\Bigr)^{2^{a_1}\af^\frac{1}{\gamma} - 1}  
    \end{aligned}
\end{equation}  
Here we used 
\begin{equation}
\label{eq:sum_t-p}
    \sum_{t=t_0}^{t_1}
t^{- p} \leq t_0^{-p} + \int_{t = t_0}^{\infty} t^{-p} dt = \frac{p}{p-1} t_0^{-p+1}\qquad \forall p > 1.
\end{equation}

Combining the upper bound of $T_{j,T}$ in \Cref{eq:Bk-T_{k,T}-2}, and $t \geq T_{k,t} = T_{k,T} - 1$, we have 
\begin{equation}
\begin{aligned}
    T_{1,T} 
= T - \sum_{j=2}^{K} T_{j,T} 
\geq 
T - C \ln T - (K-1) (T_{k,T} - 1) \geq
T - C \ln T - (K-1) t\\
\end{aligned}
\end{equation}
When $t \leq \frac{T - C\ln T}{K}$, using \Cref{eq:beta_dF,eq:P_B_k_t} leads to
\begin{equation}  
\label{eq:second_term_2}
\begin{aligned}
\sum_{t=1}^{\frac{T - C\ln T}{K}} &\E[F(a_1, T_{1,T}) - F(a_1, T)| B_k, t ] P(B_k, t) 
\\
&\leq \Gamma(\frac{1}{a_1} + 1)\sum_{t=1}^{\frac{T - C\ln T}{K}}
\frac{
C\ln T + (K-1) t
}{(T  - C\ln T - (K-1) t  + \frac{1}{a_1})(T+\frac{1}{a_1})^{\frac{1}{a_1}}} 
t^{- 2^{a_1}\af^{\frac{1}{\gamma}}}
\\
&\leq \Gamma(\frac{1}{a_1} + 1)\sum_{t=1}^{\frac{T - C\ln T}{K}}
\frac{
C\ln T + (K-1) t  }{(\frac{T - C\ln T }{K}  + \frac{1}{a_1})(T+\frac{1}{a_1})^{\frac{1}{a_1}}} 
t^{- 2^{a_1}\af^{\frac{1}{\gamma}}}
\\
&\leq 
\Gamma(\frac{1}{a_1} + 1)\frac{KC\ln T + K^2  }{( T - C\ln T + \frac{K}{a_1})(T+\frac{1}{a_1})^{\frac{1}{a_1}}} \frac{2^{a_1}\af^{\frac{1}{\gamma}} - 1}{2^{a_1}\af^{\frac{1}{\gamma}} - 2}
\end{aligned}
\end{equation}
Here in the second inequality, we replaced $t$ in the denominator as $\frac{T - C\ln T - K}{K}$. In the last inequality, we used 
\Cref{eq:sum_t-p}.

Combining \Cref{eq:first_term,eq:second_term_1,eq:second_term_2}, we have the regret bound
\begin{equation}
\begin{aligned}
R(T) &\leq  b_1\Big(\E\Bigl[F(a_1, T_{1,T}) - F(a_1, T)\Bigr]\Bigr) 
\\
&\leq 
 b_1 \frac{C \ln T + K}{\bigl(T- C \ln T - K \bigr)^{1 + \frac{1}{a_1}}} \\
&+  
b_1(K-1) \Big( \frac{2^{a_1}\af^\frac{1}{\gamma}}{2^{a_1}\af^\frac{1}{\gamma} - 1}\Bigl(\frac{K}{T - C\ln T}\Bigr)^{2^{a_1}\af^\frac{1}{\gamma} - 1}    + 
\frac{2^{a_1}\af^{\frac{1}{\gamma}} - 1}{2^{a_1}\af^{\frac{1}{\gamma}} - 2}  \frac{KC\ln T + K^2  }{\bigl(T- C \ln T - K \bigr)^{1 + \frac{1}{a_1}}}   \Bigr)\\
&\leq 
 b_1 \frac{2^{a_1}\af^{\frac{1}{\gamma}} - 1}{2^{a_1}\af^{\frac{1}{\gamma}} - 2} \frac{K^2 C \ln T + 2K^3}{\bigl(T- C \ln T - K \bigr)^{1 + \frac{1}{a_1}}}
\end{aligned}
\end{equation}
Here in the second inequality, we replaced the denominators with the lower bound $\bigl(T- C \ln T - K \bigr)^{1 + \frac{1}{a_1}}$ and used the fact that $\Gamma(\frac{1}{a_1} + 1) \leq 1$ for all $a_1 \geq 1$.
In the third inequality, we used $T- C \ln T \geq K$, $2^{a_1}\af^\frac{1}{\gamma} - 1 \geq 1 + \frac{1}{a_1}$ and the following inequality
$$\Bigl(\frac{K}{T - C\ln T}\Bigr)^{2^{a_1}\af^\frac{1}{\gamma} - 1}  \leq \Bigl(\frac{K}{T - C\ln T}\Bigr)^{1 + \frac{1}{a_1}}$$
\end{proof}

\subsection{Lower Bounds}
\label{ssec:polynomial-lower}
In this section we will first prove the following theorem, and discuss the optimality of the regret bound.
\begin{theorem}[Polynomial-like Arms Lower Bounds]
Assume there are $K$ arms, where the rewards for each arm follow a distribution $X_{k,t} \sim P(x; a_k, b_k)$ supported on $[0, b_k]$:
$$P(x; a,b) = 1 - (1 - \frac{x}{b})^a \quad \textrm{with} \quad a\geq 1  \,\textrm{ and } b\leq 1.$$ 
We further assume that the first arm is the optimal, meaning $\Delta_k =b_1 - b_k > 0, \forall k \geq 2$, and $b_1 < 1$.
Consider a strategy that satisfies $\E[T_{k,T}] = o(T^a)$ as $T \rightarrow \infty$ for any arm $k$ with $\Delta_k > 0$, and any $a > 0$. Then, the following holds
\begin{equation}
\label{eq:E_T_k_LB}
    \lim \textrm{inf}_{T \rightarrow \infty}\E[T_{k,T}] \geq \frac{\ln T}{\textrm{KL}[P(x; a_k, b_k)\Vert P(x; a_1, b_1)]}
\end{equation}
where $\textrm{KL}$ denotes the Kullback–Leibler divergence (relative entropy) between the two distributions.
\end{theorem}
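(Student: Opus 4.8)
## Proof Proposal for the Lower Bound Theorem

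The plan is to follow the classical change-of-measure argument due to Lai and Robbins, in the form packaged by Kaufmann--Cappé--Garivier and Bubeck--Cesa-Bianchi. The setup reduces the extreme-bandit lower bound to a statement about how often a suboptimal arm must be sampled, which is exactly the content of \eqref{eq:E_T_k_LB}. First I would fix a suboptimal arm $k$ (so $\Delta_k > 0$) and construct an alternative bandit model $\nu'$ that agrees with the original model $\nu$ on every arm except arm $k$; on arm $k$ we replace $P(x; a_k, b_k)$ by a distribution $\tilde P$ close in KL divergence to $P(x; a_1, b_1)$ but with support endpoint strictly exceeding $b_1$, so that arm $k$ becomes the unique optimal arm under $\nu'$. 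Because the family \eqref{eq:beta_distribution} is parametrized smoothly by $(a,b)$ and $b_1 < 1$ leaves room to push the endpoint above $b_1$, such a $\tilde P$ exists with $\mathrm{KL}[P(x;a_k,b_k) \Vert \tilde P]$ arbitrarily close to $\mathrm{KL}[P(x;a_k,b_k)\Vert P(x;a_1,b_1)]$; I would introduce a parameter $\eta>0$ controlling this gap and let $\eta \to 0$ at the end.

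Next I would invoke the standard log-likelihood-ratio / divergence decomposition inequality: for any event $\mathcal{E}$ measurable with respect to the history up to time $T$,
\begin{equation}
\E_\nu[T_{k,T}] \cdot \mathrm{KL}[P(x;a_k,b_k)\Vert \tilde P] \;\geq\; \mathrm{kl}\bigl(\P_\nu(\mathcal{E}),\, \P_{\nu'}(\mathcal{E})\bigr),
\end{equation}
where $\mathrm{kl}(p,q)$ is the binary relative entropy; this is precisely the lemma one finds in Kaufmann--Cappé--Garivier (their Lemma~1) and follows from Wald's identity applied to the cumulative log-likelihood ratio restricted to arm $k$. I would then choose $\mathcal{E}$ to be the event $\{T_{k,T} \le T/2\}$ (or more simply an event detecting that arm $k$ is played $o(T)$ times). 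Under the hypothesis $\E[T_{k,T}] = o(T^\delta)$ for all $\delta>0$ in model $\nu$ (where arm $k$ is suboptimal) combined with the same consistency requirement transported to model $\nu'$ (where arm $1$ is now suboptimal, so $\E_{\nu'}[T_{1,T}] = o(T^\delta)$, hence $\E_{\nu'}[T_{k,T}] = T - o(T)$), Markov's inequality gives $\P_\nu(\mathcal{E}) \to 1$ and $\P_{\nu'}(\mathcal{E}) \to 0$. Therefore $\mathrm{kl}(\P_\nu(\mathcal{E}), \P_{\nu'}(\mathcal{E})) \geq (1+o(1))\ln(1/\P_{\nu'}(\mathcal{E}^c))$, and the last step is to lower bound $\P_{\nu'}(\mathcal{E}^c) = \P_{\nu'}(T_{k,T} > T/2)$; since $1 - \P_{\nu'}(T_{k,T}>T/2) \le \P_{\nu'}(T_{1,T} \ge T/2) \le \frac{2}{T}\E_{\nu'}[T_{1,T}] = o(T^{\delta-1})$, we get $\ln(1/\P_{\nu'}(\mathcal{E}^c)) \ge (1-\delta)\ln T (1+o(1))$. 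Dividing through by $\mathrm{KL}[P(x;a_k,b_k)\Vert\tilde P]$ and letting first $T\to\infty$, then $\delta\to 0$, then $\eta\to 0$ yields \eqref{eq:E_T_k_LB}.

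The main obstacle I anticipate is the change-of-measure construction step: I must verify that within the one-parameter-type family \eqref{eq:beta_distribution} (or a mild enlargement of it allowing the endpoint $b$ to range slightly above $b_1$, while keeping $b \le 1$ is \emph{not} required for the alternative model since the reward clipping at $1$ can be handled separately) there genuinely exists a distribution $\tilde P$ that (i) makes arm $k$ strictly optimal and (ii) has $\mathrm{KL}[P(x;a_k,b_k)\Vert \tilde P]$ continuous in the perturbation and converging to $\mathrm{KL}[P(x;a_k,b_k)\Vert P(x;a_1,b_1)]$ as the perturbation vanishes. The subtlety is that $P(x;a_k,b_k)$ is supported on $[0,b_k]$ while $P(x;a_1,b_1)$ is supported on $[0,b_1]$ with $b_k < b_1$, so the KL divergence $\mathrm{KL}[P(x;a_k,b_k)\Vert P(x;a_1,b_1)]$ is finite (the reference measure has larger support), and a perturbation that enlarges the endpoint past $b_1$ only \emph{helps} finiteness; I would make this continuity precise by writing the divergence as an explicit integral in the $(a,b)$ parametrization and applying dominated convergence. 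Everything else is a routine application of the cited lemmas from \cite{lai1985asymptotically,kaufmann2016complexity,bubeck2012regret}.
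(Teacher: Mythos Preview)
Your proposal is correct and takes a genuinely different technical route from the paper. The paper follows the original Lai--Robbins argument almost literally: it writes the explicit change-of-measure identity $P'(A)=\E[\mathbf{1}_A e^{-\widehat{\mathrm{KL}}_{T_{k,T}}}]$ with the empirical log-likelihood ratio $\widehat{\mathrm{KL}}_s$, introduces the event $A_T=\{T_{k,T}<f_T\text{ and }\widehat{\mathrm{KL}}_{T_{k,T}}\le(1-\tfrac{\epsilon}{2})\ln T\}$ where $f_T=\frac{1-\epsilon}{\mathrm{KL}(P_k\Vert P'_k)}\ln T$, shows $P(A_T)=o(1)$ via Markov under the alternative model, and then invokes a maximal strong law of large numbers to strip off the $\widehat{\mathrm{KL}}$ constraint and conclude $P(T_{k,T}<f_T)=o(1)$, from which $\E[T_{k,T}]\ge (1+o(1))f_T$ follows. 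You instead package all of this into the single Kaufmann--Capp\'e--Garivier inequality $\E_\nu[T_{k,T}]\cdot\mathrm{KL}\ge\mathrm{kl}(p,q)$ with $\mathcal{E}=\{T_{k,T}\le T/2\}$, which bypasses any almost-sure convergence or maximal-inequality machinery entirely. Both arguments rest on the same alternative-model construction (perturb the endpoint of arm $k$ past $b_1$, which is exactly where the hypothesis $b_1<1$ is used, so your worry about leaving the family is unfounded), and both must invoke the consistency assumption on the perturbed model. Your route is shorter and more modular; the paper's is more self-contained.

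Two small slips to fix. First, your binary-KL asymptotic should read $\mathrm{kl}(p,q)\ge(1+o(1))\ln\bigl(1/\P_{\nu'}(\mathcal{E})\bigr)$, not $\ln\bigl(1/\P_{\nu'}(\mathcal{E}^c)\bigr)$: with $p\to1$ and $q\to0$ the dominant term is $p\ln(p/q)\sim\ln(1/q)$, and it is precisely $q=\P_{\nu'}(\mathcal{E})$ that you then upper-bound by $o(T^{\delta-1})$. Second, for $K>2$ the implication $T_{k,T}\le T/2\Rightarrow T_{1,T}\ge T/2$ fails; replace $T_{1,T}$ by $\sum_{j\ne k}T_{j,T}$, whose expectation under $\nu'$ is still $o(T^\delta)$ since every $j\ne k$ is suboptimal there. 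Neither slip affects the validity of the approach.
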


\begin{proof}
    
Without loss of generality, assume that arm $1$ is optimal, meaning $b_k < b_1 < 1$ for all $k \geq 2$. 
Let denote the density function of the beta distribution 
$$\rho(x; a_k,b_k) = \frac{a}{b}(1 - \frac{x}{b})^{a-1}.$$ 
Then the KL divergence between the two reward distributions is given by 
$$\textrm{KL}[P(x; a_k, b_k)\Vert P(x; a_1, b_1)] = \int \rho(x; a_k, b_k) \ln\frac{\rho(x; a_k, b_k)}{\rho(x; a_1, b_1)} dx $$
This quantity is finite since $b_k < b_1$ and is continuous with respect to $b_1$.  Now, given any $\epsilon > 0$, consider an alternative bandit model where $b_k$ is replaced by $b'_k$ such that $b'_k > b_1 > b_k$ and  
\begin{equation}
\label{eq:epsilon-KL}
    \textrm{KL}[P(x;a_k,b_k) \Vert P(x;a_1,b'_k)] \leq (1+\epsilon) \textrm{KL}[P(x;a_k,b_k) \Vert P(x;a_1,b_1)]
\end{equation}
Under this alternative model, arm  $k$ becomes the unique optimal arm.  In the following, we show that with big enough probability, the allocation strategy cannot distinguish between the two models, leading to the desired lower bound. 

We use the notation $\E'$, and $P'$ to denote expectation and probability under the alternative model where the parameter of arm $k$ is replaced by $b'_k$. For any event $A$ involving the rewards $X_{k, 1}, X_{k, 2},\cdots, X_{2, T_{k,T}}$ from arm $k$, the probability under the alternative model satisfies
\begin{equation}
\label{eq:change-of-measure}
    P'(A) = \E[1_A e^{-\widehat{\textrm{KL}}_{T_{k,T}}}] \qquad 
    \widehat{\textrm{KL}}_{s} =  \sum_{t=1}^{s} \ln\frac{\rho(X_{k,t}; a_2, b_2)}{\rho(X_{k,t}; a_2, b'_2)}
\end{equation}
Here, the expectation is with respect to the original model. Define $P_k(x) = P(x;a_k,b_k)$ and $P'_k(x) = P(x;a_k,b'_k)$. Then, we have 
\begin{equation}
\label{eq:law-large-number}
\lim_{s\rightarrow\infty} \frac{\widehat{\textrm{KL}}_{s} }{s}  \xrightarrow{a.s.}  \textrm{KL}[P_k \Vert P'_k] = \int \rho(x; a_k, b_k) \ln\frac{\rho(x; a_k, b_k)}{\rho(x; a_1, b'_k)} dx < \infty
\end{equation}
This follows from the fact that the random variables $\ln\frac{\rho(X_{k,t}; a_k, b_k)}{\rho(X_{k,t}; a_1, b'_k)}$ with $X_{k,t}\sim P_k$ are i.i.d. and have bounded finite moments.

In order to link the behavior of the allocation strategy on the original and the modified bandits we introduce the event
\begin{equation}
A_T = \Bigl\{T_{k,T} < f_T   \quad \textrm{and}  \quad \widehat{\textrm{KL}}_{T_{k,T}} \leq (1 - \frac{\epsilon}{2}) \ln T   \Bigr\}  \quad \textrm{with} \quad f_T =  \frac{1 - \epsilon}{\textrm{KL}(P_k\Vert P'_k)} \ln T
\end{equation}
We will first prove that $P(A_T) = o(1)$. Using \Cref{eq:change-of-measure} leads to
\begin{equation}
\begin{aligned}
P'(A_T) = \E[1_{A_T} e^{-\widehat{\textrm{KL}}_{T_{k,T}}}]  \geq  T^{- (1 - \frac{\epsilon}{2})}  P(A_T)
\end{aligned}
\end{equation}
Using Markov's inequality, the above implies 
\begin{equation}
\begin{aligned}
P(A_T) \leq  T^{1 - \frac{\epsilon}{2}} P'(A_T)   \leq  T^{1 - \frac{\epsilon}{2}} P'\Bigl( T_{k,T} <  f_T  \Bigr) \leq    T^{1 - \frac{\epsilon}{2}}  \frac{\E'[T - T_{k,T}]}{T - f_T}  
\end{aligned}
\end{equation}
Now note that in the modified model, arm $k$ is the unique optimal arm.
By assumption, for any suboptimal arm $j$ and
any $\delta > 0$, the strategy satisfies $\E' T_{j, T} = o(T^\delta)\, \forall j\neq k$. This implies that $\E'[T - T_{k,t}] = o(KT^\delta)$. Choosing $\delta < \epsilon/2$ then leads to
\begin{equation}
\begin{aligned}
P(A_T)  \leq   T^{1 - \epsilon/2} \frac{\E'[T - T_{k,T}]}{T - f_T}   =  o(1)
\end{aligned}
\end{equation}
Next we will prove $P(T_{k,T} < f_T) = o(1) $. We have  
\begin{equation}
\label{eq:P_A_T_LB}
\begin{aligned}
P(A_T) 
&\geq  P\Bigl (T_{k,T} <  f_T   \quad \textrm{and}  \quad \max_{s \leq f_T}\widehat{\textrm{KL}}_{s} \leq (1 - \frac{\epsilon}{2}) \ln T   \Bigr) \\
&=  P\Bigl (T_{k,T} <  f_T   \quad \textrm{and}  \quad \frac{1}{f_T}\max_{s \leq f_T}\widehat{\textrm{KL}}_{s} \leq \frac{1 - \frac{\epsilon}{2}}{1 - \epsilon} \textrm{KL}[P_k\Vert P'_k]   \Bigr)
\end{aligned}
\end{equation}
Here in the first inequality, we introduce maximum operator to eliminate the dependence of $\textrm{KL}$ on $T_{k,T}$. The 
Using the fact that $ \frac{1 - \frac{\epsilon}{2}}{1 - \epsilon} > 1$ and $\textrm{KL}[P_k\Vert P'_k]  > 0$, along with \Cref{eq:law-large-number}, the maximal version of the strong law of large numbers \cite[Lemma 10.5]{bubeck2010bandits} implies that 
\begin{equation}
\begin{aligned}
\label{eq:max-law-large-number}
\lim_{T\rightarrow \infty} P\Bigl ( \frac{1}{f_T}\max_{s \leq f_T}\widehat{\textrm{KL}}_{s} \leq \frac{1 - \frac{\epsilon}{2}}{1 - \epsilon} \textrm{KL}[P_k\Vert P'_k]   \Bigr) = 1
\end{aligned}
\end{equation}
Combining \Cref{eq:max-law-large-number} and  \Cref{eq:P_A_T_LB} leads to that  
\begin{equation}
\label{eq:P_k_T}
\begin{aligned}
P(T_{k,T} < f_T) = o(1)  
\end{aligned}
\end{equation}

Finally, we can estimate the expectation of $T_{k,T}$ as follows
\begin{equation}
\label{eq:E_T_k_LB_finite}
\begin{aligned}
\E [T_{k,T}]  &= \E[T_{k,T} | T_{k,T} < f_T] P(T_{k,T} < f_T)  + \E[T_{k,T} | T_{k,T} \geq  f_T] P(T_{k,T} \geq f_T)  \\
&\geq      \frac{1 - \epsilon}{\textrm{KL}[P_2\Vert P'_2]} \ln T \, P(T_{k,T} \geq f_T)  \\
&\geq    (1 + o(1)) \frac{1 - \epsilon}{1 + \epsilon}\frac{\ln T}{\textrm{KL}[P_2\Vert P_1]}  \qquad \textrm{Using \Cref{eq:epsilon-KL,eq:P_k_T}}
\end{aligned}
\end{equation}
Taking $\epsilon$ to $0$ in \Cref{eq:E_T_k_LB_finite} leads to the desired lower bound \Cref{eq:E_T_k_LB}.
\end{proof}

\section{Exponential-like Arms}
\label{sec:exponential}
In this section, we study extreme bandit problem with exponential-like arms, assuming that the reward distribution of each arm exhibits exponential decay
near its maximum $b$:
\begin{equation}
    P(X > x; a, b) \sim   e^{-\frac{ab}{b - x}}  \quad \textrm{with} \quad a > 0  \,\textrm{ and } b\leq 1. 
\end{equation}
Specifically, we consider modified exponential distribution, We have the following preliminary

\begin{theorem}
\label{theorem:exponential-like-arms}
Assume  the reward of the arm follow the distribution:
\begin{equation}
    P(x; a, b) = 1 - e^{-\frac{a x}{b - x}}  \quad \textrm{with} \quad a > 0  \,\textrm{ and } b\leq 1. 
\end{equation}
For random variables $X_t \sim P$, we have 
\begin{equation}
    b - \E[\max_{t=1}^{T} X_t] \geq \frac{ab/e}{\ln (T+1)} 
\end{equation}
\end{theorem}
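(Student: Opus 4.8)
The plan is to reduce the maximum of the exponential-like rewards to the maximum of i.i.d.\ exponentials via a monotone change of variables, and then bound the resulting expectation with Jensen's inequality. First I would note that $g(x)=\frac{x}{b-x}$ is an increasing bijection from $[0,b)$ onto $[0,\infty)$ with inverse $g^{-1}(y)=\frac{by}{1+y}$, and that for $X\sim P(\cdot\,;a,b)$ the variable $Y:=g(X)$ satisfies $P(Y\le y)=P\bigl(X\le g^{-1}(y)\bigr)=1-e^{-a\,g(g^{-1}(y))}=1-e^{-ay}$, i.e.\ $Y\sim\mathrm{Exp}(a)$. Setting $Y_t=g(X_t)$, the $Y_t$ are i.i.d.\ $\mathrm{Exp}(a)$; since $g$ preserves order, $\max_{t\le T}X_t=g^{-1}(N_T)$ with $N_T:=\max_{t\le T}Y_t$, which yields the identity
\begin{equation*}
b-\max_{t\le T}X_t=b-\frac{bN_T}{1+N_T}=\frac{b}{1+N_T},\qquad\text{so}\qquad b-\E\Bigl[\max_{t\le T}X_t\Bigr]=b\,\E\Bigl[\tfrac{1}{1+N_T}\Bigr].
\end{equation*}

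Next I would lower bound $\E[\tfrac{1}{1+N_T}]$. Because $u\mapsto\frac{1}{1+u}$ is convex on $[0,\infty)$, Jensen gives $\E[\tfrac{1}{1+N_T}]\ge\frac{1}{1+\E[N_T]}$, and the expected maximum of $T$ i.i.d.\ $\mathrm{Exp}(a)$ variables is the scaled harmonic number: writing $\E[N_T]=\int_0^\infty\bigl(1-(1-e^{-ay})^T\bigr)dy$ and substituting $w=1-e^{-ay}$ gives $\E[N_T]=\frac1a\sum_{k=1}^T\frac1k\le\frac{1+\ln T}{a}$. Hence
\begin{equation*}
b-\E\Bigl[\max_{t\le T}X_t\Bigr]\ \ge\ \frac{b}{1+\E[N_T]}\ \ge\ \frac{ab}{a+1+\ln T},
\end{equation*}
and a routine comparison of $a+1+\ln T$ against $\ln(T+1)$ delivers the stated bound $\tfrac{ab/e}{\ln(T+1)}$. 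An essentially equivalent route avoids the reparametrization: $b-\E[\max_t X_t]=\int_0^b\bigl(1-e^{-ax/(b-x)}\bigr)^T dx\ \ge\ (b-x_0)\bigl(1-e^{-ax_0/(b-x_0)}\bigr)^T$ for any cut point $x_0\in(0,b)$; choosing $x_0$ so that $e^{-ax_0/(b-x_0)}=\tfrac1{T+1}$ forces $b-x_0=\tfrac{ab}{a+\ln(T+1)}$, and $(1-\tfrac1{T+1})^T=(1+\tfrac1T)^{-T}\ge e^{-1}$ --- this is where the factor $e^{-1}$ appears explicitly.

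The substantive point, and the step I expect to need care, is that the estimate must be rate-optimal. The exponential tail forces the effective cut point to sit only $\Theta(b/\ln T)$ below $b$ --- equivalently, $N_T$ concentrates around $\ln T/a$ --- and this is exactly what turns the performance gap into $\Theta(1/\ln T)$ rather than the polynomial rate of \Cref{theorem:beta_distribution_upper}. Using a looser inequality such as $(1-u)^T\ge 1-Tu$ with $u\ll 1/T$, or splitting the integral into many scales, loses the leading order, so the Jensen (equivalently, one-cut) bound is already tight and is the natural place to stop. I do not anticipate a genuine analytic obstacle beyond this: the only delicacy is in choosing the cut point / applying Jensen so as not to degrade the logarithmic rate, together with the elementary bookkeeping that brings $\frac{ab}{a+1+\ln T}$ into the advertised form.
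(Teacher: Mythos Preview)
Your one-cut bound on $\int_0^b(1-e^{-ax/(b-x)})^T\,dx$ is exactly the paper's argument up to a change of variable: the paper substitutes $y=P(x;a,b)$, truncates the resulting integral at $y=1-\tfrac{1}{T+1}$ (which is your choice $e^{-ax_0/(b-x_0)}=\tfrac{1}{T+1}$ in disguise), and then invokes $(1-\tfrac{1}{T+1})^T\ge e^{-1}$. Your Jensen route via the exponential reparametrization $Y=X/(b-X)\sim\mathrm{Exp}(a)$ is a genuinely different and arguably cleaner alternative: it replaces the cut-point by convexity of $u\mapsto(1+u)^{-1}$ together with the harmonic-number identity $\E[N_T]=a^{-1}H_T$, landing on $\tfrac{ab}{a+1+\ln T}$, which is of the same strength as the paper's intermediate bound $\tfrac{ab/e}{a+\ln(T+1)}$.

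The step that fails is the ``routine comparison'' you defer at the end. It cannot be routine, because the inequality \emph{as stated} is false whenever $a>e\ln(T+1)$: the left side $b-\E[\max_t X_t]$ is at most $b$, while $\tfrac{ab/e}{\ln(T+1)}>b$ in that regime (e.g.\ $a=3$, $T=1$). What both your arguments and the paper's proof actually establish is
\[
b-\E\Bigl[\max_{t\le T}X_t\Bigr]\ \ge\ \frac{ab/e}{a+\ln(T+1)},
\]
and the $a$ in the denominator cannot be dropped. The paper's own final line makes the same slip, going from $\tfrac{a}{a+\ln(n+1)}\cdot e^{-1}$ to $\tfrac{a/e}{\ln(n+1)}$ without justification. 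So your plan is sound and your intermediate bounds are correct; the honest conclusion is the version with $a+\ln(T+1)$ in the denominator, which still delivers the $\Theta(1/\ln T)$ rate and is all that the downstream performance-gap corollary needs.
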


\begin{proof}
    For the distribution $P(x; a, b)$, the probability density function (PDF) is given by: 
    $$\rho(x; a,b) =   \frac{ab}{ (b - x)^2 } e^{-\frac{ax}{b - x}} \qquad (0 \leq x \leq b).$$ 
    The PDF of its maximum over $n$ samples is 
$$
\rho_n(x; a,b) = n \rho(x; a,b) P(x;a,b)^{n-1} \qquad (0 \leq x \leq b)
$$ with expectation 
\begin{equation}
    \begin{aligned}
        \E_{\rho_n}[x] 
        &= n\int_{0}^{b}   x \rho(x; a,b) P(x;a,b)^{n-1} dx \\
        &= nb\int_{0}^{1}  \bigl(1 - \frac{a}{a - \ln(1 - y)}\bigr) y^{n-1} dy  \qquad (y = P(x;a,b))\\
        &= b - nb\int_{0}^{1}  \frac{a}{a - \ln(1 - y)} y^{n-1} dy   
    \end{aligned}
\end{equation}
By using the following inequality 
\begin{equation}
    \begin{aligned}
n\int_{0}^{1}  \frac{a}{a - \ln(1 - y)} y^{n-1} dy  
&\geq  n\int_{0}^{1 - \frac{1}{n+1}}  \frac{a}{a - \ln(1 - y)} y^{n-1} dy  \\
&\geq  n\int_{0}^{1 - \frac{1}{n+1}}  \frac{a}{a - \ln(\frac{1}{n+1})} y^{n-1} dy  \qquad \textrm{ replacing } y \textrm{ by } 1-\frac{1}{n+1} \\
&=   \frac{a}{a - \ln(\frac{1}{n+1})} \bigl(1 - \frac{1}{n+1}\bigr)^{n}  \\
&\geq   \frac{a/e}{\ln(n+1)}    \quad  \textrm{using } \bigl(1 - \frac{1}{n+1}\bigr)^{n} \geq \frac{1}{e}
    \end{aligned}
\end{equation}
We have the following upper bound about the expectation
\begin{equation}
    \begin{aligned}
b - nb\int_{0}^{1}  \frac{a}{a - \ln(1 - y)} y^{n-1} dy  \leq b \Bigl(1 -  \frac{a/e}{\ln(n+1)}\Bigr)
    \end{aligned}
\end{equation}
\end{proof}

By using \Cref{theorem:exponential-like-arms}, we have the performance gap satisfies 
\begin{equation}
\begin{aligned}
    G(T) 
    &\geq  b_1  - \max_{k=1}^{K} \E[\max_{t\leq T} X_{k,t}]  \\
    &\geq   \min_{k=1}^{K} \Bigl\{\Delta_k + \frac{a_kb_k }{e \ln (T+1)} \Bigr\} \\
    &\geq   \min \Bigl\{\frac{a_1 b_1}{e \ln (T+1)}, \min_{k\geq 2} \Delta_k \Bigr\}.
\end{aligned}
\end{equation}

\section{Numerical Experiments with Beta Distributions}
\label{sec:numerical_experiments}

In this section, we construct a simple simulated numerical environment to compare the performance of the UCB-extreme policy proposed in this paper (see \Cref{eq:UCB-extreme}) with the classic $\epsilon$-greedy and UCB1 policies. Concurrently, we experimentally validate the theoretical upper bounds for the performance gap $G(T)$ and regret $R(T)$ derived in \Cref{theorem:beta_distribution_upper} (specifically, \Cref{eq:g_upper_bound} and \Cref{eq:r_upper_bound}).

We consider a multi-armed bandits problem with a total of $K=4$ arms. The reward for each arm $k$ follows a Beta distribution with parameters $(a_k, b_k)$ (defined in \Cref{eq:beta_distribution}). The specific parameter settings are as follows:
\begin{itemize}
    \item Arm 1: $a_1=3, b_1=1$
    \item Arm 2: $a_2=4, b_2=0.9$
    \item Arm 3: $a_3=2, b_3=0.85$
    \item Arm 4: $a_4=1, b_4=0.9$
\end{itemize}
We employ the three aforementioned policies to conduct a total of $T=50,000$ independent sampling rounds on these four arms. The entire experiment is repeated $400$ times to obtain statistical averages, thereby mitigating the effects of randomness. It should be noted that since the constant $C$ obtained from \Cref{eq:constant-C} is relatively large, the theorem imposes a very high requirement on the number of steps in the numerical experiments. Here, we fix $C$ at 10.

The parameter configurations for the policies are as follows:
\begin{itemize}
    \item For the $\epsilon$-greedy policy, we set the exploration rate $\epsilon=0.25$.
    \item For the UCB-extreme policy, in accordance with the conditions in \Cref{eq:UCB-extreme-parameter}, we set the parameters $2c = \left(\frac{7}{3}\right)^{\frac{1}{3}}$ and $\gamma = \frac{1}{3}$.
    \item To ensure a fair comparison, the exploration parameters for the UCB1 policy are set to the same values as those for UCB-extreme.
\end{itemize}

The experimental results are presented in \Cref{fig:numerical_experiment_results}. It is clearly observable from the figure that, in this specific environment, only the $R(T)$ generated by our proposed UCB-extreme policy satisfies the upper bound \Cref{eq:r_upper_bound}. In contrast, the $R(T)$ for both the $\epsilon$-greedy and UCB1 policies exceeds this bound. Notably, the traditional UCB1 policy, which uses the sample mean as its expectation estimate, performs the worst, remaining far from the upper bound. Meanwhile, the $\epsilon$-greedy policy gradually approaches this bound in the later stages of the algorithm's execution.

On the other hand, regarding the performance gap $G(T)$, its definition is inherently independent of the specific policy employed. Consequently, the experimental results also confirm that $G(T)$ under all three policies adheres to the theoretical upper bound given by \Cref{eq:g_upper_bound}.

\begin{figure}
    \centering
    \label{fig:numerical_experiment_results}
    \includegraphics[width=0.9\linewidth]{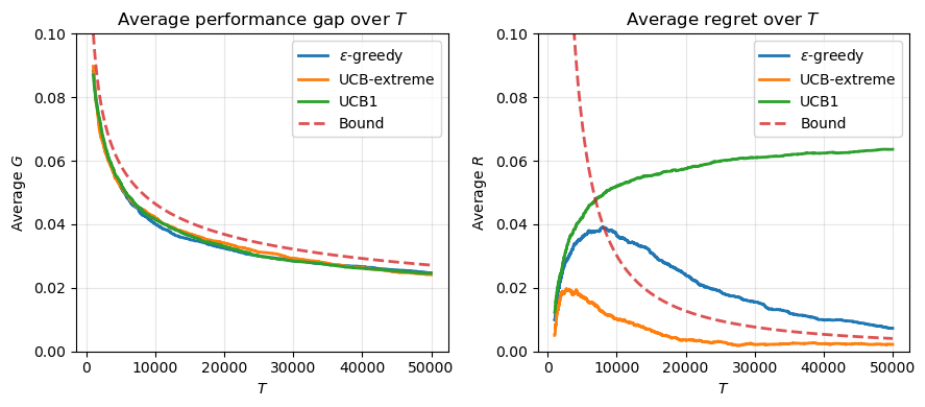}
    \caption{Numerical experimental results of three different strategies for $G(T)$ and $R(T)$, along with their corresponding upper bounds, are shown in the figures. The regret results on the right demonstrate that our UCB-extreme performs the best.}
    \label{fig:placeholder}
\end{figure}

\color{black}

\section{Modeling Symbolic Regression as a Markov Decision Process}
\label{sec:mdp_modeling}

In this section we describe in detail how to cast the symbolic regression problem as a Markov Decision Process (MDP).  First, an expression can be represented by a binary expression tree, and any such tree can be mapped to a symbol sequence via a traversal order.  Here we use preorder traversal, as illustrated in \Cref{fig:framework}.  Conversely, a symbol sequence that obeys the preorder rules can be reconstructed into an expression tree as follows:

\begin{enumerate}
    \item Locate the deepest non‑full operator node in the current partial tree.
    \item Expand that node by adding a child symbol, first to the left and then to the right.
    \item Repeat until there are no non‑full operator nodes remaining.
\end{enumerate}

This procedure yields a complete binary tree, which corresponds uniquely to an expression.  In code, this can be implemented very efficiently using a stack.  Note that, because we impose a depth limit on expressions, in some cases only terminal symbols (variables or constants) may be added.

So in practice,  we can formulate the symbolic regression task as an MDP defined by the tuple
\[
\mathcal{M} = \bigl(\mathcal{S},\;\mathcal{A},\;P,\;R,\;\beta\bigr),
\]
where:
\begin{itemize}
  \item $\mathcal{S}$ is the state space, consisting of all partial symbol sequences (or incomplete binary expression trees) whose depth does not exceed a maximum $H$.  Formally,
  \[
     \mathcal{S} = \bigl\{\,s = (a_1,\dots,a_k)\;\bigm|\;\mathrm{depth}(E(s)) \le H, \;s\text{ obeys valid preorder structure}\bigr\},
  \]
  where $E(s)$ reconstructs the expression tree from $s$ and $\mathrm{depth}(\cdot)$ measures tree depth.

  \item $\mathcal{A}$ is the (state-dependent) action space.  Let $\mathcal{O}$ be the set of operators, $\mathcal{X}$ the set of variables, and $\mathcal{C}$ the set of constants.  At state $s_t$:
  \[
     \mathcal{A}(s_t) =
     \begin{cases}
       \mathcal{O} \cup \mathcal{X} \cup \mathcal{C}, & \mathrm{depth}(E(s_t)) < H,\\
       \mathcal{X} \cup \mathcal{C}, & \mathrm{depth}(E(s_t)) = H,
     \end{cases}
  \]
  i.e. when the partial tree is at maximum depth, actions are restricted to terminal symbols (variables or constants) to ensure the depth constraint.

  \item The transition function $P:\mathcal{S}\times\mathcal{A}(s)\to\mathcal{S}$ is deterministic: given $s_t=(a_1,\dots,a_{t})$ and action $a_{t+1}\in\mathcal{A}(s_t)$,
  \[
     s_{t+1} = \mathrm{Expand}(s_t, a_{t+1}),
  \]
  where `Expand' locates the deepest non-full operator node in the tree corresponding to $s_t$ and attaches $a_{t+1}$ as its next child (left first, then right).

  \item The reward function $R:\mathcal{S}\times\mathcal{A}(s)\to\mathbb{R}$ is zero at all nonterminal steps and gives a terminal payoff upon completion (no non-full operators remain):
  \[
     R(s_t,a_{t+1}) =
     \begin{cases}
       0, & s_{t+1}\text{ is non-terminal},\\
       \dfrac{1}{1 + \mathrm{NRMSE}\bigl(E(s_{t+1})\bigr)}, & s_{t+1}\text{ is terminal},
     \end{cases}
  \]
  where $E(s)$ denotes the expression tree reconstructed from $s$ and $\mathrm{NRMSE}$ its normalized error on the training data.

  \item The discount factor $\beta\in[0,1]$ is set to $1$, since all reward is issued at termination, yielding return
  \[
     G = \sum_{t=0}^{T-1} \beta^t R(s_t,a_{t+1}) = R(s_{T-1},a_T).
  \]
\end{itemize}

An optimal policy $\pi^*$ maximizes the expected return
\[
   \pi^* = \arg\max_{\pi} \mathbb{E}[G \mid \pi],
\]
which corresponds to finding the expression with minimal $\mathrm{NRMSE}$.  In practice, we explore this MDP via MCTS rollouts and apply genetic operators on the queue of promising sequences, as detailed in \Cref{alg:mcts_statejumping}.

\section{Ablation Study}
\label{sec:ablation}

To isolate the contributions of our two key modifications to MCTS—the extreme-bandit node selection strategy and the evolution-inspired state-jumping actions—we perform an ablation study on the Nguyen benchmark. We evaluate these models:

\begin{enumerate}
    \item \textbf{Model A}: Replaces the UCB‐extreme selection rule with uniform random selection (i.e., set $\epsilon = 1$ in \Cref{tab:hyperparameters}).
    
    \item \textbf{Model B}: Model B: Omits the evolution‐inspired state‐jumping actions and only depends on the standard MCTS rollout to generate expressions(i.e., set $g_s = 0$ in \Cref{tab:hyperparameters}).

    \item \textbf{Model C}: MCTS using the standard UCB1 formula, not employing any methods proposed in this paper.
\end{enumerate}

The experimental results in \Cref{tab:ablation-results} show that both Model A and Model B perform worse than the complete algorithm, with Model B suffering a particularly large drop. This clearly highlights that the evolution‑inspired state‑jumping actions are the primary driver of our performance gains. At the same time, the UCB‑extreme strategy continues to contribute positively, as its absence in Model A also leads to a noticeable degradation. In line with our previous analysis, the state‑jumping moves dramatically reshape the reward landscape during the search process, allowing the algorithm to explore more promising regions of the expression space and to generate more complex yet high‑quality expressions.

Simultaneously, we can also observe that Model C performs very poorly and shows a significant gap compared to Model B. This further demonstrates the effectiveness of UCB-extreme for symbolic regression problems.

Finally, we also conducted further hyperparameter tuning experiments based on Model B. This serves both to validate the theoretical insights behind our design and to provide a preliminary analysis of the characteristics of different symbolic regression problems. The details can be found in \Cref{sec:parameter_nguyen}.

\begin{table}
\vspace{1em}
\centering

\caption{Recovery rate comparison (\%) for 3 ablations on Nguyen benchmark.}
\vspace{1em} 
\label{tab:ablation-results}
\begin{tabular}{lcccc}
\toprule
& \textbf{Ours} & \textbf{Model A} & \textbf{Model B} & \textbf{Model C}\\
\midrule
\multicolumn{4}{l}{} \\
Nguyen-1  & \textbf{100} & \textbf{100} & \textbf{100} & 4\\
Nguyen-2  & \textbf{100} & \textbf{100} & 38 & 0\\
Nguyen-3  & \textbf{100} & 98 & 7 & 0\\
Nguyen-4  & \textbf{97}  & 90  & 0 & 0\\
Nguyen-5  & \textbf{100}  & 89  & 41 & 0\\
Nguyen-6  & \textbf{100} & \textbf{100} & \textbf{100} & 3\\
Nguyen-7  & \textbf{100} & 99  & 53 & 0\\
Nguyen-8  & \textbf{100} & \textbf{100} & 98 & 47\\
Nguyen-9  & \textbf{100} & \textbf{100} & \textbf{100} & 1\\
Nguyen-10 & \textbf{100} & \textbf{100} & \textbf{100} & 0\\
Nguyen-11 & \textbf{100} & \textbf{100} & \textbf{100} & 71\\
Nguyen-12* & \textbf{22} & 14 & 0 & 0\\
\midrule
Nguyen average & \textbf{93.25} & 90.83 & 53.08 & 10.50\\
\bottomrule
\end{tabular}  
\end{table}

\color{black}

\section{Reward Tail Decay Rate Analysis on Nguyen benchmark}
\label{sec:parameter_nguyen}

We performed an empirical investigation of reward distributions across the Nguyen benchmark suite (see \Cref{fig:nguyen-total}). The majority of expressions demonstrate tail decay rates ranging approximately between 4 and 6, with notable exceptions observed in expressions 5 and 12. Special attention is required regarding our use of Nguyen-12 rather than Nguyen-12*(the latter was specifically employed for recovery-rate testing). Furthermore, the integration of state-jumping operations, drawing inspiration from genetic programming methodologies, yields significant enhancements to the global reward profile: expressions 5, 8, 11, and 12 display tail decay rates approximately within $[2,4]$, whereas the remaining expressions predominantly exhibit values below 2.

Based on these observations, we carried out recovery-rate experiments under six distinct UCB‑extreme parameter settings. All evaluations were conducted using the MCTS algorithm that adopts only the UCB‑extreme strategy without the evolution‑inspired state‑jumping actions (i.e., with \(p_s = 0\) and \(\epsilon = 0\), as defined in \Cref{alg:mcts_statejumping}). This was done to better understand the difficulty of symbolic regression tasks and to validate our theoretical analysis. All experiments followed the evaluation protocol outlined in \Cref{sec:experimental-details}.

Within our framework, decreasing the discount factor \(\gamma\) or increasing the exploration constant \(c\) both promote exploration. The six tested configurations, denoted as Models~A–F, are defined as follows:

\begin{enumerate}
  \item \textbf{Model~A}: \(\gamma = \tfrac{1}{2}, \quad 2c = \sqrt{2}\).\\
        Standard UCB1 parameters, used as a baseline.
  \item \textbf{Model~B}: \(\gamma = \tfrac{1}{2}, \quad 2c = \sqrt{\tfrac{5}{2}}\).\\
        Same discount factor as Model~A, with an increased \(c\) corresponding to \(a_1 = 2\).
  \item \textbf{Model~C}: \(\gamma = \tfrac{1}{4}, \quad 2c = \bigl(\tfrac{9}{4}\bigr)^{1/4}\).\\
        Parameters calibrated to \(a_1 = 4\), increasing exploration via reduced \(\gamma\).
  \item \textbf{Model~D}: \(\gamma = \tfrac{1}{6}, \quad 2c = \bigl(\tfrac{13}{6}\bigr)^{1/6}\).\\
        Configured for \(a_1 = 6\), with further reduction in \(\gamma\) and an adjusted \(c\).
  \item \textbf{Model~E}: \(\gamma = \tfrac{1}{8}, \quad 2c = \bigl(\tfrac{17}{8}\bigr)^{1/8}\).\\
        The most exploratory setting among Models~A–E, corresponding to \(a_1 = 8\).
  \item \textbf{Model~F}: \(\gamma = \tfrac{1}{8}, \quad 2c = \bigl(\tfrac{26}{3}\bigr)^{1/8}\).\\
        Constructed by fixing \(a_1 = 6\) and \(\gamma = \tfrac{1}{8}\), while enforcing equality only in the second inequality of \eqref{eq:UCB-extreme-parameter}.
\end{enumerate}

Models~B–E are constructed by fixing \(a_1 = 2, 4, 6, 8\), respectively, and then selecting \(\gamma\) and \(c\) such that both inequalities in \eqref{eq:UCB-extreme-parameter} are satisfied with equality. This calibration strategy ensures that each configuration corresponds to a specific theoretical tail decay rate. In contrast, Model~F enforces equality only in the second inequality of \eqref{eq:UCB-extreme-parameter}, thereby relaxing the first constraint.

From a theoretical standpoint, any reward distribution conforming to the polynomial decay form discussed in \Cref{sec:polynomial} with \(6 \le a_1 \le 8\) should satisfy \eqref{eq:UCB-extreme-parameter} under the parameterization of Model~F. Empirically, Model~F yields the highest recovery rates, with detailed results reported in \Cref{tab:vanilla-nguyen-results}. These findings suggest improved performance when \(a_1 \ge 6\), as confirmed by the quantitative metrics in the referenced table.

Moreover, \Cref{tab:vanilla-nguyen-results} indicates that different equations benefit from different parameter configurations for optimal performance. We also explored alternative settings where \(\gamma\) was held fixed while \(c\) was incrementally increased. However, these variants did not yield statistically significant improvements in performance. On the contrary, excessively large values of \(c\) were found to degrade solution quality, underscoring the importance of balanced parameter calibration.

\begin{table}
\vspace{1em} 
\centering
\caption{Recovery rate comparison (\%) on the Nguyen benchmark of MCTS using only the UCB‑extreme strategy under six different UCB‑extreme parameter configurations.}
\vspace{1em}
\vspace{1em} 
\label{tab:vanilla-nguyen-results}
\begin{tabular}{lcccccc}
\toprule
& \textbf{Model A} & \textbf{Model B} & \textbf{Model C} & \textbf{Model D} & \textbf{Model E} & \textbf{Model F}\\
\midrule
\multicolumn{6}{l}{} \\
Nguyen-1  & 90 & 93 & \textbf{100} & \textbf{100} & \textbf{100} & \textbf{100}\\
Nguyen-2  & 29 & 37 & 77 & 96 &  \textbf{100} & 92\\
Nguyen-3  & 10 & 13 & 17 & 25 & \textbf{52} & 25\\
Nguyen-4  & 0  & 2  & 0  & 1  & \textbf{7}  & 0\\
Nguyen-5  & 1  & 0  & 22 & 18 & 8  & \textbf{41}\\
Nguyen-6  & 49 & 73 & \textbf{100} & \textbf{100} & 99 & \textbf{100}\\
Nguyen-7  & 27 & 30  & 89  & \textbf{98}  & 72 & \textbf{98}\\
Nguyen-8  & 16 & 12 & 69 & 96 & 95 & \textbf{100}\\
Nguyen-9  & 56 & 66 & \textbf{100} & \textbf{100} & \textbf{100} & \textbf{100}\\
Nguyen-10 & 35 & 39 & \textbf{100} & \textbf{100} & \textbf{100} & \textbf{100}\\
Nguyen-11 & 57 & 59 & 99 & \textbf{100} & \textbf{100} & \textbf{100}\\
Nguyen-12* & 0 & 0 & 0 & 0 & 0 & \textbf{1}\\
\midrule
Nguyen average & 30.83 & 35.33 & 64.42 & 69.50 & 69.42 & \textbf{71.42}\\
\bottomrule
\end{tabular}
\end{table}

\begin{figure}[htbp]
    \centering
    
    \begin{subfigure}[b]{0.42\textwidth}
        \includegraphics[width=\textwidth]{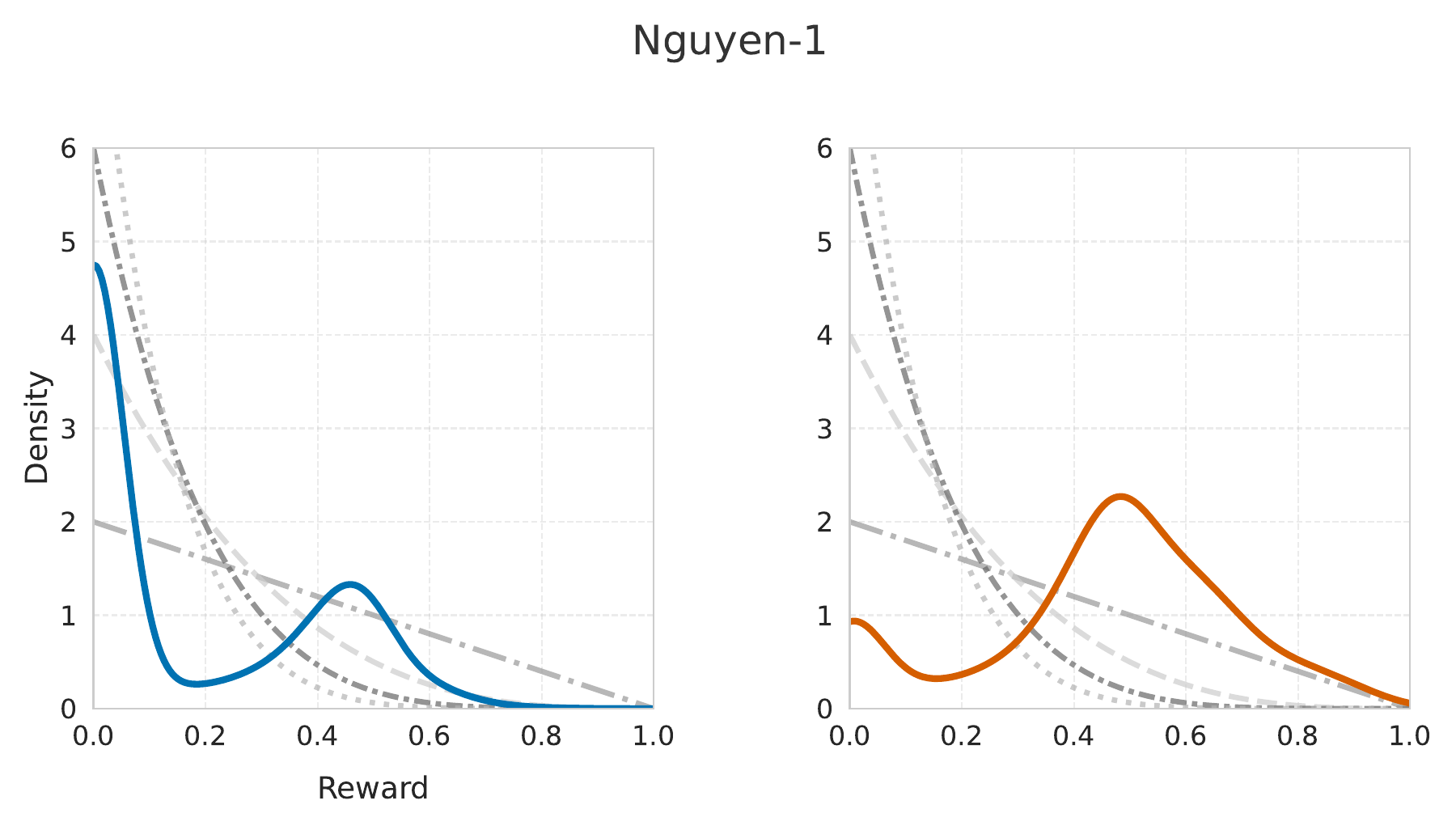}
    \end{subfigure}
    \hfill
    \begin{subfigure}[b]{0.42\textwidth}
        \includegraphics[width=\textwidth]{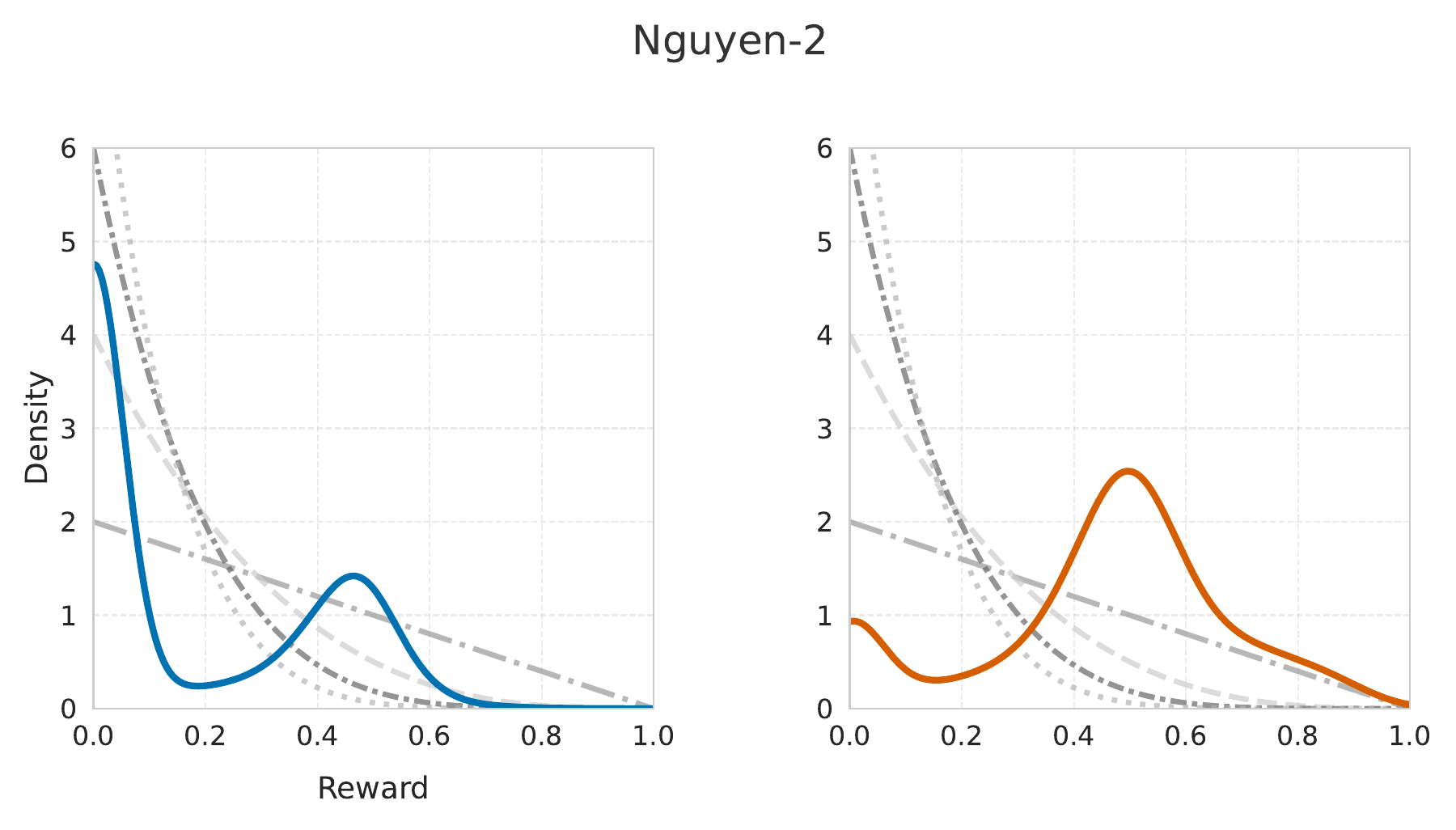}
    \end{subfigure}
    
    \begin{subfigure}[b]{0.42\textwidth}
        \includegraphics[width=\textwidth]{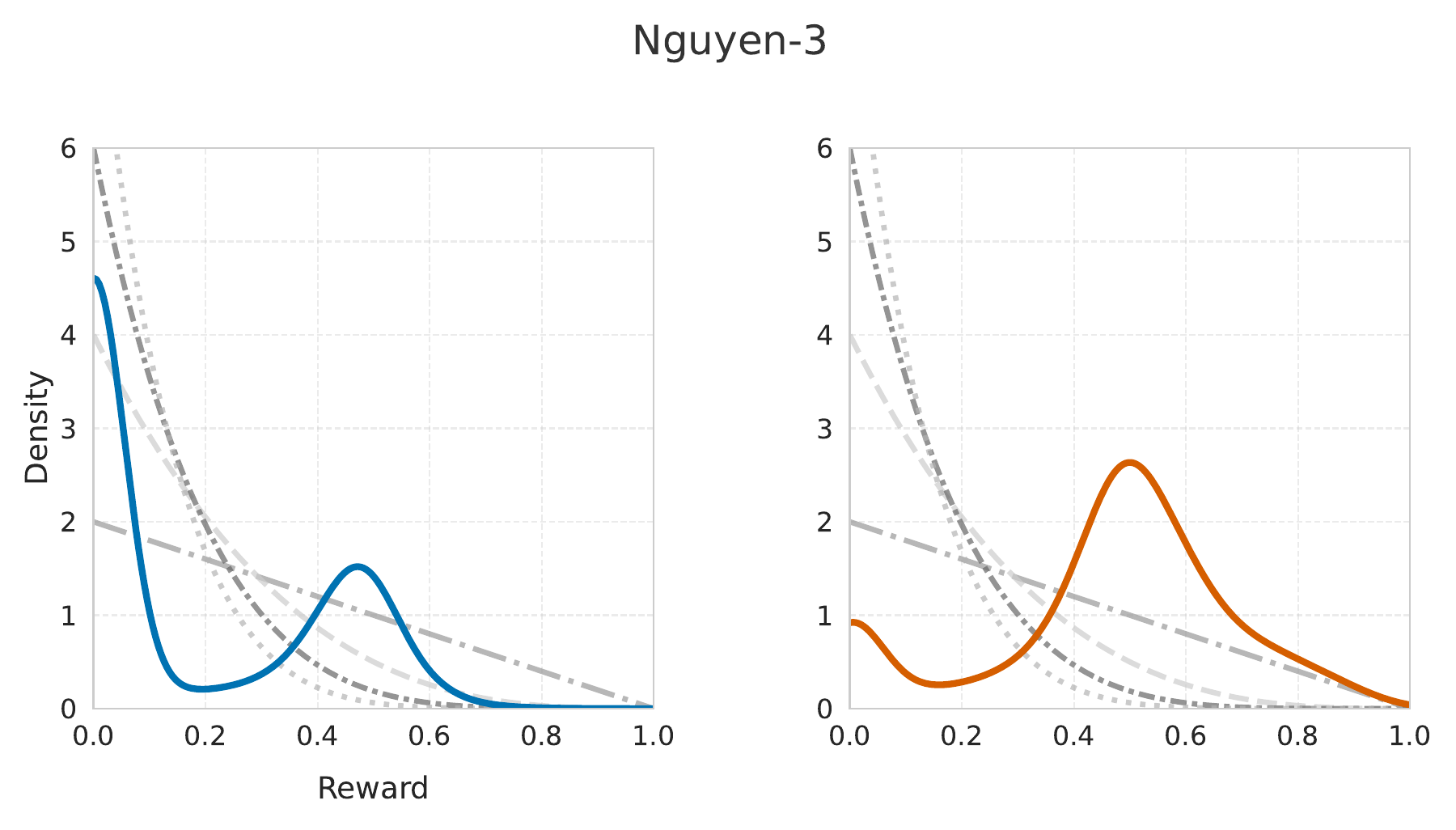}
    \end{subfigure}
    \hfill
    \begin{subfigure}[b]{0.42\textwidth}
        \includegraphics[width=\textwidth]{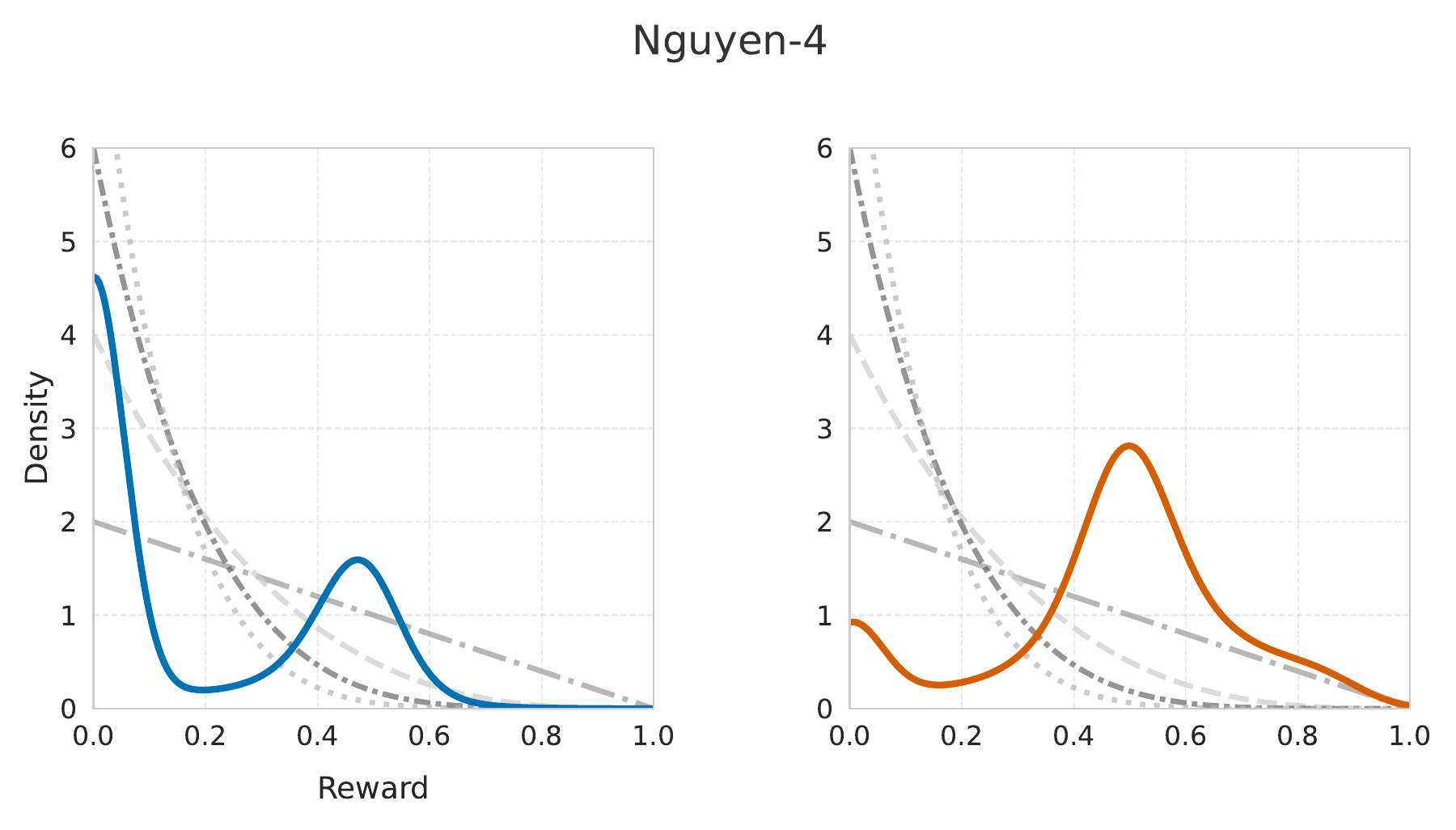}
    \end{subfigure}
    
    \begin{subfigure}[b]{0.42\textwidth}
        \includegraphics[width=\textwidth]{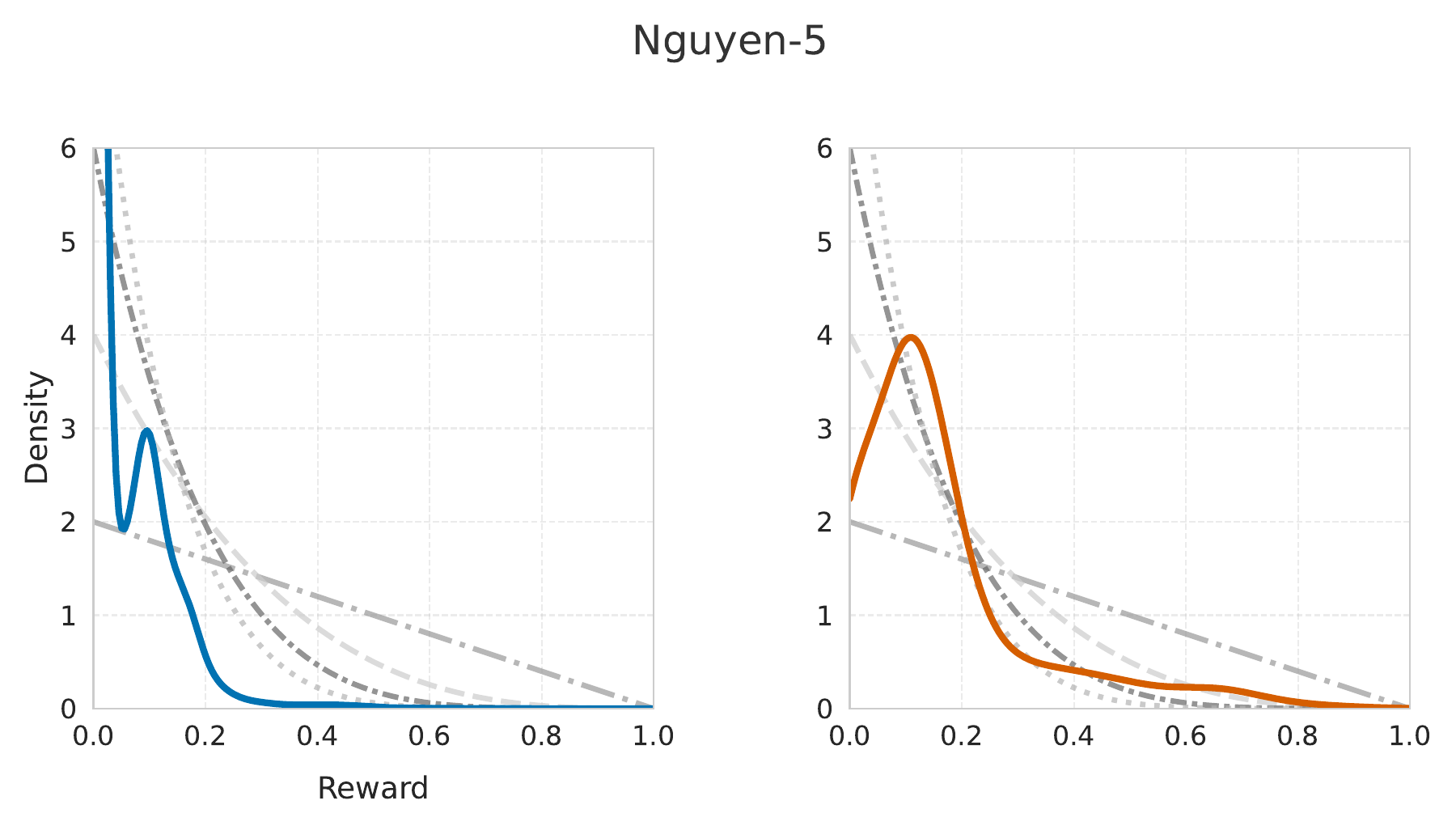}
    \end{subfigure}
    \hfill
    \begin{subfigure}[b]{0.42\textwidth}
        \includegraphics[width=\textwidth]{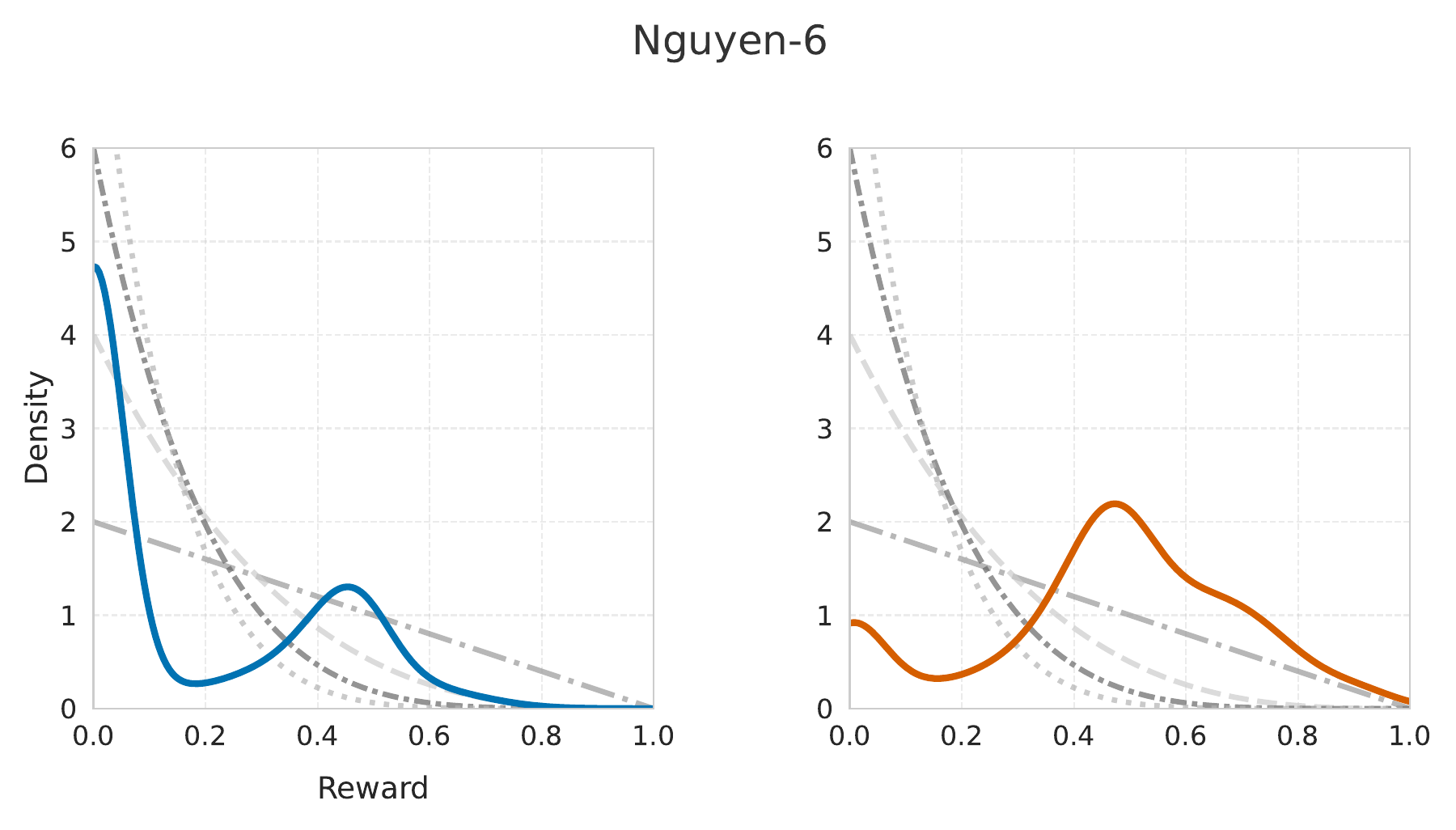}
    \end{subfigure}
    
    \begin{subfigure}[b]{0.42\textwidth}
        \includegraphics[width=\textwidth]{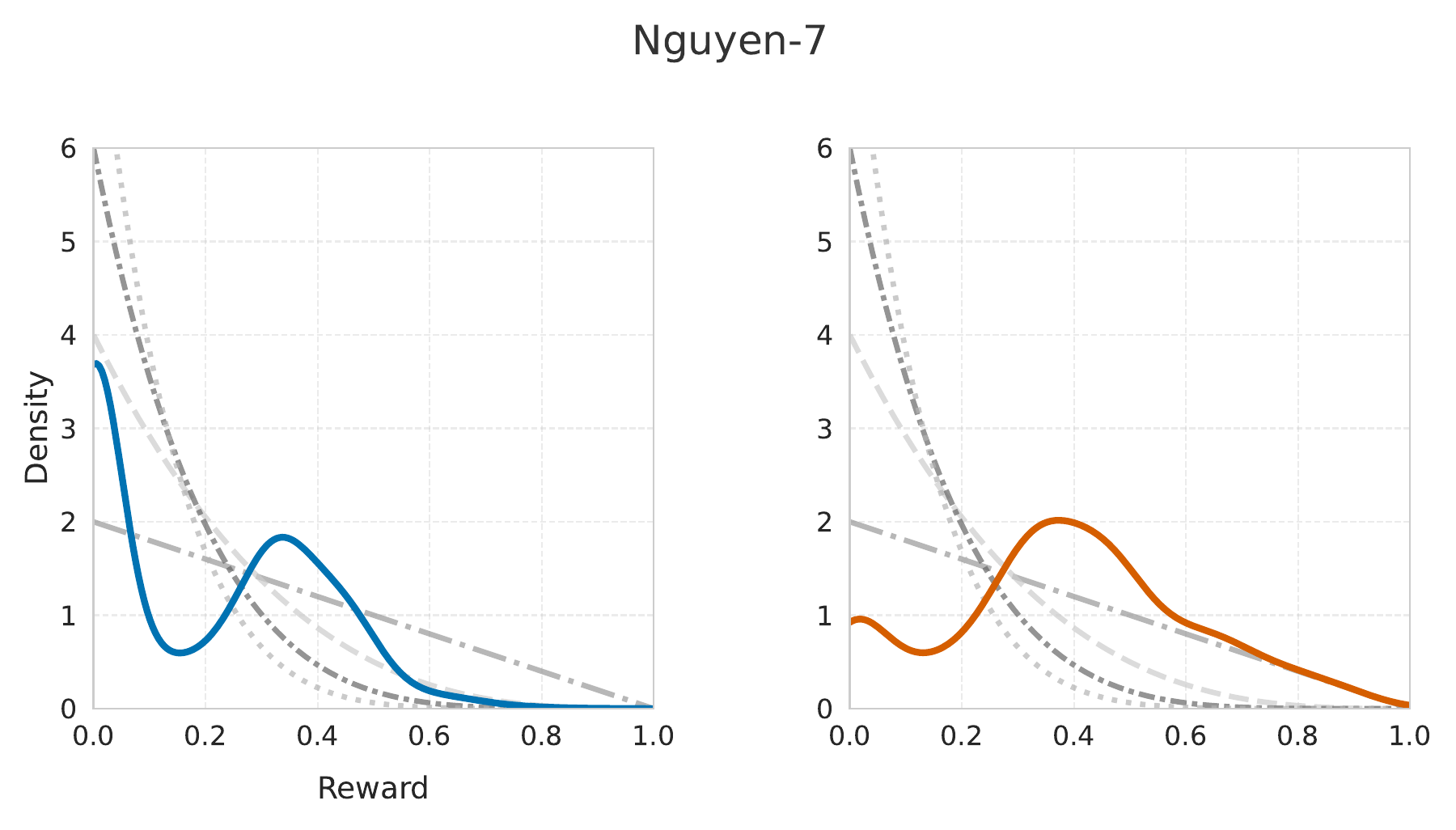}
    \end{subfigure}
    \hfill
    \begin{subfigure}[b]{0.42\textwidth}
        \includegraphics[width=\textwidth]{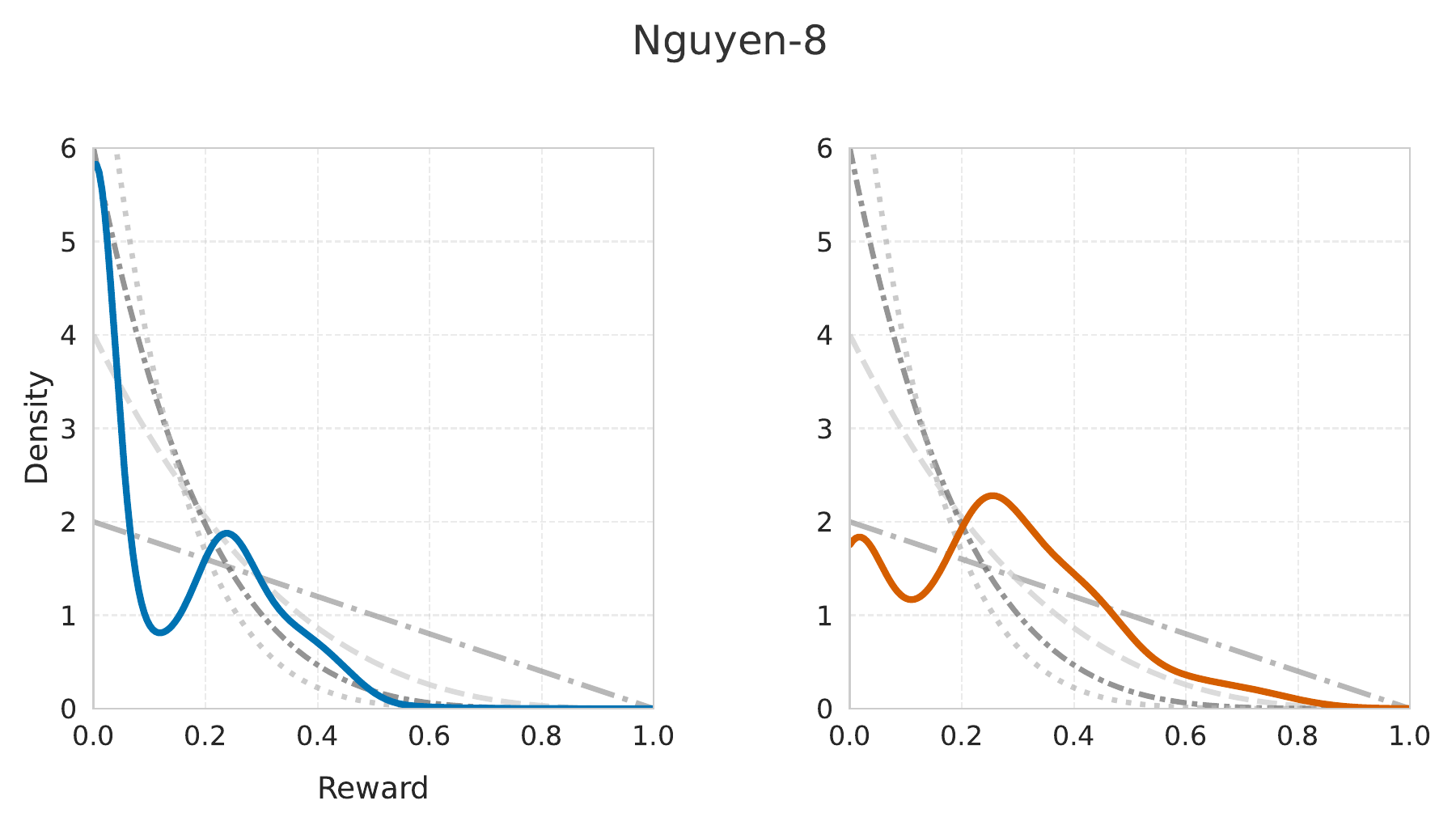}
    \end{subfigure}
    
    \begin{subfigure}[b]{0.42\textwidth}
        \includegraphics[width=\textwidth]{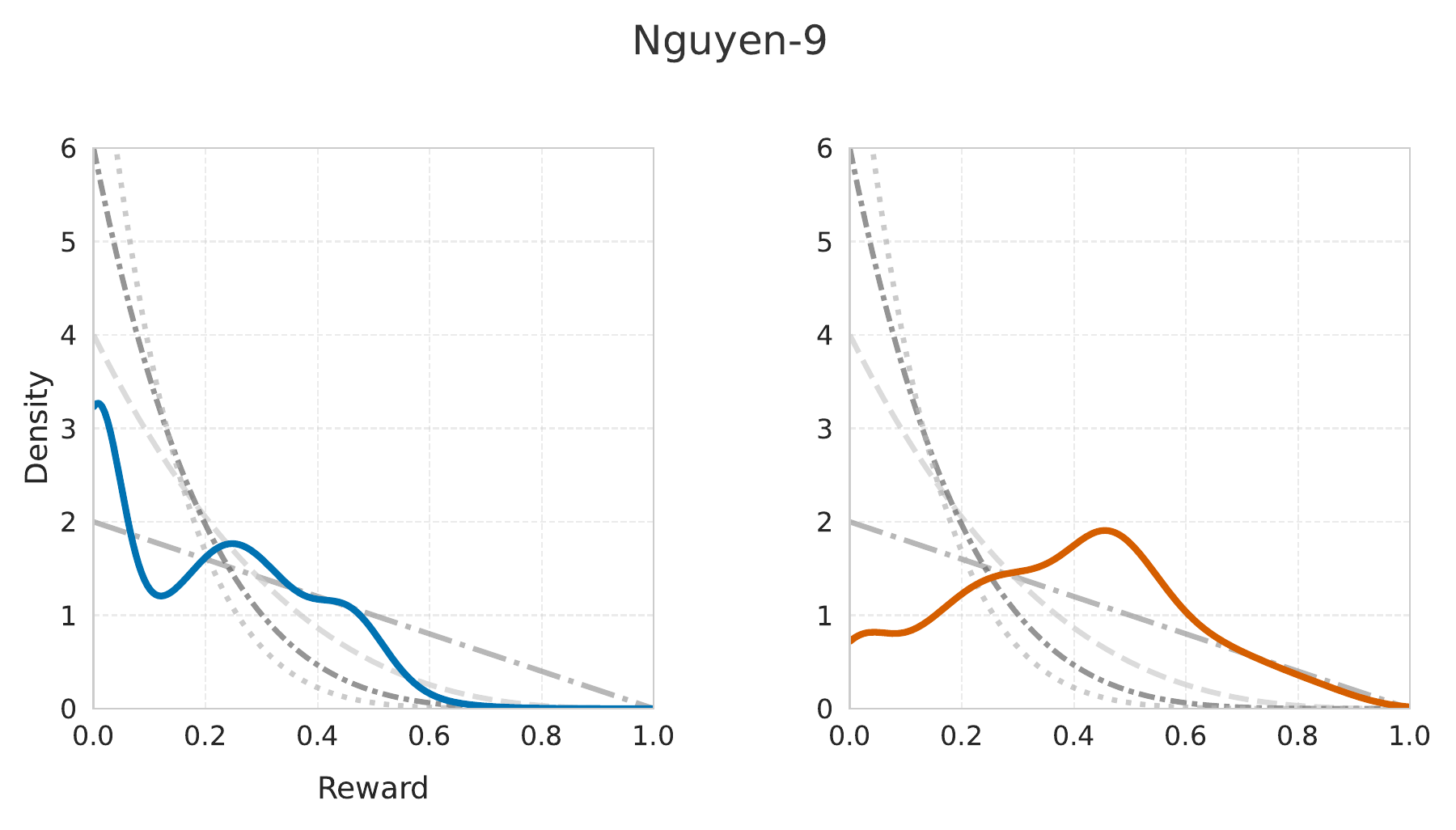}
    \end{subfigure}
    \hfill
    \begin{subfigure}[b]{0.42\textwidth}
        \includegraphics[width=\textwidth]{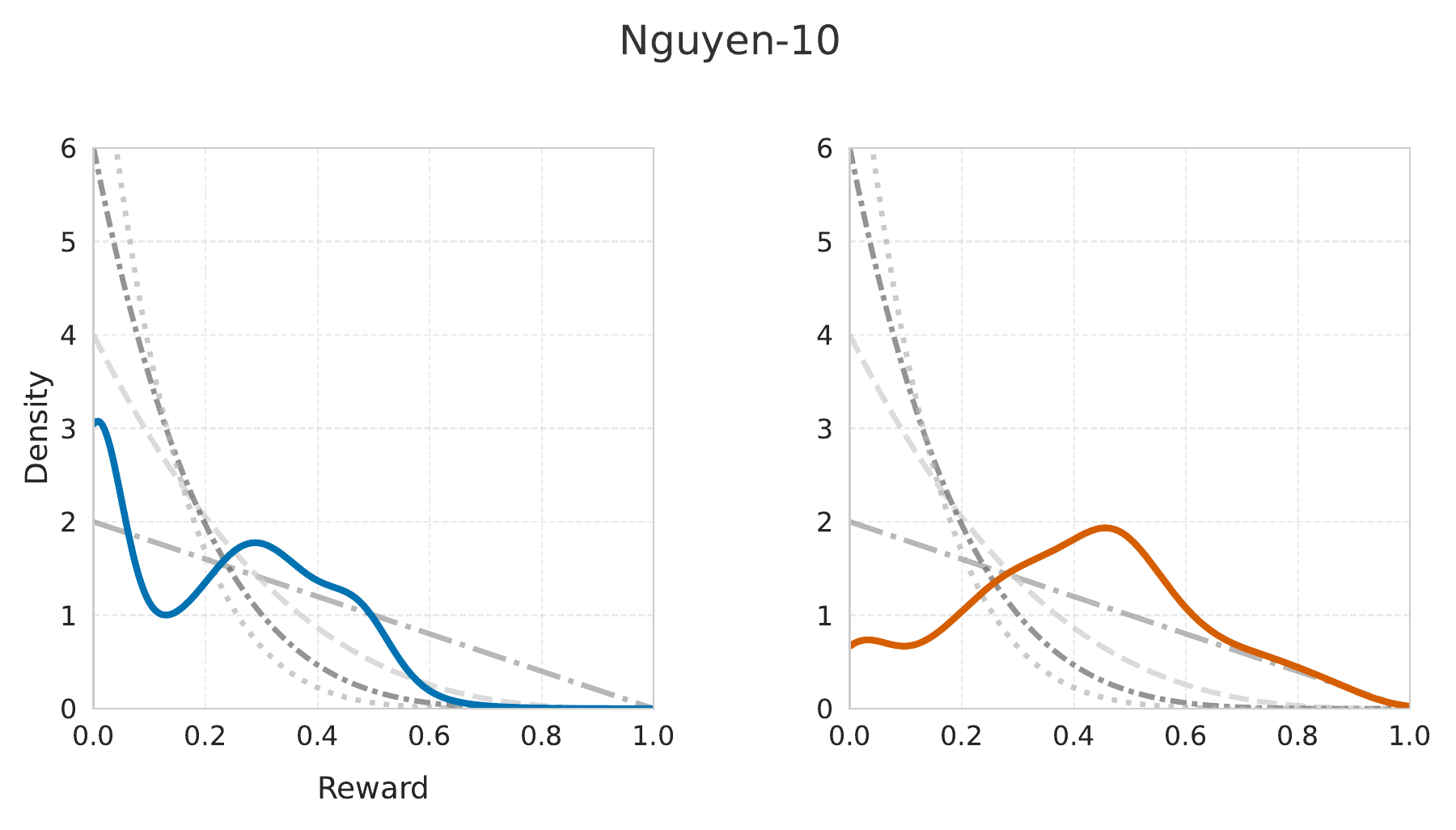}
    \end{subfigure}
    
    \begin{subfigure}[b]{0.42\textwidth}
        \includegraphics[width=\textwidth]{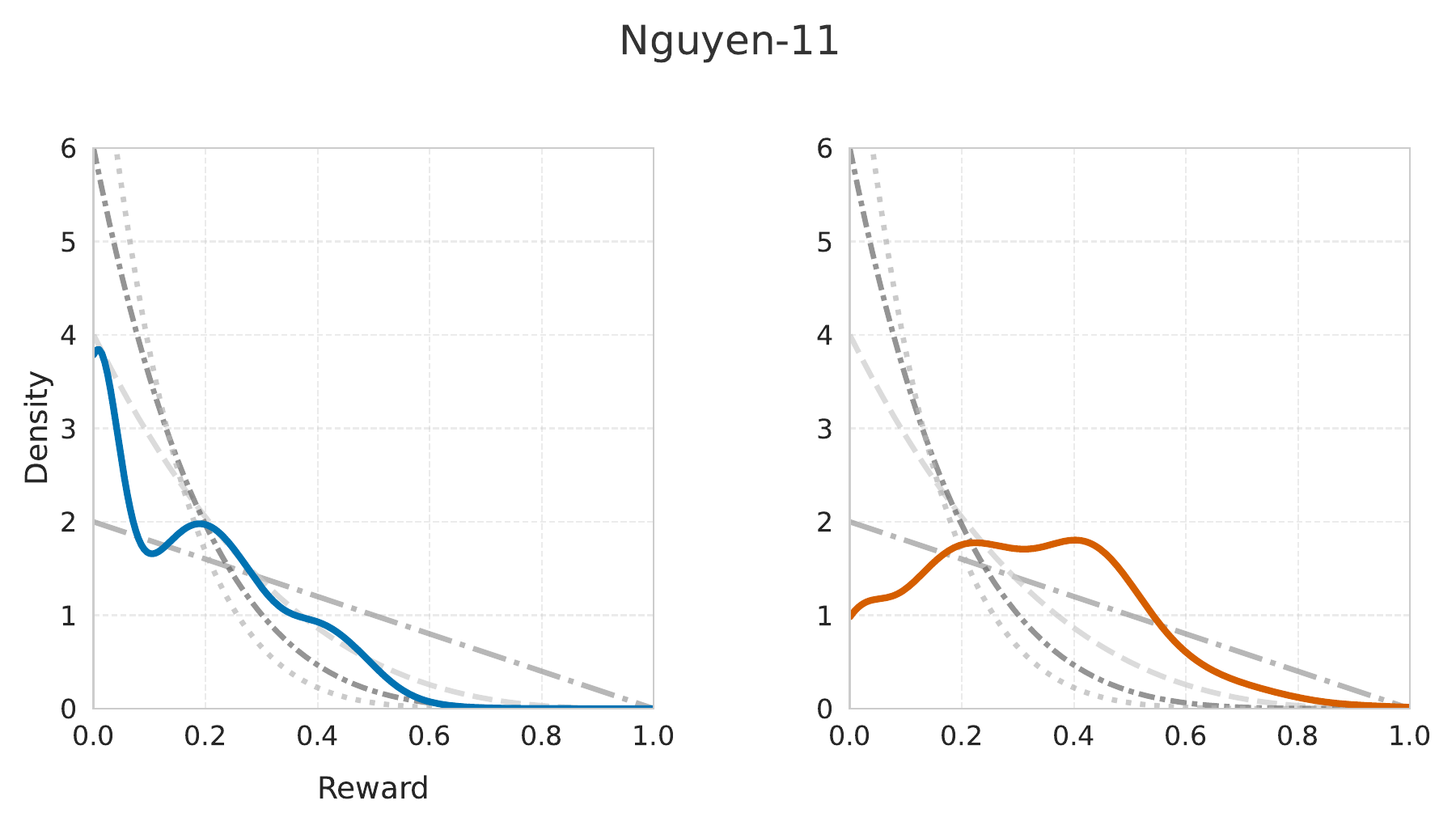}
    \end{subfigure}
    \hfill
    \begin{subfigure}[b]{0.42\textwidth}
        \includegraphics[width=\textwidth]{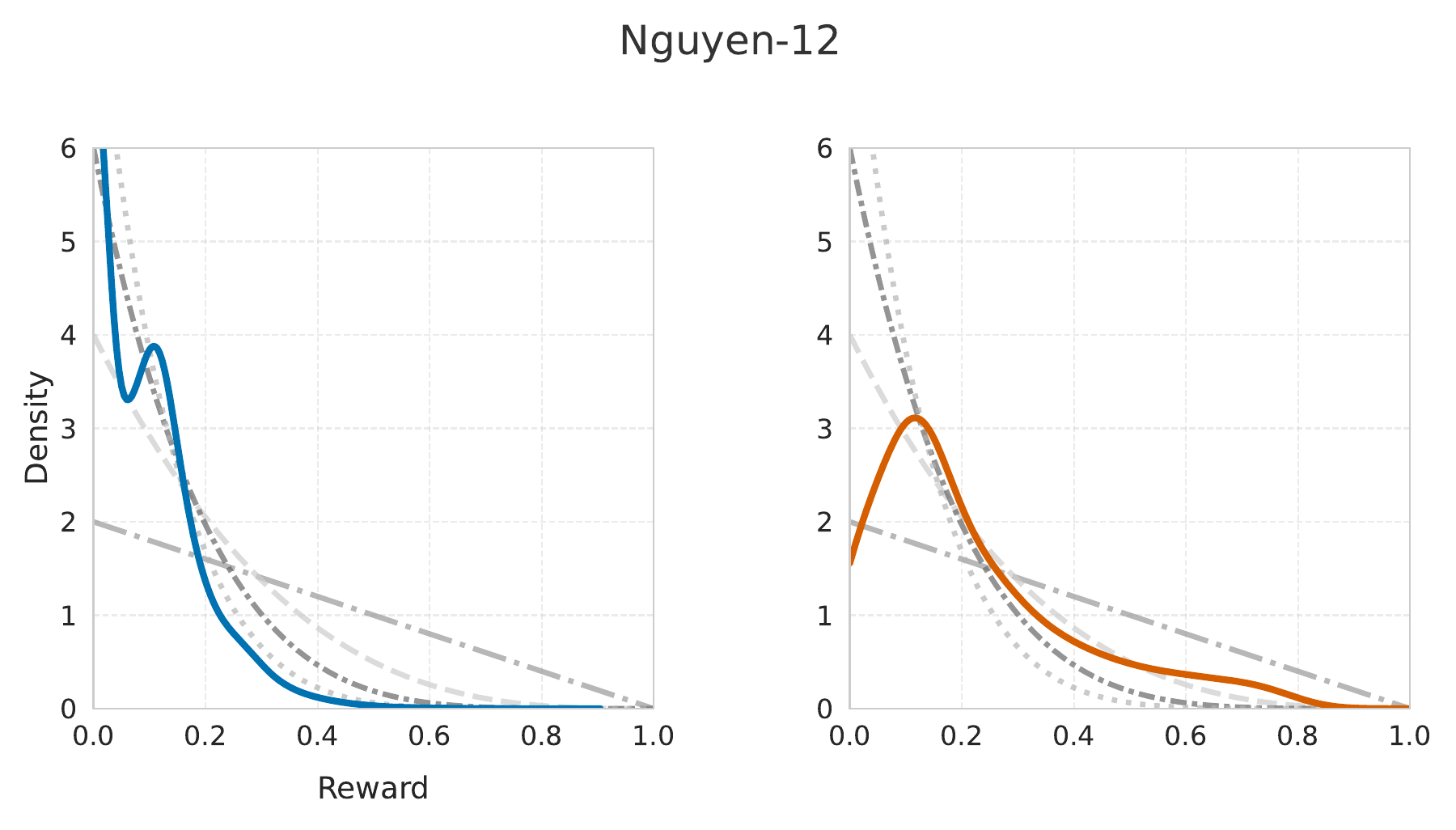}
    \end{subfigure}
    
    \caption{Empirical reward distributions on the Nguyen benchmarks with $\epsilon=1$ in \Cref{alg:mcts_statejumping} (fully random node selection) and a budget of 200,000 expression evaluations. Other settings follow \Cref{tab:hyperparameters}. Results from 100 runs are shown using Gaussian KDE (bandwidth 0.25). The blue and orange curves represent standard MCTS simulation and the state-jumping actions, respectively. Overlaid gray curves depict beta distributions defined on the interval [0,1], parameterized by tail decay rates with $a=2,4,6,8$.}

    \label{fig:nguyen-total}
\end{figure}

\section{Algorithm Details}
\label{sec:algorithm-details}

This section provides a detailed account of the core algorithmic components. As previously discussed, each node in the Monte Carlo Tree Search (MCTS) framework maintains an independent trajectory queue. This functionality is supported by two complementary mechanisms: \emph{Backward Propagation} (\Cref{alg:back-prop}), which propagates complete trajectories upward through ancestor nodes, and \emph{Forward Propagation} (\Cref{alg:forward-prop}), which disseminates partial trajectories downward along promising branches.

It is important to highlight that the selection strategy in \Cref{alg:mcts_statejumping} is parametrically flexible—it can be configured to reduce to either an $\epsilon$-greedy policy or a pure UCB-extreme strategy \Cref{eq:UCB-extreme}, depending on the parameter settings. Our hybrid approach demonstrates superior performance in symbolic regression tasks, as evidenced by the results in \Cref{sec:experiment}.

In contrast to conventional genetic programming, which relies on dedicated selection mechanisms (e.g., tournament selection or roulette-wheel selection) to maintain population diversity, our framework adopts a streamlined Top-$N$ selection operator to fulfill a similar role with competitive effectiveness. Moreover, the integration of MCTS with bidirectional propagation enables the systematic retention of high-quality expressions across different exploration paths. This design not only preserves high-reward solutions but also enhances the ability to escape local optima through informed traversal of the state space.

\begin{algorithm}[t]
\caption{Backward Propagation}
\label{alg:back-prop}
\KwIn{Current node $v_0$, trajectory $\tau = (a_1,\dots,a_m)$}
\KwOut{Updated trajectory queues $\{\mathcal{Q}_v\}$}

$r \gets \textsc{ComputeReward}(\tau)$

$v \gets v_0$ \hfill $\triangleright$ Start from current node\\

\textbf{while} $v \neq \mathrm{null}$ \textbf{do} \hfill $\triangleright$ Traverse ancestors\\
\quad \textbf{if} \textsc{Enqueue}($\mathcal{Q}_v,\; \tau,\; r$) = \textbf{False}: \textbf{break} \hfill $\triangleright$ Abort if enqueue fails\\
\quad $\tau \gets [a_v] \,\Vert\, \tau$ \hfill $\triangleright$ Prepend action to trajectory\\
\quad $v \gets v.\mathrm{parent}$ \hfill $\triangleright$ Move to parent node\\
\textbf{end while}
\end{algorithm}

\begin{algorithm}[t]
\caption{Forward Propagation}
\label{alg:forward-prop}
\KwIn{Current node $v_0$, trajectory $\tau = (a_1,\dots,a_m)$}
\KwOut{Updated trajectory queues $\{\mathcal{Q}_v\}$}

$r \gets \textsc{ComputeReward}(\tau)$

$v \gets v_0$ \hfill $\triangleright$ Start from current node\\
\textbf{for} $i \gets 1$ \textbf{to} $m$ \textbf{do} \hfill $\triangleright$ Follow trajectory\\
\quad $v_{\mathrm{next}} \gets \{u \in \mathcal{C}(v) \mid a_u = a_i\}$ \hfill $\triangleright$ Find child with matching action\\
\quad \textbf{if} $v_{\mathrm{next}} = \emptyset$: \textbf{break} \hfill $\triangleright$ Stop if path deviates\\
\quad $v \gets v_{\mathrm{next}}$ \hfill $\triangleright$ Move to next node\\
\quad \textsc{Enqueue}($\mathcal{Q}_v,\; \tau_{i+1:m},\; r$) \hfill $\triangleright$ Store remaining suffix\\
\textbf{end for}
\end{algorithm}

\section{Experimental Details}
\label{sec:experimental-details}
All experiments were conducted on machines delivering 10.6 TFLOPS of FP32 compute performance and 256GB RAM. 

\subsection{Algorithm Parameters}
The hyperparameter configurations used in the comparative study are summarized in \Cref{tab:hyperparameters}. Note that while the values of $p_s$, $\epsilon$, and the maximum expression evaluation budget vary in \Cref{sec:parameter_nguyen}, all other settings and experimental conditions remain consistent.

Crossover is implemented using single-point crossover, where two expression trees exchange subtrees at randomly selected nodes, with each node chosen with equal probability. Mutation operations include uniform mutation, node replacement, subtree insertion, and subtree shrinkage, each applied with equal probability.

\subsection{Metrics and Procedures}

\textbf{Basic Benchmarks.} For the ground-truth benchmarks, we adopt the recovery rate metric as defined in \cite{petersen2019deep}. The recovery rate is the proportion of trials in which the original expression is successfully rediscovered, evaluated over 100 trials with fixed random seeds. Each trial is subject to a maximum of 2 million expression evaluations.

To ensure fair comparison, we adopt the same action space constraints as in \cite{petersen2019deep}, \cite{mundhenk2021symbolic}, and \cite{sun2022symbolic}:
\begin{itemize}
    \item A trigonometric function may not have another trigonometric function as a descendant.
    \item An \texttt{exp} node must not be followed by a \texttt{log} node, and vice versa.
\end{itemize}
Additional structural constraints are listed in \Cref{tab:hyperparameters}. Any expression generated through crossover or mutation that violates these constraints is discarded and does not contribute to the evaluation budget.

\textbf{SRBench Black-box Benchmarks.} For black-box benchmarks, we follow the evaluation protocol of \cite{la2021contemporary}. Each dataset is split into training and testing subsets (75\%/25\%) using a fixed random seed. We perform 10 independent runs per dataset, each constrained to a maximum of 500,000 evaluations or 48 hours, whichever occurs first. Performance is reported as the median test \(R^2\) score across the 10 runs:
\[
R^2 \;=\; 1 \;-\; \frac{\sum_{i=1}^{n}(y_i - \hat{y}_i)^2}{\sum_{i=1}^{n}(y_i - \bar{y})^2}, 
\quad
\text{Complexity} \;=\; \bigl|\mathcal{T}\bigl(f(\cdot)\bigr)\bigr|,
\]
where \(\bar{y}\) denotes the mean of the true output values. The \texttt{Complexity} term represents the number of nodes in the simplified expression tree, as computed using \texttt{Sympy} \cite{meurer2017sympy}.

\begin{table}
\centering
\vspace{1em} 
\caption{Hyperparameters of \Cref{alg:mcts_statejumping} used in the comparative study. Unless otherwise specified, all experimental settings follow the configuration listed in this table.}
\vspace{1em} 
\label{tab:hyperparameters}
\begin{tabular}{lccc}
\toprule
\textbf{Hyperparameter Name} & \textbf{Symbol} & \textbf{Value} & \textbf{Comment} \\
\midrule
\multicolumn{4}{l}{\textbf{Expression Parameters}} \\
\midrule
Maximum expression depth     & -   & 6         & - \\
Maximum expression constants & -   & 10        &No limit for SRBench \\
Available Symbol Set         & -   & \shortstack{$\{+, -, *, /, \sin, \cos,$ \\ $\exp, \log, \text{constant}, \text{variable}\}$} & \shortstack{Omit \text{constant} for\\ Nguyen \& Livermore} \\
\midrule
\multicolumn{4}{l}{\textbf{Algorithm Parameters}} \\
\midrule
UCB-extreme parameter 1      & $c$ & 1         & - \\
UCB-extreme parameter 2      & $\gamma$ & 0.5  & - \\
Size of queue                & $N$ & 500       & - \\
State-jumping rate           & $g_s$ & 0.2     & - \\
Mutation rate                & $g_m$ & 0.1     & - \\
Random explore rate          & $\epsilon$ & 0.2 & - \\
\bottomrule
\end{tabular}
\end{table}

\subsection{Short Descriptions of Baselines for Basic Benchmarks}
\label{ssec:baselines}

Below is a concise overview of the baseline algorithms used in our Basic Benchmarks:

\begin{itemize}
  \item \textbf{DSR}~\cite{petersen2019deep}: Deep Symbolic Regression (DSR) employs a recurrent neural network (RNN) to sample candidate expressions and a risk-seeking policy gradient algorithm to iteratively update network parameters, thereby steering the sampling distribution toward high-reward symbolic forms.

  \item \textbf{GEGL}~\cite{ahn2020guiding}: Genetic Expert-Guided Learning (GEGL) generates $M$ candidate expressions via RNN, ranks them by reward, and selects the Top-$N$ candidates. Genetic programming operators are applied to these elites to produce offspring, which are subsequently re-ranked to retain the Top-$N$ evolved variants. The RNN is then updated through imitation learning on the combined set of original and evolved Top-$N$ expressions.

  \item \textbf{NGGP}~\cite{mundhenk2021symbolic}: Neural-Guided Genetic Programming (NGGP) enhances DSR by seeding the genetic programming population with RNN-sampled expressions. It subsequently fine-tunes the RNN via risk-seeking policy gradient updates based on the best solutions discovered during the GP run, establishing a bidirectional feedback loop between neural sampling and evolutionary optimization.

  \item \textbf{PySR}~\cite{cranmer2023interpretable}: PySR is an open-source, high-performance genetic programming library featuring multi-population evolutionary strategies. Implemented in Python and Julia, it efficiently discovers compact symbolic models through parallelized evolutionary search.
\end{itemize}

In our experimental configuration, PySR's core parameters were set as follows: $\texttt{maxdepth}=6$, $\texttt{populations}=20$, $\texttt{niterations}=100$, and $\texttt{population\_size}=1000$. Notably, we deliberately abstained from constraining PySR's action space, as \cite{petersen2019deep} substantiates that such limitations adversely affect genetic programming efficacy. All other baseline algorithms strictly adhered to the parameterizations detailed in \cite{mundhenk2021symbolic}. The symbol set configuration matches the specifications documented in \Cref{tab:hyperparameters}.

\subsection{Detailed Experimental Results}
\label{ssec:detailed-results}

\Cref{tab:detailed_results} reports the recovery rates obtained on each equation of the Nguyen and Livermore benchmarks. \Cref{tab:time} presents a comparison of single-core runtimes on the Nguyen benchmark between our algorithm and NGGP. The reported runtimes represent the average over 100 independent runs for each equation, ensuring statistical reliability. These results indicate that our algorithm achieves recovery rates comparable to those of NGGP while demonstrating substantially faster runtimes and requiring fewer equation evaluations to recover each target expression.

\begin{table}
  \centering
  \vspace{1em} 
  \caption{Recovery rate comparison (\%) across Nguyen and Livermore benchmarks.}
  \label{tab:detailed_results}
  \begin{subtable}[t]{0.9\textwidth}
    \centering
    \caption{Nguyen benchmark}
    \label{tab:combined_nguyen}
    \begin{tabular}{lccccc}
      \toprule
      & Ours & DSR & GEGL & NGGP &PySR \\
      \midrule
      Nguyen-1  & \textbf{100} & \textbf{100} & \textbf{100} & \textbf{100} & \textbf{100} \\
      Nguyen-2  & \textbf{100} & \textbf{100} & \textbf{100} & \textbf{100} & 98 \\
      Nguyen-3  & \textbf{100} & \textbf{100} & \textbf{100} & \textbf{100} & 70 \\
      Nguyen-4  & 97            & \textbf{100} & \textbf{100} & \textbf{100} & 2\\
      Nguyen-5  & \textbf{100} & 72 & 92            & \textbf{100} & 89\\
      Nguyen-6  & \textbf{100} & \textbf{100} & \textbf{100} & \textbf{100} & \textbf{100}\\
      Nguyen-7  & \textbf{100} & 35          & 48            & 96       &35     \\
      Nguyen-8  & \textbf{100} & 96 & \textbf{100} & \textbf{100} & 99\\
      Nguyen-9  & \textbf{100} & \textbf{100} & \textbf{100} & \textbf{100} & \textbf{100}\\
      Nguyen-10 & \textbf{100} & \textbf{100} & 92            & \textbf{100} & \textbf{100}\\
      Nguyen-11 & \textbf{100} & \textbf{100} & \textbf{100} & \textbf{100} & \textbf{100}\\
      Nguyen-12*& \textbf{22}  & 0             & 0             & 12            & 0\\
      \midrule
      Average & \textbf{93.25} & 83.58 & 86.00 & 92.33 &74.41\\
      \bottomrule
    \end{tabular}
  \end{subtable}

  \vspace{2ex} 

  \begin{subtable}[t]{0.9\textwidth}
    \centering
    \caption{Livermore benchmark}
    \label{tab:combined_livermore}
    \begin{tabular}{lccccc}
      \toprule
      & Ours & DSR & GEGL & NGGP &PySR\\
      \midrule
      Livermore-1  & \textbf{100} & 3 & \textbf{100} & \textbf{100} & \textbf{100}\\
      Livermore-2  & 95            & 87 & 44            & \textbf{100} &32\\
      Livermore-3  & \textbf{100}      & 66            & \textbf{100} & \textbf{100} &97\\
      Livermore-4  & \textbf{100} & 76 & \textbf{100} & \textbf{100} & 90\\
      Livermore-5  & \textbf{78}  & 0             & 0             & 4             &30 \\
      Livermore-6  & 19            & 97           & 64            & \textbf{88}  & 0\\
      Livermore-7  & \textbf{15}   & 0             & 0             & 0             & 0\\
      Livermore-8  & \textbf{22}  & 0             & 0             & 0             &0 \\
      Livermore-9  & 2             & 0             & 12            & \textbf{24}  & 0\\
      Livermore-10 & \textbf{35}  & 0             & 0             & 24            & 26 \\
      Livermore-11 & \textbf{100} & 17 & 92            & \textbf{100} & 92\\
      Livermore-12 & 91            & 61 & \textbf{100} & \textbf{100} & 18\\
      Livermore-13 & \textbf{100}           & 55            & 84            & \textbf{100} & \textbf{100} \\
      Livermore-14 & \textbf{100}            & 0            & \textbf{100} & \textbf{100} & 15\\
      Livermore-15 & \textbf{100}            & 0            & 96            & \textbf{100} & 74\\
      Livermore-16 & 72            & 4  & 12            & 92  & \textbf{93}\\
      Livermore-17 & 69  & 0            & 4             & 68            & \textbf{72} \\
      Livermore-18 & 29            & 0            & 0             & \textbf{56}  & 3 \\
      Livermore-19 & \textbf{100} & \textbf{100} & \textbf{100} & \textbf{100} & 88 \\
      Livermore-20 & \textbf{100} & 98 & \textbf{100} & \textbf{100} & 76\\
      Livermore-21 & 41            & 2            & \textbf{64}  & 24            & 0\\
      Livermore-22 & \textbf{100}  & 3            & 68            & 84            & 9\\
      \midrule
      Average & \textbf{71.41} & 30.41 & 56.36 & 71.09 & 46.14\\
      \bottomrule
    \end{tabular}
  \end{subtable}

\end{table}

\begin{table}
\centering
\vspace{1em} 
\caption{Comparison of average single-core runtimes (in seconds) over 100 independent runs between our algorithm and NGGP on the Nguyen benchmark.}
\vspace{1em} 
\label{tab:time}
\begin{tabular}{lcc}
\toprule
& Ours & NGGP\\
\midrule
Nguyen-1    & \textbf{6.63}    & 27.05\\
Nguyen-2    & \textbf{29.10}   & 59.79\\
Nguyen-3    & \textbf{105.84}  & 151.06\\
Nguyen-4    & \textbf{254.03}  & 268.88\\
Nguyen-5    & \textbf{93.00}   & 501.65\\
Nguyen-6    & \textbf{16.46}   & 43.96\\
Nguyen-7    & \textbf{49.68}   & 752.32\\
Nguyen-8    & \textbf{99.41}   & 123.21\\
Nguyen-9    & \textbf{9.82}    & 31.17\\
Nguyen-10   & \textbf{71.06}   & 103.72\\
Nguyen-11   & \textbf{2.99}    & 66.50\\
Nguyen-12*  & 1156.70 & \textbf{1057.11}\\
\midrule
Average     & \textbf{157.90}  & 265.54\\
\bottomrule
\end{tabular}
\end{table}

\subsection{Specifications of Basic Benchmarks}
\label{ssec:benchmarks-specifications}

 All ground-truth benchmark configurations are listed in Table~\ref{tab:basic_benchmarks}. The notation \(U(a,b,c)\) denotes drawing \(c\) independent uniform samples from the interval \([a,b]\) for each input variable.

\begin{table}
\centering
\vspace{1em} 
\caption{Specifications of Basic Benchmarks. It should be noted that configurations marked with an asterisk (*) indicate the use of the input domain sampled from the interval [0, 10], consistent with the experimental framework in \cite{mundhenk2021symbolic}.}
\vspace{1em} 
\label{tab:basic_benchmarks}
\begin{tabular}{lll}
\toprule
\textbf{Name} & \textbf{Expression} & \textbf{Dataset} \\
\midrule
Nguyen-1  & \( x^3 + x^2 + x \)                          & \( U(-1, 1, 20) \) \\
Nguyen-2  & \( x^4 + x^3 + x^2 + x \)                    & \( U(-1, 1, 20) \) \\
Nguyen-3  & \( x^5 + x^4 + x^3 + x^2 + x \)              & \( U(-1, 1, 20) \) \\
Nguyen-4  & \( x^6 + x^5 + x^4 + x^3 + x^2 + x \)        & \( U(-1, 1, 20) \) \\
Nguyen-5  & \( \sin(x^2)\cos(x) - 1 \)                   & \( U(-1, 1, 20) \) \\
Nguyen-6  & \( \sin(x) + \sin(x + x^2) \)                & \( U(-1, 1, 20) \) \\
Nguyen-7  & \( \log(x + 1) + \log(x^2 + 1) \)            & \( U(0, 2, 20) \)  \\
Nguyen-8  & \( \sqrt{x} \)                                & \( U(0, 4, 20) \)  \\
Nguyen-9  & \( \sin(x) + \sin(y^2) \)                    & \( U(0, 1, 20) \)  \\
Nguyen-10 & \( 2\sin(x)\cos(y) \)                         & \( U(0, 1, 20) \)  \\
Nguyen-11 & \( x^y \)                                     & \( U(0, 1, 20) \)  \\
Nguyen-12 & \( x^4 - x^3 + \frac{1}{2}y^2 - y \)         & \( U(0, 1, 20) \)  \\
Nguyen-12* & \( x^4 - x^3 + \frac{1}{2}y^2 - y \)         & \( U(0, 10, 20) \)  \\
\midrule
Livermore-1    & \(\frac{1}{3} + x + \sin(x^2)\)    & \(U(-10, 10, 1000)\) \\
Livermore-2    & \(\sin(x^2) \cos(x) - 2\)           & \(U(-1, 1, 20)\) \\
Livermore-3    & \(\sin(x^3) \cos(x^2) - 1\)           & \(U(-1, 1, 20)\) \\
Livermore-4    & \(\log(x + 1) + \log(x^2 + 1) + \log(x)\) & \(U(0, 2, 20)\) \\
Livermore-5    & \(x^4 - x^3 + x^2 - y\)              & \(U(0, 1, 20)\) \\
Livermore-6    & \(4x^4 + 3x^3 + 2x^2 + x\)           & \(U(-1, 1, 20)\) \\
Livermore-7    & \(\sinh(x)\)                        & \(U(-1, 1, 20)\) \\
Livermore-8    & \(\cosh(x)\)                        & \(U(-1, 1, 20)\) \\
Livermore-9    & \(x^9 + x^8 + x^7 + x^6 + x^5 + x^4 + x^3 + x^2 + x\) & \(U(-1, 1, 20)\) \\
Livermore-10   & \(6\sin(x)\cos(y)\)                 & \(U(0, 1, 20)\) \\
Livermore-11   & \(\frac{x^2 y^2}{x+y}\)              & \(U(-1, 1, 20)\) \\
Livermore-12   & \(\frac{x^5}{y^3}\)                  & \(U(-1, 1, 20)\) \\
Livermore-13   & \(x^{\frac{1}{3}}\)                 & \(U(0, 4, 20)\) \\
Livermore-14   & \(x^3 + x^2 + x + \sin(x) + \sin(x^2)\) & \(U(-1, 1, 20)\) \\
Livermore-15   & \(x^{\frac{1}{5}}\)                 & \(U(0, 4, 20)\) \\
Livermore-16   & \(x^{\frac{2}{5}}\)                 & \(U(0, 4, 20)\) \\
Livermore-17   & \(4\sin(x)\cos(y)\)                 & \(U(0, 1, 20)\) \\
Livermore-18   & \(\sin(x^2)\cos(x) - 5\)             & \(U(-1, 1, 20)\) \\
Livermore-19   & \(x^5 + x^4 + x^2 + x\)              & \(U(-1, 1, 20)\) \\
Livermore-20   & \(\exp(-x^2)\)                      & \(U(-1, 1, 20)\) \\
Livermore-21   & \(x^8 + x^7 + x^6 + x^5 + x^4 + x^3 + x^2 + x\) & \(U(-1, 1, 20)\) \\
Livermore-22   & \(\exp(-0.5x^2)\)                   & \(U(-1, 1, 20)\) \\
\midrule
Nguyen-1\textsuperscript{c}  & \( 3.39x^3 + 2.12x^2 + 1.78x \)              & \( U(-1, 1, 20) \) \\
Nguyen-5\textsuperscript{c}  & \( \sin(x^2)\cos(x) - 0.75 \)                  & \( U(-1, 1, 20) \) \\
Nguyen-7\textsuperscript{c}  & \( \log(x + 1.4) + \log(x^2 + 1.3) \)          & \( U(0, 2, 20) \)  \\
Nguyen-8\textsuperscript{c}  & \( \sqrt{1.23x} \)                            & \( U(0, 4, 20) \)  \\
Nguyen-10\textsuperscript{c} & \( \sin(1.5x)\cos(0.5y) \)                     & \( U(0, 1, 20) \)  \\
\midrule
Jin-1         & \(2.5x^4 - 1.3x^3 + 0.5y^2 -1.7y\)    & \( U(-3, 3, 100) \) \\
Jin-2         & \(8.0x^2 + 8.0y^3 - 15.0\)             & \( U(-3, 3, 100) \) \\
Jin-3         & \(0.2x^3 + 0.5y^3 - 1.2y - 0.5x\)      & \( U(-3, 3, 100) \) \\
Jin-4         & \(1.5\exp(x) + 5.0\cos(y)\)             & \( U(-3, 3, 100) \) \\
Jin-5         & \(6.0\sin(x)\cos(y)\)                  & \( U(-3, 3, 100) \) \\
Jin-6         & \(1.35xy + 5.5\sin((x - 1.0)(y - 1.0))\) & \( U(-3, 3, 100) \) \\
\bottomrule
\end{tabular}
\end{table}

\end{document}